\begin{document}
\begin{frontmatter}
\title{Sinkhorn EM: An Expectation-Maximization algorithm based on entropic optimal transport}

\runtitle{Sinkhorn EM}

\begin{aug}
\normalsize
  
  Gonzalo Mena \textsuperscript{1}, Amin Nejatbakhsh \textsuperscript{2,3,4},
  Erdem Varol \textsuperscript{2,3,4}, Jonathan Niles-Weed \textsuperscript{5,6} \par \bigskip

  \textsuperscript{1}Department of Statistics and Data Science Initiative, Harvard University.
  \textsuperscript{2}Department of Neuroscience and Statistics, Columbia University.\par
  \textsuperscript{3}Grossman Center for the Statistics of Mind, Columbia University. \par
  \textsuperscript{4}Zuckerman Institute, Center for Theoretical Neuroscience, Columbia University. \par
  \textsuperscript{5} Courant Institute of Mathematical Sciences and the Center for Data Science, New York University.\par
  \textsuperscript{6} Institute for Advanced Study, Princeton.
  


\address{{Gonzalo Mena}\\
{Department of Statistics} \\
{Harvard University}\\
{1 Oxford St \#7,}\\
{Cambridge, MA 02138-4307, USA}\\
{gomena@fas.harvard.edu}
}

\address{{Amin Nejatbakhsh}\\
{Jerome L. Greene Science Center} \\
{Columbia University}\\
{3227 Broadway}\\
{New York, NY 10027, USA}
}

\address{{Erdem Varol}\\
{Jerome L. Greene Science Center} \\
{Columbia University}\\
{3227 Broadway}\\
{New York, NY 10027, USA}
}

\address{{Jonathan Niles-Weed} \\ 
{Warren Weaver Hall \#1123} \\ 
{Center for Data Science \#603}\\
{New York University}\\
{New York, NY, USA, 10016}
}

\runauthor{Mena, Nejatbakhsh, Varol, Niles-Weed}
\end{aug}

\begin{abstract}
We study Sinkhorn EM (sEM), a variant of the expectation-maximization (EM) algorithm for mixtures based on entropic optimal transport.
sEM differs from the classic EM algorithm in the way responsibilities are computed during the expectation step: rather than assign data points to clusters independently, sEM uses optimal transport to compute responsibilities by incorporating prior information about mixing weights.
Like EM, sEM has a natural interpretation as a coordinate ascent procedure, which iteratively constructs and optimizes a lower bound on the log-likelihood.
However, we show theoretically and empirically that sEM has better behavior than EM: it possesses better global convergence guarantees and is less prone to getting stuck in bad local optima.
We complement these findings with experiments on simulated data as well as in an inference task involving \textit{C.~elegans} neurons and show that sEM learns cell labels significantly better than other approaches.
\end{abstract}

\begin{keyword}[class=AMS]
\kwd[]{Statistics}
\end{keyword}
\begin{keyword}[class=KWD]
EM Algorithm, Mixture of Gaussians, Optimal Transport, Entropic Regularization. 
\end{keyword}

\end{frontmatter}

\maketitle

\tableofcontents

\section{Introduction}
The expectation-maximization (EM) algorithm \citep{Dempster1977} is a fundamental method for maximum-likelihood inference in latent variable models.
Though this maximization problem is generally non-concave, the EM algorithm is nevertheless a popular tool which is often effective in practice.
A great deal of recent work has therefore focused on finding provable guarantees on the EM algorithm and on developing modifications of this algorithm which perform better in practice.

In this work, we develop a new variant of EM, which we call \emph{Sinkhorn EM} (sEM), which has significant theoretical and practical benefits when learning mixture models when prior information about the mixture weights is known.
Recent theoretical findings~\citep{Xu2018} indicate that incorporating prior information of this kind into the standard EM algorithm leads to poor convergence.
\Citet{Xu2018} therefore suggest ignoring information about the cluster weights altogether, a procedure they call ``overparameterized EM.''
While overparameterized EM gets stuck less often than vanilla EM, our experiments show that it converges significantly more slowly.
By contrast, Sinkhorn EM offers a practical and theoretically justified ``best of both worlds'': it enjoys better theoretical guarantees than vanilla EM, seamlessly incorporates prior knowledge, and has better performance on both synthetic and real data than both vanilla and overparameterized EM.

We define Sinkhorn EM by replacing the log-likelihood by an objective function based on entropic optimal transport (OT).
This new objective has the same global optimum as the negative log likelihood but has more curvature around the optimum, which leads to faster convergence in practice.
Unlike the standard EM algorithm, which obtains a lower bound for the log-likelihood by computing the posterior cluster assignment probabilities for each observation independently, Sinkhorn EM computes cluster assignments which respect the known mixing weights of each components.
Computing these assignments can be done efficiently by \emph{Sinkhorn's algorithm}~\citep{Sinkhorn1967}, after which our procedure is named.

\paragraph{Our contributions}
\begin{itemize}
\item We define a new loss function for learning mixtures based on entropic OT, and show that it is consistent in the population limit and has better geometrical properties than the log likelihood (Section~\ref{sec:otloss}).
\item We give a simple EM-type procedure to optimize this loss based on Sinkhorn's algorithm, and prove that it is less prone to getting stuck in local optima than vanilla EM (Sections~\ref{sec:em} and \ref{sub:assy}).
\item We show on simulated (Section~\ref{sec:simulation}) and \textit{C.~elegans} data (Section~\ref{sec:c_elegans}) that sEM converges in fewer iterations and recovers cluster labels better than either vanilla or overparameterized EM.
\end{itemize}

Proofs our our theoretical results and additional experiments appear in the appendices.

 \subsection{Related work}
The EM algorithm has been the subject of a vast amount of study in the statistics and machine learning community~\citep[see][for a comprehensive introduction]{McLKri08}.
Our work fits into two lines of work on the subject.
First, following~\citet{Neal1998}, we understand the EM algorithm as one of a family of algorithms which maximizes a lower bound on the log-likelihood function via an alternating procedure.
This perspective links EM to variational Bayes methods~\citep{TziLikGal08,Blei2017} and provides the starting point for natural modifications of the EM algorithm with better computational properties~\citep{CapMou09}.
The sEM algorithm fits naturally into this framework, as a procedure which imposes the additional constraint during the alternating procedure that the known mixing weights are preserved.
Second, our work fits into a recent line of work~\citep{Xu2018,DasTzaZam17,BalWaiYu17}, which seeks to obtain rigorous convergence guarantees for EM-type algorithms in simplified settings.

We also add to the literature on connections between Gaussian mixtures and entropic OT.
\Citet{Rigollet2018} showed that maximum-likelihood estimation for Gaussian mixture models is equivalent to an optimization problem involving entropic OT, under a restrictive condition on the set of measures being considered.
Proposition~\ref{mini} shows that this condition can be removed in the population limit.

Several prior works have advocated for the use of OT-based procedures for clustering tasks.
A connection between entropic OT and EM was noted by \citet{Papadakis2017}; however, they did not propose an algorithm and did not consider the optimization questions we focus on here.
\Citet{Genevay2019} suggested using entropic OT to develop differentiable clustering procedures suitable for use with neural networks.
Our approach differs from theirs in that we focus on an elementary alternating minimization algorithm rather than on deep learning.

In recent prior work, a subset of the authors of this work used sEM to perform a joint segmentation and labeling task on \emph{C.~elegans} neural data, but without theoretical support~\citep{preprint}.
In this work, we formalize and justify the sEM proposal.
\subsection{Entropic optimal transport}
\label{sub:background}
In this section, we briefly review the necessary background on entropic OT.
Let $P$ and $Q$ be two Borel probability measures on Radon spaces $\cX$ and $\cY$, respectively. Given a cost function $c: \cX \times \cY \to \RR$, we define the entropy-regularized optimal transport cost between $P$ and $Q$ as
\begin{equation}\label{eq:sinkdef}
S(P,Q) := \inf_{\pi\in \Pi(P,Q)} \left[\int_{\cX \times \cY}  c(x,y)\,\dd\pi(x,y) +  H(\pi\lvert P\otimes Q)\right]\,,
 \end{equation}
where $\Pi(P,Q)$ is the set of all joint distributions with marginals equal to $P$ and $Q$, respectively, 
and $H(\alpha\lvert \beta)$ denotes the relative entropy between probability measures $\alpha$ and $\beta$ defined as $\int \log \frac{\dd\alpha}{\dd\beta}(x)\dd\alpha(x)$ if $\alpha \ll \beta$ and $+ \infty$ otherwise.
 
The optimization problem~\eqref{eq:sinkdef} is convex and can be solved by an efficient algorithm due to~\citet{Sinkhorn1967}, which was popularized in the machine learning community by~\citet{Cuturi2013}.
The fact that approximate solutions to~\eqref{eq:sinkdef} can be found in near linear time~\citep{Altschuler2017} forms the basis for the popularity of this approach in computational applications of OT~\citep{Peyre2019}.

\section{The entropic-OT loss as an alternative to the 
log-likelihood}\label{sec:otloss}
In this section, we define the basic loss function optimized by sEM and compare it to the negative log-likelihood.
We show that, in the population limit, these two losses have the same global minimum at the true parameter; however, the entropic OT loss always dominates the negative log-likelihood and has strictly more curvature at the minimum.
These findings support the claim that the entropic OT loss has better local convergence properties than the negative log-likelihood alone.

We recall the basic setting of mixture models.
We let $X$ and $Y$ be random variables taking values in the space $\cX \times \cY$, with joint distribution
\begin{equation}\label{defqxy} \dd Q_{X, Y}^\theta(x, y) = e^{-g^\theta(x, y)}\dd P_0(x) \dd \mu_Y(y)\,,
\end{equation}
where $P_0$ is a known prior distribution on $\cX$, $\mu$ represents a suitable base measure on $\cY$ (e.g., the Lebesgue measure when $\cY = \RR^d$), and $\{g^\theta\}_{\theta \in \Theta}$ is some family of functions which satisfies the requirement that $e^{-g^\theta(x, y)} \dd \mu(y)$ is a probability measure on $\cY$ for each $x \in \cX$ and $\theta \in \Theta$.
We write $q^{\theta}(x, y)$ for the density $\frac{\dd Q_{X, Y}^\theta}{\dd P_0 \otimes \dd \mu}$ and write $Q_{Y}^\theta$ and $q_Y^\theta$ for the marginal law and density of $Y$.

This definition encapsulates many common scenarios.
For example, for mixtures of Gaussian with known mixing weights, $\cX$ acts as an index space and $P_0$ represents the weighting of the components, while the parameter $\theta$ encapsulates the mean and covariance of each component.

We assume that we are in the well-specified case where $(X, Y) \sim Q^{\theta^*}$, and consider the problem of estimating $\theta^*$.
Standard maximum-likelihood estimation consists in minimizing the negative log-likelihood $\ell(\theta) := - \E_{Y \sim Q^{\theta^*}} \log q^\theta(Y)$ or its finite-sample counterpart, $\hat\ell(\theta) := - \frac{1}{n} \sum_{i=1}^n \log q^\theta(Y_i)$, where $Y_1, \dots, Y_n$ are i.i.d.~observations.

As an alternative to the log-likelihood, we define the following entropic-OT loss:
\begin{equation}\label{eq:otloss}
L(\theta):=S_\theta(P_0, Q_Y^{\theta^*}) := \inf_{\pi\in \Pi(P_0,Q^{\theta^*}_Y)} \left[\int  g^\theta(x, y)\,\dd\pi(x,y) +  H(\pi\lvert P_0 \otimes Q_Y^{\theta^*})\right]\,.
\end{equation}
Here, we define $S_\theta(P, Q)$ to be the value of the entropic OT problem~\eqref{eq:sinkdef} with $\theta$-dependent cost $c(x, y) = g^\theta(x, y)$.
This object likewise has a natural finite-sample analogue:
\begin{equation*}
\hat L(\theta) := S_\theta\Big(P_0, \frac 1n \sum_{i=1}^n \delta_{Y_i}\Big)\,,
\end{equation*}
where the true distribution of $Y$ has been replaced by the observed empirical measure.

In the following proposition we show that the entropic OT loss always dominates the negative log-likelihood, and moreover, that in the population limit these two functions are both minimized at the true parameter.

\begin{proposition} \label{mini}
Let $\mu_Y$ be any probability measure on $\cY$.
Then for all $\theta \in \Theta$,
\begin{equation}\label{eq:dominate}
S_\theta(P_0, \mu_Y) \geq - \E_{Y \sim \mu_Y} \log q^{\theta}(Y)\,.
\end{equation}
In particular, for all $\theta \in \Theta$,
\begin{equation}\label{eq:ell_dominate}
L(\theta) \geq \ell(\theta)  \quad \text{and} \quad \hat L(\theta) \geq \hat \ell(\theta)\,.
\end{equation}
Moreover, if $(X, Y)$ has distribution $Q_{X, Y}^{\theta^*}$, then $L(\theta^*) = \ell(\theta^*)$ and $L(\cdot)$ is minimized at $\theta^*$.
\end{proposition}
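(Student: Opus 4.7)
The plan is to prove the bound \eqref{eq:dominate} by a disintegration argument on couplings, then deduce \eqref{eq:ell_dominate} by specialization and obtain the minimality at $\theta^*$ by exhibiting a coupling that attains the bound.

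First, I would take an arbitrary $\pi \in \Pi(P_0, \mu_Y)$ and disintegrate it as $\dd\pi(x,y) = \dd\pi(x\lvert y)\,\dd\mu_Y(y)$, which uses only the $Y$-marginal constraint. The relative entropy then factors as
\begin{equation*}
H(\pi\lvert P_0 \otimes \mu_Y) = \int \dd\mu_Y(y) \int \log \tfrac{\dd\pi(\cdot\lvert y)}{\dd P_0}(x)\,\dd\pi(x\lvert y) = \int H(\pi(\cdot\lvert y)\lvert P_0)\,\dd\mu_Y(y),
\end{equation*}
so the entropic-OT objective in \eqref{eq:otloss} equals $\int \dd\mu_Y(y)\,F_y[\pi(\cdot\lvert y)]$ with the free-energy functional
\begin{equation*}
F_y[\nu] \defeq \int g^\theta(x,y)\,\dd\nu(x) + H(\nu\lvert P_0).
\end{equation*}
A standard Gibbs variational identity gives $\inf_\nu F_y[\nu] = -\log \int e^{-g^\theta(x,y)}\,\dd P_0(x) = -\log q_Y^\theta(y)$, attained at $\dd\nu^*_y(x) \propto e^{-g^\theta(x,y)}\,\dd P_0(x)$. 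Dropping the $X$-marginal constraint on $\pi$ can only lower the value of the OT problem, so
\begin{equation*}
S_\theta(P_0,\mu_Y) \geq \int \inf_\nu F_y[\nu]\,\dd\mu_Y(y) = -\E_{Y\sim \mu_Y}\log q_Y^\theta(Y),
\end{equation*}
which is exactly \eqref{eq:dominate}. Specializing $\mu_Y = Q_Y^{\theta^*}$ and $\mu_Y = \tfrac1n\sum_i \delta_{Y_i}$ yields \eqref{eq:ell_dominate}.

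Second, for the equality at $\theta = \theta^*$, I would check that the pointwise minimizers $\nu^*_y$ assemble into an admissible coupling. Set $\dd\pi^*(x,y) \defeq e^{-g^{\theta^*}(x,y)}\,\dd P_0(x)\,\dd\mu(y)$, i.e.\ $\pi^* = Q_{X,Y}^{\theta^*}$. Its $Y$-marginal is $Q_Y^{\theta^*}$ by definition, and its $X$-marginal is $P_0$ because the assumed normalization $\int e^{-g^{\theta^*}(x,y)}\,\dd\mu(y) = 1$ holds for every $x$. Thus $\pi^* \in \Pi(P_0, Q_Y^{\theta^*})$ and $\dd\pi^*(x\lvert y) = \dd\nu^*_y(x)$, so the lower bound in the previous step is attained, giving $L(\theta^*) = -\E_{Y\sim Q_Y^{\theta^*}}\log q_Y^{\theta^*}(Y) = \ell(\theta^*)$.

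Finally, that $\theta^*$ minimizes $L$ follows by chaining: for any $\theta$,
\begin{equation*}
L(\theta) \geq \ell(\theta) \geq \ell(\theta^*) = L(\theta^*),
\end{equation*}
where the first inequality is \eqref{eq:ell_dominate} and the second is Gibbs's inequality, since $\ell(\theta) - \ell(\theta^*) = H(Q_Y^{\theta^*}\lvert Q_Y^\theta) \geq 0$. There is no real obstacle here beyond ensuring the disintegration is valid, which is guaranteed by the Radon-space hypothesis on $\cX$ and $\cY$; the conceptual content of the result is simply that the $X$-marginal constraint in \eqref{eq:otloss} is what creates the gap $L - \ell$, and that this constraint happens to be automatically satisfied by the posterior-type coupling at the true parameter.
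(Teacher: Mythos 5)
Your proposal is correct and follows essentially the same route as the paper: disintegrate the coupling over $y$, recognize the inner problem as the (negated) Neal--Hinton free-energy functional whose unconstrained minimum is $-\log q_Y^\theta(y)$ attained at the posterior, drop the $X$-marginal constraint to get the inequality, and observe that at $\theta^*$ the posterior kernel is feasible so the bound is tight. The only cosmetic difference is that you invoke the Gibbs variational identity directly and spell out the final chain $L(\theta)\geq\ell(\theta)\geq\ell(\theta^*)=L(\theta^*)$, whereas the paper cites Neal and Hinton for the same facts.
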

To prove Proposition~\ref{mini}, we recall the definition of the $F$-functional due to~\citet{Neal1998}.
For a fixed $y \in \cY$, probability measure $\tilde P$ on $\cX$, and parameter $\theta \in \Theta$, we write
\begin{equation}\label{eq:f_def}
F_{y}(\tilde P, \theta) \defeq \E_{X\sim \tilde P}\left(\log \frac{\dd Q_{X, Y}^\theta}{\dd P_0 \otimes \dd \mu}(X, y)\right)-E_{X\sim \tilde P}\left(\log \frac{\dd \tilde P}{\dd P_0}(X)\right)\,.
\end{equation}
\Citet{Neal1998} show that $
\E_{Y \sim \mu_Y}  \log q^\theta(Y) = \max_{P} \E_{Y \sim \mu_Y} F_Y(P(\cdot | Y), \theta)$,
where the maximization is taken over all transition kernels.
In the proof of Proposition~\ref{mini}, we show that
\begin{equation*}
- S_\theta(P_0, \mu_Y) = \max_{P \in \bar{\Pi}(P_0, \mu_Y)} \E_{Y \sim \mu_Y} F_Y(P(\cdot | Y), \theta)\,,
\end{equation*}
where now the maximization is taken over the smaller set of kernels $P$ which satisfy $P_0 = \int P( \cdot | y) \dd \mu_Y(y)$.
The inequality~\eqref{eq:dominate} follows easily, and the remaining claims are simple corollaries.

Proposition~\ref{mini} suggests it might be sensible to use $L(\theta)$ as an alternative to the negative log-likelihood in inference tasks. Indeed, the fact that $L(\theta)$ always dominates the negative log-likelihood also suggests that at the optimum it has a higher curvature, favoring optimization. In the following proposition we show this is the case, at least around the global optimum.

\begin{proposition}\label{prop:curvature}
If $(X, Y) \sim Q_Y^{\theta^*}$, then, $\nabla^2 L(\theta^*) \succ \nabla^2 \ell(\theta^*)$.
\end{proposition}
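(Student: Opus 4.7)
The plan is to derive the weak inequality $\nabla^2 L(\theta^*) \succeq \nabla^2 \ell(\theta^*)$ directly from Proposition~\ref{mini} and then upgrade to strict inequality by computing both Hessians explicitly via the Sinkhorn dual of $L$. For the weak part, Proposition~\ref{mini} gives $L - \ell \geq 0$ with equality at $\theta^*$, so $\theta^*$ is a global minimum of the smooth function $L - \ell$, and the second-order necessary optimality condition immediately yields $\nabla^2(L-\ell)(\theta^*) \succeq 0$.

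For the strict inequality, I would exploit the Fenchel dual of entropic OT,
\begin{equation*}
L(\theta) = \sup_{u,v} J(u,v,\theta), \qquad J(u,v,\theta) := \int u\,\dd P_0 + \int v\,\dd Q_Y^{\theta^*} - \int e^{u(x)+v(y)-g^\theta(x,y)-1}\,\dd P_0(x)\,\dd Q_Y^{\theta^*}(y),
\end{equation*}
together with the analogous dualization of the relaxed one-marginal minimization identified in the proof of Proposition~\ref{mini}, which yields the same functional with the first potential pinned to zero: $\ell(\theta) = \sup_v J(0,v,\theta)$. A direct check shows that at $\theta = \theta^*$ both suprema are attained at the same pair $u^*\equiv 0$, $v^*(y) = 1 - \log q_Y^{\theta^*}(y)$.

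Applying the envelope theorem together with implicit differentiation to the first-order conditions for these two problems produces the block formulas
\begin{align*}
\nabla^2 \ell(\theta^*) &= J^*_{\theta\theta} - J^*_{\theta v}(J^*_{vv})^{-1}J^*_{v\theta},\\
\nabla^2 L(\theta^*) &= J^*_{\theta\theta} - \bigl(J^*_{\theta u},J^*_{\theta v}\bigr)\begin{pmatrix}J^*_{uu} & J^*_{uv}\\ J^*_{vu} & J^*_{vv}\end{pmatrix}^{-1}\begin{pmatrix}J^*_{u\theta}\\ J^*_{v\theta}\end{pmatrix},
\end{align*}
with the starred partials evaluated at $(0, v^*, \theta^*)$. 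A crucial simplification is the identity $J^*_{\theta u}\equiv 0$, obtained by differentiating the normalization $\int e^{-g^\theta(x,y)}\,\dd \mu(y)=1$ in $\theta$ to get $\int \nabla_\theta g^{\theta^*}(x,y)\,e^{-g^{\theta^*}(x,y)}\,\dd \mu(y) = 0$ for each $x$. Substituting $J^*_{\theta u}=0$ and invoking the Schur-complement formula collapses the Hessian gap to
\begin{equation*}
\nabla^2 L(\theta^*) - \nabla^2 \ell(\theta^*) = J^*_{\theta v}\bigl[(J^*_{vv})^{-1} - (J^*_{vv} - J^*_{vu}(J^*_{uu})^{-1}J^*_{uv})^{-1}\bigr]J^*_{v\theta}.
\end{equation*}
Because $J^*_{uu}\prec 0$ forces $J^*_{vu}(J^*_{uu})^{-1}J^*_{uv}\preceq 0$, the Schur complement $N:=J^*_{vv}-J^*_{vu}(J^*_{uu})^{-1}J^*_{uv}$ dominates $J^*_{vv}$ in the PSD ordering; since both are negative-definite, this order reverses under inversion, giving $(J^*_{vv})^{-1}\succeq N^{-1}$, and hence the bracketed operator is positive semi-definite. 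Strict positive-definiteness follows whenever $J^*_{uv}$ has full column rank and $J^*_{v\theta}$ has trivial left-kernel---standard identifiability-type conditions expressing that perturbing $\theta$ induces a first-order change in the posterior cluster assignment.

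The main technical obstacle is rigorously executing the envelope and implicit function arguments in the function-space setting where $u$ and $v$ are continuous potentials, and handling the one-dimensional gauge freedom $(u,v)\mapsto(u+c,v-c)$ inherent in the Sinkhorn dual. In the discrete-index case relevant to finite mixtures the calculation reduces to finite-dimensional matrix calculus after a simple gauge-fix, and the strict inequality becomes a clean consequence of standard identifiability.
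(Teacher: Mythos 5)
Your proposal is correct in outline and reaches the result by a route that is structurally parallel to, but genuinely different from, the paper's. The paper works with the \emph{semi-dual} of the entropic OT problem: it writes $L(\theta)=\max_{w\in\RR^{|\cX|}}L_2(\theta,w)$ with a single finite-dimensional potential $w$ indexed by the (finite) latent space, observes that $L_2(\theta,0)=\ell(\theta)$ and that $w_{\theta^*}=0$ in the population limit, and then a single envelope/implicit-differentiation step gives $\nabla^2L(\theta^*)=\nabla^2\ell(\theta^*)-L_{2,\theta w}(L_{2,ww})^{-1}L_{2,w\theta}$, so the whole proof reduces to showing $L_{2,ww}(\theta^*,0)\prec 0$ (done via a weighted-Laplacian argument after gauge-fixing one coordinate of $w$). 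Your full Fenchel dual with two potentials $(u,v)$ recovers the same correction term only after the extra identity $J^*_{\theta u}=0$ and the nested Schur-complement comparison between $\sup_{u,v}J$ and $\sup_vJ(0,\cdot,\cdot)$; this is all correct (your sign bookkeeping on the inversion of the PSD order for negative-definite matrices checks out), but it imports exactly the function-space and gauge difficulties you flag at the end, all of which the semi-dual avoids for free since only the finite-dimensional $w$ survives. Two further points of comparison: your opening observation --- that the weak inequality $\nabla^2L(\theta^*)\succeq\nabla^2\ell(\theta^*)$ follows immediately from Proposition~\ref{mini} because $L-\ell\ge 0$ with equality at $\theta^*$ --- is a clean shortcut not present in the paper; and on strictness, you are explicit that $\succ$ (rather than $\succeq$) requires an identifiability-type rank condition on the cross-derivative $J^*_{v\theta}$ (equivalently $L_{2,w\theta}$), whereas the paper stops after proving negative definiteness of $L_{2,ww}$, which by itself only yields the weak inequality; so on this point your write-up is if anything more careful than the paper's, though you should still verify the rank condition for the mixture models of interest rather than leaving it as a hypothesis.
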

A proof appears in the appendix.

\begin{figure}[t]

\includegraphics[width=1.0\textwidth]{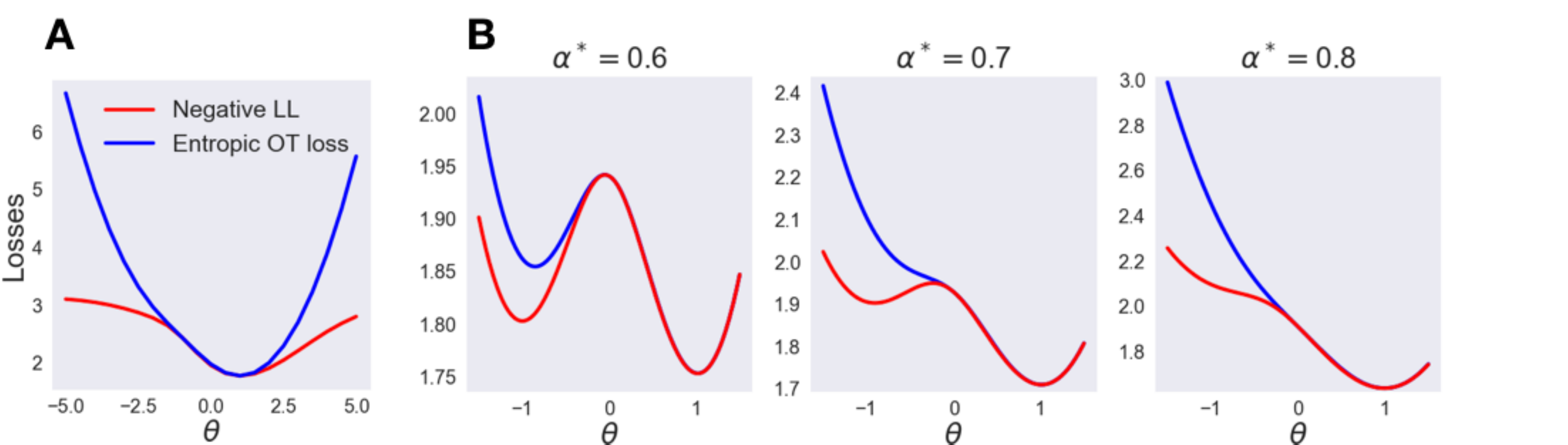}
\caption{Qualitative comparison between the log-likelihood and entropic OT loss. \textbf{A} Around the local optima in the model \eqref{eq:mixtutwo} (same as Fig. \ref{fig:example2}) the entropic OT loss (i) dominates the negative log-likelihood (Proposition \ref{mini}) and (ii) has more curvature at the minimum (Proposition \ref{prop:curvature}). \textbf{B} In the model \eqref{eq:mixtureg} (same as Fig. \ref{fig:example1}) Sinkhorn OT may have fewer bad local optima (Theorem \ref{teo:mixture}).}
\label{fig:global}
\end{figure}
\section{Sinkhorn EM: An EM-type algorithm based on entropic optimal transport}\label{sec:em}

The results of Section \ref{sec:otloss} suggest $L(\cdot)$ might be better suited than the log-likelihood to first-order methods, which, in the case of the log-likelihood include the EM algorithm \citep{Redner1984, Xu199}. In this section, we show that the entropic loss gives rise to a practical EM-type algorithm, which we call Sinkhorn-EM (sEM).

Our starting point is the observation that the EM algorithm can be understood as a maximimization-maximization procedure on measure and parameter spaces \citep{Neal1998,CsiszarT1984}. 
Recall the functional $F$ introduced in~\eqref{eq:f_def}.
\Citet{Neal1998} show that the standard EM algorithm can be written as follows.
\paragraph{Standard EM:}
\begin{itemize}
\item E-step: Let $P^{(t+1)} = \argmax_P \E_{Y \sim \mu_Y} F_Y(P(\cdot | Y), \theta^{(t)})$.
\item M-step: Let $\theta^{(t+1)} = \argmax_\theta \E_{Y \sim \mu_Y} F_Y(P^{(t+1)}(\cdot | Y), \theta)$.
\end{itemize}
With this variational perspective in mind we see the EM algorithm is simply the alternate maximization of $F$ over $P$ and $\theta$, whose (local) convergence is guaranteed by virtue of classical results \citep{CsiszarT1984,Gunawardana2005,Zangwill1969}.

The proof of Proposition~\ref{mini} shows that the entropic OT loss is obtained by restricting the optimization over $P$ to the set $\bar{\Pi}(P_0, \mu_Y)$ of kernels satisfying $P_0 = \int P(\cdot | Y) \dd \mu_Y(Y)$.
By incorporating this constraint into the alternating maximization procedure, we obtain sEM.
\paragraph{Sinkhorn EM:}
\begin{itemize}
\item E-step: Let $P^{(t+1)} = \argmax_{P \in \bar{\Pi}(P_0, \mu_Y)} \E_{Y \sim \mu_Y} F_Y(P(\cdot | Y), \theta^{(t)})$.
\item M-step: Let $\theta^{(t+1)} = \argmax_\theta \E_{Y \sim \mu_Y} F_Y(P^{(t+1)}(\cdot | Y), \theta)$.
\end{itemize}

The following theorem gives an implementation of this method, and shows that it always makes progress on $L$.
\begin{theorem}\label{teo:em}
The sEM method is equivalent to the following.
\begin{itemize}
\item E-step: Let $\pi^{(t+1)} =\argmin_{\pi\in\Pi(P_0,\mu_Y)} \int g^{\theta^{(t)}}(x, y) \dd \pi(x, y) + H(\pi|P_0 \otimes \mu_Y)$.
\item M-step: let $\theta^{(t+1)} = \argmin_\theta \int g^{\theta}(x, y) \dd \pi^{(t+1)}(x, y)$.\end{itemize}
Moreover, in the infinite-sample setting $\mu_Y = Q^{\theta^*}_Y$, the sequence $\{L(\theta^{t})\}$ is nonincreasing and $L(\theta^{t+1})<L(\theta^t)$ if $\theta^t$ is not a stationary point of $L$.
\end{theorem}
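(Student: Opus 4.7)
The plan is to prove both parts by directly unfolding the variational definition of the sEM steps. For the equivalence of the E-step, I would reinterpret any transition kernel $P(\cdot | y)$ satisfying $P_0 = \int P(\cdot | y)\,\dd \mu_Y(y)$ as a joint law $\pi \in \Pi(P_0, \mu_Y)$ via $\dd\pi(x, y) = \dd P(x|y)\,\dd\mu_Y(y)$. Substituting into the definition \eqref{eq:f_def} of $F_y$ and using $\log q^\theta = -g^\theta$, a short computation yields
\begin{equation*}
\E_{Y \sim \mu_Y} F_Y(P(\cdot | Y), \theta) = -\int g^\theta(x, y)\,\dd\pi(x, y) - H(\pi\lvert P_0 \otimes \mu_Y),
\end{equation*}
since $\frac{\dd\pi}{\dd(P_0 \otimes \mu_Y)}(x,y) = \frac{\dd P(\cdot | y)}{\dd P_0}(x)$. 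Maximizing the left-hand side over admissible $P$ is therefore the same as minimizing the entropic OT objective over $\pi \in \Pi(P_0, \mu_Y)$, which is the E-step of the theorem. In the M-step, the entropy term is independent of $\theta$ once $\pi^{(t+1)}$ is fixed, so maximizing the functional over $\theta$ reduces to minimizing $\int g^\theta\,\dd\pi^{(t+1)}$.

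For the monotonicity claim, I would chain three inequalities under the assumption $\mu_Y = Q_Y^{\theta^*}$, so that $L(\theta) = S_\theta(P_0, \mu_Y)$. By E-step optimality of $\pi^{(t+1)}$ at $\theta^{(t)}$,
\begin{equation*}
L(\theta^{(t)}) = \int g^{\theta^{(t)}}\,\dd\pi^{(t+1)} + H(\pi^{(t+1)}\lvert P_0 \otimes \mu_Y).
\end{equation*}
Since $\theta^{(t+1)}$ minimizes $\theta \mapsto \int g^\theta\,\dd\pi^{(t+1)}$, the right-hand side is at least $\int g^{\theta^{(t+1)}}\,\dd\pi^{(t+1)} + H(\pi^{(t+1)}\lvert P_0 \otimes \mu_Y)$, which in turn dominates $L(\theta^{(t+1)})$ because $\pi^{(t+1)} \in \Pi(P_0, \mu_Y)$ is feasible for the entropic OT problem defining $L(\theta^{(t+1)})$. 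Chaining the three steps yields $L(\theta^{(t)}) \geq L(\theta^{(t+1)})$.

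To obtain strict decrease when $\theta^{(t)}$ is not stationary, I would argue by contrapositive: if $L(\theta^{(t+1)}) = L(\theta^{(t)})$, each inequality above is tight, so in particular $\theta^{(t)}$ is also a minimizer of $\theta \mapsto \int g^\theta\,\dd\pi^{(t+1)}$, giving the first-order condition $\int \nabla_\theta g^{\theta^{(t)}}(x, y)\,\dd\pi^{(t+1)}(x, y) = 0$. An envelope-theorem argument (applicable because the entropic OT objective is strictly convex in $\pi$, so the minimizer $\pi^{(t+1)}$ is unique and the value function $L$ is differentiable with derivative obtained by freezing $\pi$) then yields
\begin{equation*}
\nabla L(\theta^{(t)}) = \int \nabla_\theta g^{\theta^{(t)}}(x, y)\,\dd\pi^{(t+1)}(x, y) = 0,
\end{equation*}
contradicting the assumption that $\theta^{(t)}$ is not stationary.

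The main subtleties I expect concern regularity rather than the core variational structure: justifying the differentiation under the integral and the envelope-theorem step requires mild hypotheses on $\{g^\theta\}_{\theta \in \Theta}$ (for instance, local Lipschitzness in $\theta$ with an integrable envelope), and the existence of the E-step and M-step minimizers requires either compactness of $\Theta$ or a coercivity condition on $g^\theta$. These conditions are naturally met in the mixture-model settings of interest, and the variational chain above is unaffected by such technicalities.
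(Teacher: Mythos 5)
Your proposal is correct and follows essentially the same route as the paper: the E-step equivalence via the identity $\E_{Y \sim \mu_Y} F_Y(P(\cdot|Y),\theta) = -\int g^\theta \,\dd\pi - H(\pi|P_0\otimes\mu_Y)$ (which the paper establishes in the proof of Proposition~\ref{mini}), the same three-inequality chain for monotonicity, and the same envelope-theorem identity $\nabla L(\theta) = \int \nabla_\theta g^{\theta}\,\dd\pi_\theta$ for strict decrease, which you state contrapositively while the paper states it directly. The only difference is cosmetic: you are somewhat more explicit about the regularity hypotheses underlying the envelope step, which the paper leaves implicit.
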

As with standard EM, the last fact guarantees that the sequence $\{L(\theta^t)\}$ converges, and, under a curvature condition on the model, that the sequence $\theta^*$ converges to a stationary point as well~\citep[see][]{Dempster1977}.

Sinkhorn EM only differs from the standard algorithm in the E-step.
The original E-step corresponds to the computation of $q^\theta(\cdot|y)$, which for mixtures is the matrix of \textit{responsibilities}, the (posterior) probability of each data point being assigned to a particular component.
It is possible to give a similar meaning to our modified E-step: indeed, it follows from the semi-dual formulation of entropic OT~\citep{Cuturi2018} that $\pi^{(t+1)}$ can be written as
\begin{equation}\label{eq:sinkpost}
d\pi^{(t+1)}(k, y)= \frac{\alpha_k(\theta^{(t)}) q^{\theta^{(t)}}_k(y)}{\sum_j \alpha_j(\theta^{(t)}) q^{\theta^{(t)}}_j(y)},  \quad\text{with}\quad \alpha_k(\theta^{(t)})=\frac{e^{w_k(\theta^{(t)})}P_0(X = k)}{\sum_j e^{w_j(\theta^{(t)})}P_0(X = j)}
    \end{equation}
    for some function $w: \Theta \to \RR^{|\cX|}$.
In words, $\pi$ is indeed a posterior distribution (or equivalently, a matrix of responsibilities) but with respect to a \textit{tilted} prior $\alpha(\theta^{(t)})$ over the weights, where the amount of tilting is controlled by $w$.

Each E-step of the standard EM algorithm takes time $O(n \cdot |\cX|)$, where $n$ is the number of data points.
The E-step of sEM can be implemented via Sinkhorn's algorithm~\citep{Peyre2019}, which converges in $\tilde O(n \cdot |\cX|)$ time~\citep{Altschuler2017}.
As we show in our experiments, this mild overhead in operation complexity is easily compensated for in practice by the fact that sEM typically requires many fewer E-steps to reach a good solution.

\begin{figure}[t]
\includegraphics[width=1.0\textwidth]{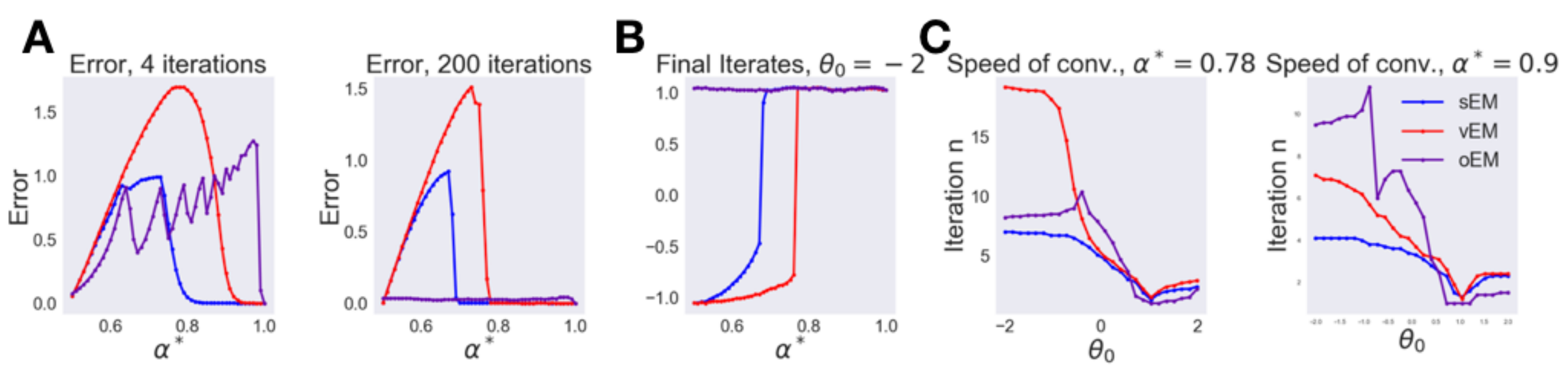}
\caption{Convergence of the example in \eqref{eq:mixtureg}. \textbf{A} The maximum error over all initializations after $n=4$ and $n=200$ iterations, as a function of $\alpha^*$. \textbf{B} Final iterates as a funcion of $\alpha^*$, when the starting point is $\theta_0=-2$. \textbf{C} The number of iterations required to be $\epsilon$-close to the true $\theta^*$, as a function of initialization. The weights $\alpha^*$ are chosen so that all three algorithms converge.}
\label{fig:example1}
\end{figure}
\section{Convergence analysis for mixtures of two Gaussians}
\label{sub:assy}
In this section, we rigorously establish convergence guarantees of sEM for a simple model.
Here, we consider the mixture
\begin{equation}
\label{eq:mixtureg} q^\theta(y) = \alpha^* \mathcal{N}(y;\theta, 1) +(1-\alpha^*)\mathcal{N}(y;-\theta,1)\,,
\end{equation}
where the unknown parameter $\theta$ takes the value $\theta^*>0$ and $\alpha^*$ is known and fixed. In the population limit, we compare the properties of the usual population negative log-likelihood $l_{\alpha^*}$ and the entropic OT loss $L_{\alpha^*}$, and the corresponding EM algorithms that derive from each. By symmetry, we assume that $\alpha^*\geq1/2$ without loss of generality.

This model was studied in great detail by~\citet{Xu2018}, who compared two different procedures: vanilla EM, which is the standard EM algorithm for the model~\eqref{eq:mixtureg}, and ``overparameterized EM,'' which is EM on the larger model where the mixing weight $\alpha$ is also allowed to vary.
\Citet{Xu2018} showed that, for any $\theta^* > 0$, there exist values of $\alpha^*$ for which vanilla EM converges to a spurious fixed point different from $\theta^*$ when initialized at $\theta^0 < - \theta^*$.
This arises because, for these values of $\alpha^*$, the true parameter $\theta^*$ is not the unique fixed point of the log-likelihood $\ell$.
Our first result shows that sEM is less prone to this bad behavior.
\begin{theorem}\label{teo:mixtureloss}
For any $\theta^* > 0$, the set of $\alpha^*$ for which $\theta^*$ is the unique stationary point of $L_{\alpha^*}$ is strictly larger than the one for $l_{\alpha^*}$.
\end{theorem}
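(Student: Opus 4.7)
The plan is to translate the stationary-point condition for $L_{\alpha^*}$ into the vanilla-EM fixed-point equation with a \emph{tilted} prior, and then exploit a monotonicity property of the tilting to show that the bad set for sEM is strictly smaller than that for vanilla EM. First, combining Theorem~\ref{teo:em} with the closed form~\eqref{eq:sinkpost} specialized to the two-component model~\eqref{eq:mixtureg}, I would show that $\theta\in\RR$ is a stationary point of $L_{\alpha^*}$ if and only if
\[
\theta = T\bigl(\theta;\tilde\alpha(\theta;\alpha^*)\bigr),\qquad
T(\theta;\alpha)\defeq\E_{Y\sim q^{\theta^*}}\bigl[(2r_\alpha(Y;\theta)-1)\,Y\bigr],
\]
where $r_\alpha(y;\theta)=\alpha\phi(y-\theta)/[\alpha\phi(y-\theta)+(1-\alpha)\phi(y+\theta)]$ is the standard responsibility and $\tilde\alpha(\theta;\alpha^*)\in(0,1)$ is the unique tilted weight fixed by the marginal constraint $\E_{Y\sim q^{\theta^*}}[r_{\tilde\alpha(\theta;\alpha^*)}(Y;\theta)]=\alpha^*$. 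Stationary points of $l_{\alpha^*}$ satisfy the same equation with $\tilde\alpha(\theta;\alpha^*)$ replaced by $\alpha^*$, and one checks $\tilde\alpha(\theta^*;\alpha^*)=\alpha^*$, so $\theta^*$ is always a stationary point of both losses.

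The key technical ingredient is a monotonicity statement: for every $\theta<0$ and every $\alpha^*\in(1/2,1)$, one has $\tilde\alpha(\theta;\alpha^*)>\alpha^*$. This is equivalent to $\E_{Y\sim q^{\theta^*}}[r_{\alpha^*}(Y;\theta)]<\alpha^*$, which I would establish by direct computation: when $\theta<0$ but the data $Y$ concentrate on the positive half-line (since $\alpha^*>1/2$), the untilted responsibility for the left component is small on a large fraction of the sample, so its mean falls below $\alpha^*$ and the tilted weight must be pushed above $\alpha^*$ to enforce the constraint.

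With this in hand, I combine with the characterization of~\citet{Xu2018}. By symmetry I restrict to $\alpha^*\geq1/2$; their analysis furnishes a threshold $\alpha_c=\alpha_c(\theta^*)>1/2$ such that $l_{\alpha^*}$ has a spurious stationary point $\theta_s<0$ precisely for $\alpha^*\in(1/2,\alpha_c]$, parameterized by a continuous function $\theta_s\mapsto\alpha_l(\theta_s)\in[1/2,\alpha_c]$. By the reformulation above, $\theta_s$ is a spurious stationary point of $L_{\alpha^*}$ iff $\tilde\alpha(\theta_s;\alpha^*)=\alpha_l(\theta_s)$, and the monotonicity then forces $\alpha^*<\alpha_l(\theta_s)\leq\alpha_c$. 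A continuity and uniform-control argument then shows that for $\alpha^*$ in a one-sided neighborhood $(\alpha_c-\epsilon,\alpha_c]$ the strict inequality $\alpha_l(\theta)\leq\alpha_c<\tilde\alpha(\theta;\alpha^*)$ persists for every candidate $\theta<0$, so no spurious stationary point of $L_{\alpha^*}$ exists. Hence $(\alpha_c-\epsilon,\alpha_c]$ is contained in the unique-minimizer set for $L_{\alpha^*}$ but not for $l_{\alpha^*}$, and the analogous statement for $\alpha^*<1/2$ follows by the $\alpha^*\leftrightarrow 1-\alpha^*$ symmetry.

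The hard part will be the monotonicity step together with the subsequent uniform control: showing $\tilde\alpha(\theta;\alpha^*)>\alpha^*$ over the whole half-line $\theta<0$ requires exploiting the Gaussian structure globally rather than via a local expansion around $\theta^*$, and promoting the strict containment from the single critical point $\alpha^*=\alpha_c$ to a full one-sided neighborhood requires joint continuity of $\tilde\alpha$ and $\alpha_l$ in $(\theta,\alpha^*)$ together with compactness of the relevant range of $\theta$ so as to preserve the inequality uniformly along the one-parameter family of potential spurious stationary points.
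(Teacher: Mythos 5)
Your core strategy on the negative half-line is essentially the paper's: you recast stationary points of $L_{\alpha^*}$ as fixed points of the vanilla-EM map $F(\cdot,\alpha)$ of \eqref{eq:def_f} evaluated at a tilted weight $\tilde\alpha(\theta;\alpha^*)$ (the paper's $\alpha(\theta)$ from \eqref{eq:optimalpha2}), prove $\tilde\alpha(\theta;\alpha^*)>\alpha^*$ for $\theta<0$ (the paper's Lemma~\ref{lemma:atheta}), use strict monotonicity of $F$ in $\alpha$ (established inside Proposition~\ref{prop:deri}), and invoke the threshold characterization of \citet{Xu2018} to get strictness. Those ingredients are all correct and aligned with the paper, although the paper packages the comparison as $L'_{\alpha^*}(\theta)>\ell'_{\alpha^*}(\theta)$ for $\theta<0$ followed by an intermediate-value argument using $\ell'_{\alpha^*}(0)>0$, rather than through your one-parameter family $\alpha_l(\theta_s)$; note also that your route needs the bad set for $\ell_{\alpha^*}$ to be exactly an interval $(1/2,\alpha_c]$, a slightly stronger reading of Xu et al.\ than the paper's direct derivative comparison requires, and your final ``one-sided neighborhood'' step is superfluous once a single $\alpha^*$ in the symmetric difference is exhibited.

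The genuine gap is that you never rule out spurious stationary points of $L_{\alpha^*}$ on the positive half-line. Uniqueness of $\theta^*$ requires excluding fixed points in $(0,\theta^*)\cup(\theta^*,\infty)$ as well, and your monotonicity lemma cannot do this: for $0\le\theta\le\theta^*$ the tilting reverses, $\tilde\alpha(\theta;\alpha^*)\le\alpha^*$ (Lemma~\ref{lemma:atheta}(c)), so the inequality you rely on points the wrong way there. The paper devotes the first part of its proof to exactly this issue, deducing the absence of positive spurious stationary points from the global convergence of sEM for every positive initialization (Theorem~\ref{teo:mixture}); that result in turn rests on showing that $\theta\mapsto F(\theta,\alpha(\theta))$ is nondecreasing and sandwiched as in \eqref{eq:fcases}, via a Cauchy--Schwarz argument. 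Without this step, or some substitute for it, the containment ``every $\alpha^*$ good for $\ell_{\alpha^*}$ is good for $L_{\alpha^*}$'' is not established, and the theorem does not follow.
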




The second result concerns the convergence rate of sEM.
We show that as long as sEM is initialized at $\theta^0 > 0$, it enjoys fast convergence to the global optimum, and there is a large range of initializations on which it never performs worse than vanilla EM.

\begin{theorem}\label{teo:mixture}
For the mixture model \eqref{eq:mixtureg}, for each $\theta^*>0$ and initialization $\theta^0 > 0$, the iterates of sEM converge to $\theta^*$ exponentially fast:For each $\theta^* > 0$ and $\theta_0 > 0$, the iterates of sEM converge to $\theta^*$ exponentially fast:

\begin{equation}\label{emgeom_correct}
|\theta^{t} - \theta^*| \leq \rho^t |\theta^0 - \theta^*|\text{, with $\rho = \exp\left(-\frac{\min\{\theta^0, \theta^*\}^2}{2}\right).$}
\end{equation}
Moreover, there is a $\theta_{\text{fast}} \in(0,\theta^*)$ depending only on $\theta^*$ and $\alpha^*$ such that, if sEM and vanilla EM are both initialized at $\theta^0 \in [\theta_{\text{fast}}, \infty)$, then
\begin{equation}
\label{eq:fast}
|\theta^t - \theta^*| \leq |\theta^t_{\text{vEM}} - \theta^*| \quad \quad \forall t \geq 0\,,
\end{equation}
where $\theta^t_{\text{vEM}}$ are the iterates of vanilla EM.
In other words, when initialized in this region, sEM never underperforms vanilla EM.
\end{theorem}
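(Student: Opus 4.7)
The plan is to derive an explicit formula for the sEM update in the two-Gaussian model, establish geometric contraction toward $\theta^*$ via a mean-value/Stein argument, and then compare $M_{\text{sEM}}$ with $M_{\text{vEM}}$ pointwise. From~\eqref{eq:sinkpost}, the Sinkhorn E-step reduces to choosing a single scalar tilted weight $\alpha(\theta)$ so that the induced posterior has marginal $\alpha^*$; setting $\beta(\theta) := \log(\alpha(\theta)/(1-\alpha(\theta)))$, the ensuing M-step (a weighted least-squares problem) collapses to
\[
M_{\text{sEM}}(\theta) = \E_{Y \sim q^{\theta^*}}\bigl[Y\tanh(\theta Y + \beta(\theta)/2)\bigr],
\]
where $\beta(\theta)$ is determined implicitly by $\E_{Y \sim q^{\theta^*}}[\tanh(\theta Y + \beta(\theta)/2)] = 2\alpha^* - 1$. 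Vanilla EM has the same form but with the constant $\beta^* := \log(\alpha^*/(1-\alpha^*))$ in place of $\beta(\theta)$. Direct computation gives $M_{\text{sEM}}(\theta^*) = \theta^*$ via the tower property, and for $\theta > 0$ a covariance/FKG argument combined with $\alpha^* \geq 1/2$ shows $M_{\text{sEM}}(\theta) > 0$, so iterates remain in $(0, \infty)$.

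The heart of the argument is the per-iteration contraction bound. First, a sign analysis using the formula above shows that the iterates monotonically approach $\theta^*$: if $0 < \theta < \theta^*$ then $\theta < M_{\text{sEM}}(\theta) \leq \theta^*$, while if $\theta > \theta^*$ then $\theta^* \leq M_{\text{sEM}}(\theta) < \theta$. Second, implicit differentiation of the constraint defining $\beta(\theta)$, combined with Stein's lemma applied to each Gaussian component of $q^{\theta^*}$, yields
\[
M_{\text{sEM}}'(\xi) = \E_{q^{\theta^*}}\bigl[Y^2(1-h_\xi^2)\bigr] - \frac{\E_{q^{\theta^*}}\bigl[Y(1-h_\xi^2)\bigr]^2}{\E_{q^{\theta^*}}\bigl[1-h_\xi^2\bigr]} = Z_\xi \cdot \var_{\nu_\xi}(Y),
\]
where $h_\xi(y) := \tanh(\xi y + \beta(\xi)/2)$, $Z_\xi := \E_{q^{\theta^*}}[1-h_\xi^2]$, and $\nu_\xi$ is the probability measure with density proportional to $(1-h_\xi^2)q^{\theta^*}$. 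The key quantitative step is bounding this expression by $\exp(-\xi^2/2)$ for $\xi > 0$: expressing $q^{\theta^*}(y) = \phi(y)[\alpha^* e^{\theta^* y - \theta^{*2}/2} + (1-\alpha^*)e^{-\theta^* y - \theta^{*2}/2}]$ exposes the Gaussian tails, and integrating against the bump $1 - h_\xi^2 = \mathrm{sech}^2(\xi y + \beta(\xi)/2)$ via a Laplace-type estimate isolates the factor $e^{-\xi^2/2}$. Combined via the mean value theorem, $|M_{\text{sEM}}(\theta) - \theta^*| \leq M_{\text{sEM}}'(\xi)|\theta - \theta^*|$ for some $\xi$ between $\theta$ and $\theta^*$; monotonicity of the iterates forces $\xi \geq \min\{\theta^0, \theta^*\}$ at every step, giving~\eqref{emgeom_correct}.

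For the comparison with vEM, both maps share the form $\beta \mapsto \E_{Y \sim q^{\theta^*}}[Y\tanh(\theta Y + \beta/2)]$, so it suffices to track how the tilt enters. Differentiating in $\beta$ and using that sEM chooses $\beta(\theta)$ precisely to cancel the marginal-constraint residual (whereas vEM uses the static $\beta^*$), a sign analysis identifies an open neighborhood of $\theta^*$ on which $M_{\text{sEM}}(\theta)$ lies strictly between $\theta^*$ and $M_{\text{vEM}}(\theta)$. Define $\theta_{\text{fast}}$ to be the infimum of the connected component of that neighborhood containing $\theta^*$; the same sign analysis shows $\theta_{\text{fast}} \in (0, \theta^*)$. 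Inequality~\eqref{eq:fast} then follows by induction on $t$: assuming at step $t$ both iterates lie in $[\theta_{\text{fast}}, \infty)$ with $\theta^t_{\text{sEM}}$ sandwiched between $\theta^*$ and $\theta^t_{\text{vEM}}$, applying the respective (monotone) maps preserves the ordering.

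The hardest step will be the quantitative bound $M_{\text{sEM}}'(\xi) \leq e^{-\xi^2/2}$ in the asymmetric case, where the implicitly defined tilt $\beta(\xi)$ complicates the Laplace-type estimate: one must carefully locate the mode of the sech$^2$ bump (at $y = -\beta(\xi)/(2\xi)$) and control the Gaussian density $q^{\theta^*}$ at that mode. Handling $\partial_\theta \beta$ in the comparison with vEM is a secondary difficulty, since differentiating the implicit constraint introduces ratios of weighted expectations whose signs must be tracked on the entire region $[\theta_{\text{fast}}, \infty)$.
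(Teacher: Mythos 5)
Your reduction of the sEM update to the map $M_{\text{sEM}}(\theta)=\E[Y\tanh(\theta Y+\beta(\theta)/2)]$ with the implicitly tilted $\beta(\theta)$, and the variance representation $M_{\text{sEM}}'(\xi)=Z_\xi\var_{\nu_\xi}(Y)$, are both correct (this is the map $F(\theta,\alpha(\theta))$ of the paper), and your monotonicity/sign analysis matches the paper's first step. But the key quantitative step you defer to a ``Laplace-type estimate'' --- the pointwise bound $M_{\text{sEM}}'(\xi)\le e^{-\xi^2/2}$ for all $\xi>0$ --- is false, and this is a genuine gap rather than a technical difficulty. At $\xi\to 0^+$ the constraint forces $\tanh(\beta(0)/2)=2\alpha^*-1$, so $1-h_\xi^2\to 4\alpha^*(1-\alpha^*)$ is constant, $\nu_\xi\to q^{\theta^*}$, and your own formula gives $M_{\text{sEM}}'(0^+)=4\alpha^*(1-\alpha^*)\bigl(1+4\alpha^*(1-\alpha^*)\theta^{*2}\bigr)$, which exceeds $1=e^{-0/2}$ whenever $\alpha^*$ is near $1/2$ and $\theta^*$ is not small (e.g.\ it equals $1+\theta^{*2}$ at $\alpha^*=1/2$). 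Since the mean-value point $\xi$ is only guaranteed to lie somewhere in $(\min\{\theta^t,\theta^*\},\max\{\theta^t,\theta^*\})$, for a small initialization $\theta^0$ your argument would need the derivative bound precisely in the region where it fails; the correct statement is a \emph{secant} bound $|M(\theta)-\theta^*|\le e^{-\min\{\theta,\theta^*\}^2/2}|\theta-\theta^*|$, which does not follow from a pointwise derivative estimate.

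The paper avoids this entirely: it never bounds the derivative of the sEM map. Instead it proves the two-sided sandwich $\theta^*\le\theta^t_{sEM}\le\theta^t_{EM_0}$ (resp.\ the reversed inequalities for $0<\theta^0\le\theta^*$), where $\theta^t_{EM_0}$ are the iterates of vanilla EM on the \emph{symmetric} mixture started from the same $\theta^0$, using exactly the two monotonicity facts you already have ($F(\theta,\alpha(\theta))$ nondecreasing with fixed point $\theta^*$, and $F(\theta,\alpha(\theta))$ versus $F(\theta,1/2)$ ordered according to the side of $\theta^*$). The rate \eqref{emgeom_correct} is then inherited from the known symmetric-EM contraction of \citet{DasTzaZam17}, whose proof is precisely the global secant analysis your MVT step is trying to shortcut. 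To repair your argument you would either have to reprove that secant bound for the tilted map from scratch, or insert the comparison with $\theta^t_{EM_0}$ as the paper does. Your treatment of \eqref{eq:fast} is closer in spirit to the paper's (compare $F(\theta,\alpha(\theta))$ with $F(\theta,\alpha^*)$ on a neighborhood $[\theta_{\text{fast}},\infty)$ and induct), but as written the existence and positivity of $\theta_{\text{fast}}$ rest on an unspecified ``sign analysis''; the paper makes this concrete via an explicit factorization of $F(\theta,\alpha)-F(\theta,\alpha^*)$ and a perturbation argument in $\theta$ around $\theta^*$.
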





\section{Empirical results on simulated data}\label{sec:simulation}
In this section we compare the performance of vanilla EM, overaparameterized EM \citep{Xu2018} and Sinkhorn EM on several simulated Gaussian mixtures, summarized as follows. We refer the reader to the appendix for experimental details.
In this section, we measure convergence speed by the number of E-steps each algorithm requires.
In the following section, we compare the actual execution times of each algorithm.
\paragraph{Symmetric mixture of two Gaussians with asymmetric weights}
In Fig.~\ref{fig:example1}, we plot results on the model~\eqref{eq:mixtureg}. Consistent with the results of \citet{Xu2018}, vanilla EM only converges for some values of $\alpha^*$. We show sEM escapes local optima significantly more often than vanilla EM, and typically convergences faster. Although overparameterized EM always escapes local optima, convergence can be slow. When all three algorithms converge, sEM usually converges fastest.

\begin{figure}
\includegraphics[width=1.0\textwidth]{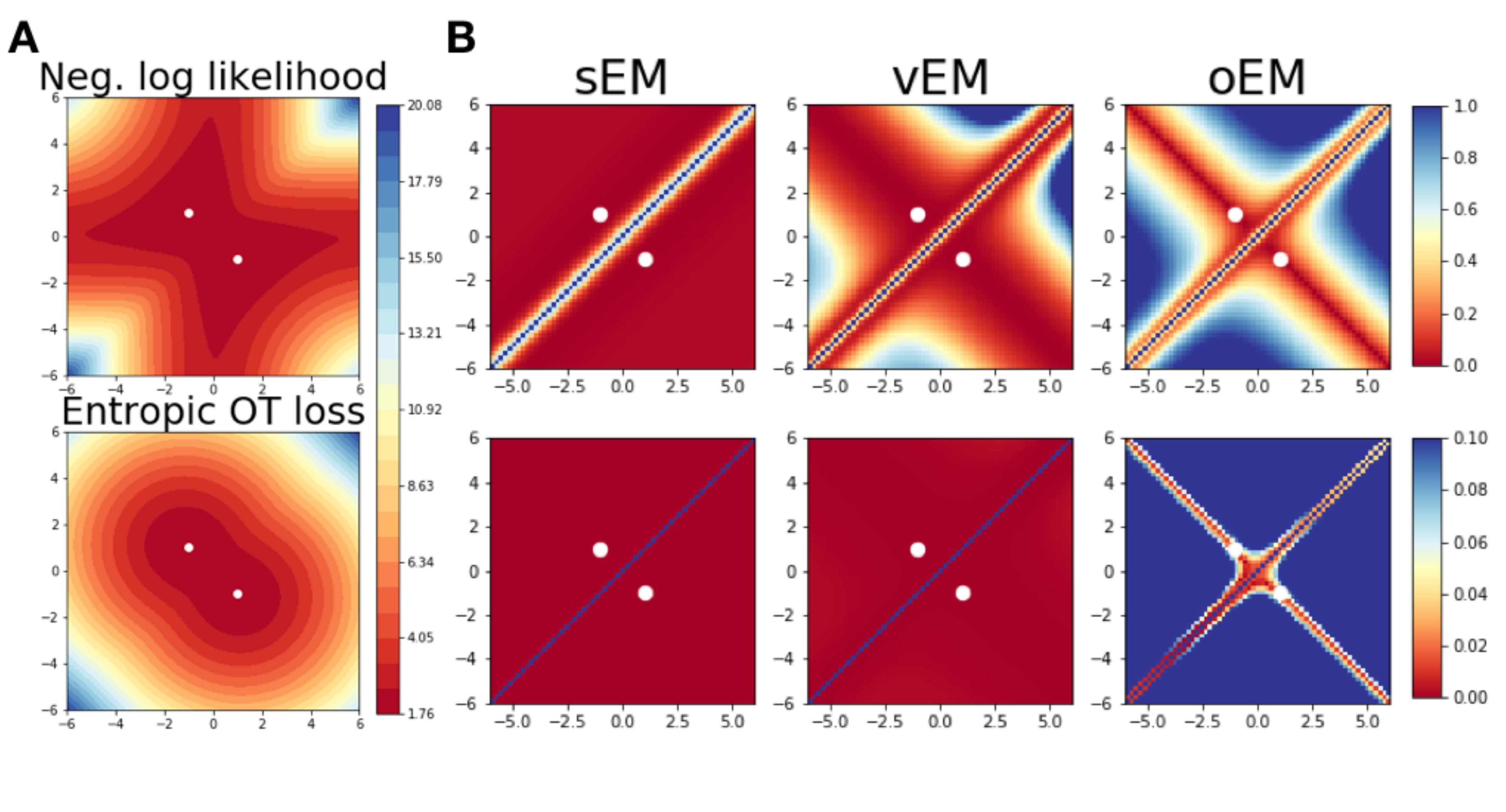}
\caption{Setup for example \eqref{eq:mixtutwo}. \textbf{A} Negative log likelihood and entropic OT loss. Notice that the log-likelihood has saddle points at $(0,\infty),(0,-\infty),(\infty,0)$, and $(-\infty,0)$, while the only stationary points of entropic OT loss are global optima (white dots). \textbf{B} Speed of convergence analysis:  the first row shows the corresponding errors at the first iteration from each initial $\theta^0$. The second row shows the same, after five iterations.}
\label{fig:example2}
\end{figure}

\paragraph{Equal mixture of two Gaussians}
In Fig.~\ref{fig:example2},  we study the model
\begin{equation}\label{eq:mixtutwo} q^{\theta}(y) = \frac 12  \mathcal{N}(y;\theta_1, 1) +\frac 12 \mathcal{N}(y;\theta_2,1). \end{equation}
We assume the true parameters are $(\theta^*_1,\theta^*_2)=(-1,1)$ and study convergence for different initializations of ($\theta^0_1,\theta^0_2)$. All methods converge except when initialized on the line $x=y$, and again, convergence requires the fewest iterations for sEM, whose iterates always fall on the line $x=-y$. In the appendix we show additional experiments with comprehensive choices of parameters, and show that in some modifications of this model (e.g., when the variances have to be estimated) overparameterized EM fails to recover the true parameters.

\paragraph{Mixture of three Gaussians}
In Fig.~\ref{fig:example3a}, we study the model
\begin{equation}\label{eq:mixtuthree} q^{(\theta_1,\theta_2,\theta_3)}(y) = \frac{1}{3} \mathcal{N}(y;\theta_1, 1) +\frac{1}{3}\mathcal{N}(y;\theta_2,1)+\frac{1}{3}\mathcal{N}(y;\theta_3,1), \end{equation}
and assume that the true parameters satisfy $(\theta^*_1,\theta_2^*,\theta^*_3)=(-\mu,0,\mu)$ for some $\mu > 0$.
We run our experiments on a dataset consisting of $1000$ samples from the true distribution.
Our results show that when $\mu$ is small, overaparameterized EM may overfit and converge to poor solutions. We explore this phenomenon in more detail in the appendix.

\begin{figure}[ht]
\vspace{-0.2cm}
\includegraphics[width=1.0\textwidth]{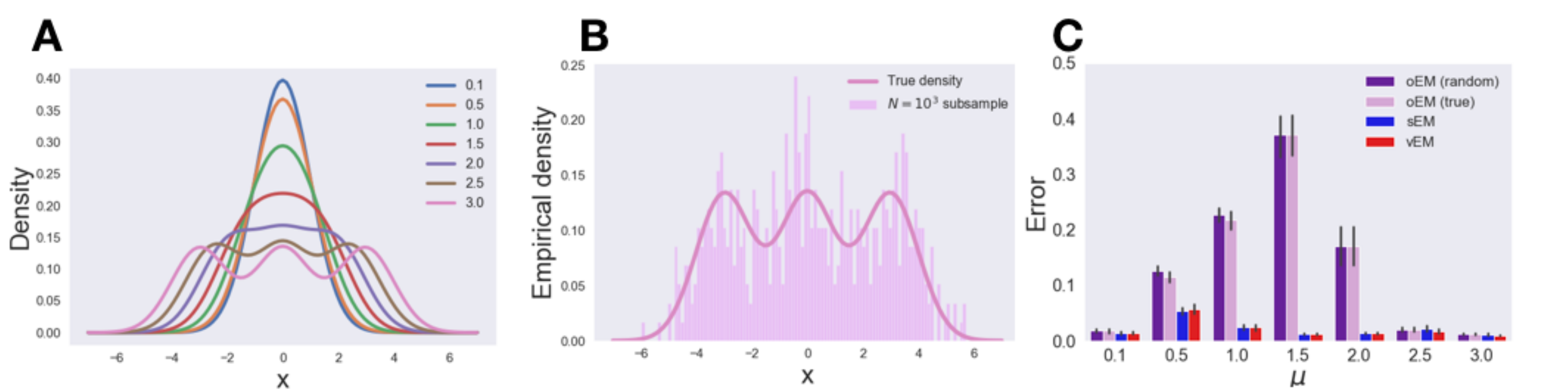}
\caption{Overparameterized EM has the worst performance on the mixture of three Gaussians experiment. \textbf{A}: Densitites of all considered mixtures for different values of $\mu$. \textbf{B}: Density when $\mu=3$ along with a sampled dataset. \textbf{C}: Estimation error.}
\label{fig:example3a}
\end{figure}


\section{Application to inference of neurons in \textit{C. elegans}}\label{sec:c_elegans}
\vspace{-0.2cm}
\begin{figure}[ht]
\includegraphics[width=1.0\textwidth]{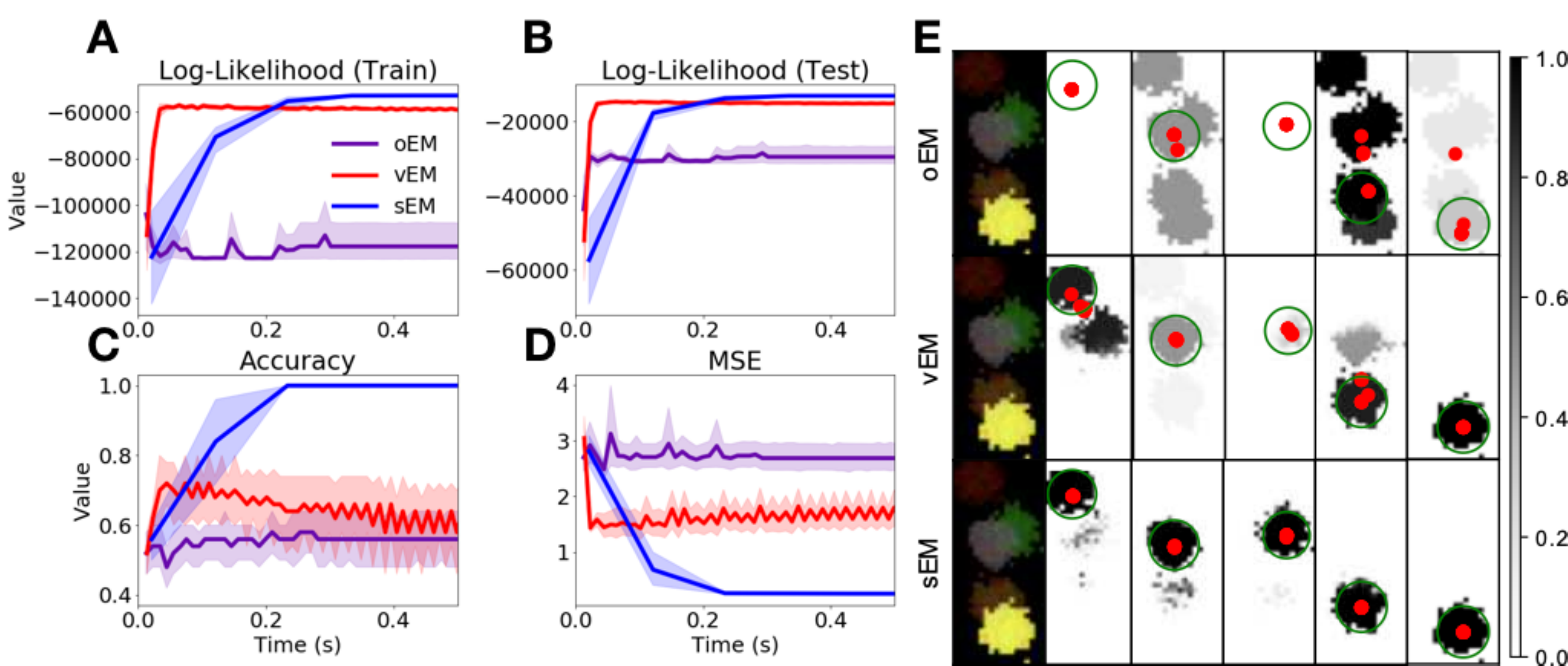}
\caption{Performance evaluation of Sinkhorm EM, vanilla EM and overparameterized EM on \textit{C. elegans} neuron identification and segmentation task (see the appendix for a detailed definition of our metrics). \textbf{A-D}. Training (\textbf{A}) and test (\textbf{B}) log-likelihoods, segmentation accuracy (\textbf{C}) and mean squared error (\textbf{D}) for the three methods. \textbf{E}. The visual segmentation quality. Each row denotes a different method; the first column shows the observed neuronal pixel values (identical for all three methods), and the remaining columns indicate the mean identified segmentation of each neuron (in grayscale heatmaps) and the inferred cell center in red dots, over multiple randomized runs. The ground truth neuron shape is overlaid in green.}
\label{fig:worm_fig}
\end{figure}

Automated neuron identification and segmentation of \textit{C. elegans} is crucial for conducting high-throughput experiments for many applications including the analysis of gene expression profiles, cell fate studies \citep{sulston1983embryonic}, stem cell research, and the study of circuit-level neuronal dynamics \citep{kato2015global}.
A recently introduced novel transgenic strain of \textit{C. elegans} has a deterministic coloring of each neuron, which enables the disambiguation of nearby neurons and aid in their identification. The coloring scheme and the stereotypical positioning of the cells in populations of \textit{C. elegans} has allowed the construction of a statistical atlas that encodes the canonical neuron positions and their colors~\citep{yemini2019neuropal}.

This neural statistical atlas of \textit{C. elegans} provides us with a strong prior to guide identification and segmentation of neurons. We model the assignment of pixels to neurons through a \textit{Bayesian} Gaussian mixture model (GMM) where the model parameters $\mu_k, \Sigma_k$ correspond to cell centers (and colors) and shapes respectively. The responsibilities matrix $\pi$ which encodes the probabilistic assignment of the pixels to the cells can be considered as a probabilistic segmentation of the image to regions with high probabilities for each cell.

Starting from a subset of cells in the neural statistical atlas of \textit{C. elegans} neurons, we first sampled neuron locations and colors $\mu_k$ given their \textit{prior} canonical locations, colors and their variance. Then, we sampled the pixels from a GMM with previously sampled cell centers and colors (and independently sampled cell shapes) as its parameters. We then aim to recover centers and shapes using the MAP estimate in the following statistical model: 
\begin{gather}
P(Y,\mu,\Sigma) =\prod_{i=1}^n \left(\sum_{k=1}^{K}\alpha_k \mathcal{N}(Y_i|\mu_k, \Sigma_k)\right) \prod_{k=1}^K \mathcal{N}(\mu_k|\mu_k^a,\Sigma_k^a),
\end{gather}

where each observation $Y_i \in \mathbb{R}^6$ is the concatenation of the pixel location $l_i \in \mathbb{R}^3$ and pixel color $c_i \in \mathbb{R}^3$. Also $\mu_k \in \mathbb{R}^6$ and $\Sigma_k \in \mathbb{R}^{6 \times 6}$ are mean and covariance parameters of the GMM, which, in turn, depend on their priors $\mu_k^a,\Sigma_k^a$.

We estimate the model parameters with vanilla EM, overparameterized EM, and Sinkhorn EM. The cluster centers are initialized randomly while the covariances are set to be constant and allowed to be updated for all three methods (other initialization and update configurations are presented in the appendix). Notice that although this deviates from the fully frequentist framework, any version of the EM algorithm can still be applied by incorporating the prior information in the M step~\citep{Ormoneit1996}.


Each of the methods returns inferred cell centers, colors, and shapes, as well as a $\pi$ matrix that can be used for probabilistic segmentation of the images. Fig. \ref{fig:worm_fig}(\textbf{A-D}) shows the evaluation of the three algorithms. sEM outperforms vEM and oEM in all four metrics, and in comparable time. The plots also demonstrate that vEM shows oscillatory behavior which sEM avoids, and that oEM has a tendency to capture the wrong components, leading to a lower accuracy and higher MSE.

To qualitatively evaluate the convergence properties of each of the algorithms for random initialization, we ran each method 10 times and computed the probabilistic segmentation maps for each component. Fig. \ref{fig:worm_fig}-\textbf{E} shows the average segmentation maps for 20 runs as well as the inferred $\mu_k$ values. The segmentation for sEM is crisper than other methods, with clear edges and boundaries, and the centers end up very close to the true value in almost every iteration.

\section{Broader impacts}
The success of sEM has implications for improving research in neuroscience and other biological sciences, where clustering and identification techniques are widely used.
The field of machine learning often has unintended consequences across many domains, and applying sEM to practical problems requires careful analysis of societal risks and benefits.

\section{Acknowledgements}
Gonzalo Mena is funded by a Harvard Data Science Initiative Fellowship. Amin Nejatbakhsh and Erdem Varol are funded by NSF NeuroNex Award DBI-1707398 and The Gatsby Charitable Foundation. Jonathan Niles-Weed acknowledges the support of the Institute for Advanced Study. Authors thank Ji Xu for valuable comments and Eviatar Yemini for providing the C. elegans dataset.

\pagebreak
\appendix

\renewcommand\thefigure{A\arabic{figure}} 
\renewcommand\thelemma{A\arabic{lemma}} 
\renewcommand\thetheorem{A\arabic{theorem}} 
\renewcommand\theproposition{A\arabic{proposition}} 
\renewcommand{\theequation}{A\arabic{equation}}
\setcounter{figure}{0}
\setcounter{equation}{0}
\setcounter{theorem}{0}
\setcounter{proposition}{0}
\setcounter{lemma}{0}

\section{Omitted Proofs}
\subsection{Proof of Proposition~\ref{mini}}
First, we prove~\eqref{eq:dominate}.
By the disintegration theorem we can express each coupling $\pi\in \Pi(P_0, \mu_Y)$ as $\dd \pi(y,x) = \dd \mu_Y(y) \dd P(x|y)$ where  $P$ is a kernel of conditional probabilities satisfying $\dd P_0(x) = \int \dd P(x|y) \dd \mu_Y(y)$. Denote $\bar{\Pi}(P_0, \mu_Y)$ the set of such kernels. Then, we have:
\begin{align*}
S_\theta(P_0, \mu_Y) & = \min_{P\in \bar{\Pi}(P_0, \mu_Y)} \E_{Y \sim \mu_Y} \left(\E_{X \sim P(\cdot | Y)} g^\theta(X, Y) + H(P(\cdot | Y) | P_0)\right) \\
& = \min_{P\in \bar{\Pi}(P_0, \mu_Y)} \E_{Y \sim \mu_Y} \left(\E_{X \sim P(\cdot | Y)} g^\theta(X, Y) + H(P(\cdot | Y) | P_0)\right) \\
& = \min_{P \in \bar{\Pi}(P_0, \mu_Y)}\E_{Y \sim \mu_Y}\left(-\E_{X\sim P(\cdot|Y)}\left(\log \frac{\dd Q^\theta_{X,Y}(X, Y)}{\dd P_0(X)}\right)+E_{X\sim P(\cdot|Y)}\left(\log \frac{\dd P(X|Y)}{\dd P_0(X)}\right)\right) \\
& = - \max_{P \in \bar{\Pi}(P_0, \mu_Y)} \E_{Y \sim \mu_Y} F_Y(P(\cdot |Y), \theta)\,,
\end{align*}

where $F_Y$ was defined in \eqref{eq:f_def}. \Citet{Neal1998} show that for each fixed $\theta$ and $Y$, the functional $F_Y$ is maximized at the measure $\tilde P$ defined by $\dd \tilde P(x) = Q^{\theta}_{X|Y}(x | Y)$, i.e., the conditional distribution of $X$ given $Y$ under $Q_Y^\theta$, and that
\begin{equation*}
F_Y(Q^{\theta}(\cdot | Y), \theta) = \log q_Y^\theta(Y)\,.
\end{equation*}
Therefore, by dropping the constraint $P \in \bar{\Pi}(P_0, \mu_Y)$, we obtain
\begin{equation*}
S_\theta(P_0, \mu_Y) = - \max_{P \in \bar{\Pi}(P_0, \mu_Y)} \E_{Y \sim \mu_Y} F_Y(P(\cdot |Y), \theta) \geq - \max_{P} \E_{Y \sim \mu_Y} F_Y(P(\cdot |Y), \theta) = - \E_{Y \sim \mu_Y} \log q^\theta_Y(Y)\,,
\end{equation*}
as desired.
The inequalities in~\eqref{eq:ell_dominate} then follow upon choosing $\mu_Y = Q^{\theta^*}_Y$ and $\mu_Y = \frac 1n \sum_{i=1}^n \delta_{Y_i}$, respectively.

Finally, if $(X, Y) \sim Q_{X,Y}^{\theta^*}$, then by definition the kernel $y \mapsto Q_{X|Y}^{\theta^*}(\cdot | y)$ lies in $\bar{\Pi}(P_0, Q^{\theta^*}_Y)$.
We obtain
\begin{equation*}
L(\theta^*) = - \max_{P \in \bar{\Pi}(P_0, Q^{\theta^*}_Y)} \E_{Y \sim Q^{\theta^*}_Y} F(P(\cdot |Y), \theta^*) \leq - \E_{Y \sim Q^{\theta^*}_Y} F(Q^{\theta^*}_{X|Y}(\cdot | Y), \theta^*) = \ell(\theta^*)\,,
\end{equation*}
so in fact $L(\theta^*) = \ell(\theta^*)$.
Since $\theta^*$ minimizes $\ell$, it therefore must also minimize $L$.
\qed

\subsection{Proof of Proposition \ref{prop:curvature}}
\begin{proof}
We start with the following semi-dual formulation ~\citep{Cuturi2018} for $L(\theta)$:
\begin{equation}
\label{eq:semidual1}
L(\theta) = \max_{w\in{\mathbb{R}^{|\cX|}}} \left[\sum_{i=1}^{|\cX|} w_k\alpha_k -\int \log \left(\sum_{k=1}^{|\cX|} \alpha_k \exp\left(w_k + \log q^{\theta}_k(y) \right)\right)\dd \mu_Y(y)\right].
\end{equation}
Notice the maximum above is realized for many $w$, as one may add an arbitrary constant to any coordinate of $w$ without changing the right hand side. Therefore, we can assume $w(|\cX|)=0$ and for convenience we define, for $w\in\mathbb{R}^{|\cX|-1}$, the function 
\begin{equation}\label{eq:l2}
L_2(\theta,w)= \sum_{i=1}^{|\cX|} w_k\alpha_k -\int \log \left(\sum_{k=1}^{|\cX|} \alpha_k \exp\left(w_k + \log q^{\theta}_k(y) \right)\right)d\mu_Y(y).\end{equation}
and note that \begin{equation}\label{eq:l2q} L_2(\theta,0)= -E_{\mu_Y}(\log q^\theta(Y)).\end{equation} Now,
let's call $w_\theta$ the one that achieves the maximum in \eqref{eq:semidual1}. We will follow an envelope-theorem like argument: we have $L(\theta)=L_2(\theta,w_\theta)$, and based on this we may compute first and second derivatives using the chain rule 
\begin{equation}\label{eq:partialL}
\frac{\partial L}{\partial \theta}(\theta) = \frac{\partial L_2}{\partial \theta}(\theta, w_\theta) + \frac{\partial L_2}{\partial w}(\theta, w_\theta) \frac{\partial w_\theta}{\partial \theta},
\end{equation}

and 

\begin{eqnarray}
\label{eq:sec}
\frac{\partial^2 L}{\partial \theta^2}(\theta) &=& \frac{\partial^2 L_2}{\partial \theta^2}(\theta, w_\theta)  + \frac{\partial^2 L_2}{\partial w \partial \theta}(\theta, w_\theta) \frac{\partial w_\theta}{\partial \theta}
 +  \frac{\partial^2 L_2}{\partial \theta \partial w}(\theta, w_\theta) \frac{\partial w_\theta}{\partial \theta} \\
 \nonumber & &
 +\frac{\partial^2 L_2}{\partial w^2}(\theta, w_\theta) \left(\frac{\partial w_\theta}{\partial \theta}\right)^2+ \frac{\partial L_2}{\partial w}(\theta, w_\theta) \frac{\partial w_\theta}{\partial \theta}.
\end{eqnarray}
 But by optimality of $w_\theta$, for every $\theta$ we have
 \begin{equation}\label{eq:sec2}0 =  \frac{\partial L_2}{\partial w}(\theta, w_\theta) \text{ and} \quad 0 =  \frac{\partial^2 L_2}{\partial \theta \partial w}(\theta, w_\theta)+ \frac{\partial^2 L_2}{ \partial w^2}(\theta, w_\theta)\frac{\partial w_\theta}{\partial \theta}.\end{equation}

Therefore, by combining \eqref{eq:sec} and \eqref{eq:sec2} we obtain 
\begin{equation}\label{eq:sec3}
\frac{\partial^2 L}{\partial \theta^2}(\theta) = \frac{\partial^2 L_2}{\partial \theta^2}(\theta, w_\theta)  - \frac{\partial^2 L_2}{\partial w \partial \theta}(\theta, w_\theta)\left(\frac{\partial^2 L_2}{ \partial w^2}  (\theta, w_\theta)
\right)^{-1} \frac{\partial^2 L_2}{\partial \theta \partial w}(\theta, w_\theta).
\end{equation}
Additionally, it is easy to see that since $\mu_Y=Q^{\theta^*}$ (population limit), $w_{\theta^\ast}=0$, and that as with \eqref{eq:l2q} it also holds that for each $\theta$
\begin{equation}\label{eq:l2q} \frac{\partial^2 L_2}{\partial \theta^2}(\theta, 0)= -\frac{\partial^2 }{\partial \theta^2}E_{\mu_Y}(\log q^\theta(Y)).\end{equation}
Therefore, 
\begin{equation}\label{eq:sec4}
\frac{\partial^2 L}{\partial \theta^2}(\theta^*) =  -\frac{\partial^2 }{\partial \theta^2}E_{\mu_Y}(\log q^{\theta^\ast}(Y)) - \frac{\partial^2 L_2}{\partial w \partial \theta}(\theta, 0)\left(\frac{\partial^2 L_2}{ \partial w^2}  (\theta^*, 0)
\right)^{-1} \frac{\partial^2 L_2}{\partial \theta^\ast \partial w}(\theta^*, 0).
\end{equation}
To conclude the proof, then, it suffices to show that $\frac{\partial^2 L_2}{ \partial w^2}  (\theta^*, 0)$ is negative definite. To see this, we write
\begin{equation} \frac{\partial^2 L_2}{ \partial w^2}  (\theta, w)= -\int\left(Diag(v(y,\theta,w))-v(y,\theta,w)v(y,\theta,w)^\top\right) \dd \mu_Y(y), \end{equation}
where  for $k=1,\ldots, {|\cX|}-1$ \begin{equation}
v(y,\theta,w)_k= \frac{\alpha_k \exp\left(w_k + \log q_k^\theta(y)\right)}{\sum_{k=1}^{|\cX|}\alpha_k \exp\left(w_k + \log q_k^\theta(y)\right)}.
\end{equation}
Define the (symmetric) matrix $I_{k,k'}$ for $k,k'=1,\ldots |\cX|$ as $$ I_{k,k'}:=\int\left( \frac{\alpha_k q^{\theta^\ast}_k(y)}{\sum_{k=1}^{|\cX|} \alpha_k q^{\theta^\ast}_k(y))}   \frac{\alpha_{k'} q^{\theta^\ast}_{k'}(y) }{\sum_{k'=1}^|\cX| \alpha_{k'} q^{\theta^\ast}_{k'}(y)}   \right) \dd \mu_Y(y).$$   
if $k\neq k$ and otherwise  $$ I_{k,k}:=-\int\left( \frac{\alpha_k q^{\theta^\ast}_k(y)}{\sum_{k=1}^{|\cX|} \alpha_k q^{\theta^\ast}_k(y)}   - \frac{\alpha_k q^{\theta^\ast}_k(y)}{\sum_{k=1}^{|\cX|} \alpha_k q^{\theta^\ast}_k(y)}   \frac{\alpha_k q^{\theta^\ast}_k(y) }{\sum_{k=1}^{|\cX|} \alpha_k q^{\theta^\ast}_k(y)}   \right) \dd \mu_Y(y).$$
Notice this matrix coincides with $\frac{\partial^2 L_2}{ \partial w^2}  (\theta^\ast, 0)_{k,k'}$ for $k\leq |\cX|-1$. 
Since $\mu_Y(y) =\sum_{k=1}^{|\cX|} \alpha_k    q^{\theta^\ast}_k(y) $ we have that $$ \sum_{k'=1, k'\neq k}^{|\cX|} I_{k,k'} = \alpha_k - \int\left( - \frac{\alpha_k q^{\theta^\ast}_k(y)}{\sum_{k=1}^{|\cX|} \alpha_k q^{\theta^\ast}_k(y)}   \frac{\alpha_k q^{\theta^\ast}_k(y) }{\sum_{k=1}^{|\cX|} \alpha_k q^{\theta^\ast}_k(y)}   \right) \dd \mu_Y(y)  =-I_{k,k}>0 $$
Then, $I$ is a negative weighted Laplacian matrix and
$$x^\top I x = \frac{1}{2}\sum^{|\cX|}_{k,k'} I_{k,k}(x_k-x_{k'})^2\leq 0.$$
The above expression is zero only if $x$ is a constant vector. Since $\frac{\partial^2 L_2}{ \partial w^2}  (\theta^\ast, 0)$ is a submatrix of $I$, it is also negative semidefinite. Now, suppose
$z^\top \frac{\partial^2 L_2}{ \partial w^2}  (\theta^\ast, 0) z=0$, then, if $x_k=z_k$ for $k\leq |\cX|-1$ and $x_{|\cX|}=0$ we have  $z^\top \frac{\partial^2 L_2}{ \partial w^2}  (\theta^\ast, 0) z=x^\top I x =0$ and since $x$ must be constant, $z=0$. Therefore, $\frac{\partial^2 L_2}{ \partial w^2}(\theta^\ast, 0)$ is negative definite and the proof is concluded.
\end{proof}
\subsection{Proof of Theorem \ref{teo:em}}
\begin{proof}

The proof is an adaptation of the original method for the EM algorithm introduced in \cite{Wu1983}.  By definition of $\theta^t,\theta^{t+1}$ the following inequality holds
\begin{equation}
\label{ineqem}
 -E_{(X,Y)\sim \pi^{t+1}}\left(\log q^{\theta^{t+1}}(X,Y)\right) +H(\pi^{t+1}|P_0 \otimes \mu_Y) \leq  -E_{(X,Y)\sim \pi^{t+1}}\left(\log q^{\theta^{t}}(X,Y)\right) +H(\pi^{t+1}|P_0 \otimes \mu_Y).\end{equation}
By the definition of $L(\cdot)$, the right hand side equals $L(\theta^t)$. Also, since $P_0$ is fixed the feasible set of couplings $\pi$ is the same for every $\theta$. In particular, $\pi^{t+1}$ is a feasible coupling for the problem defining $L(\theta^{t+1})$, and by virtue of \eqref{ineqem} we conclude $L(\theta^{t+1})\leq L(\theta^t)$.

We now show the inequality is strict if $\theta^t$ is not a stationary point. Define $$\tilde{L}(\theta,\pi)=  -E_{(X,Y)\sim \pi}\left(\log q^\theta(X,Y)\right) +H(\pi|P_0 \otimes \mu_Y).$$
Therefore, $L(\theta)=\tilde{L}(\theta,\pi_\theta)$ where $\pi_\theta$ minimizes $\tilde{L}(\theta,\pi)$ over $\pi$, for a fixed $\theta$. By the chain rule and optimality of $\pi_\theta$ we have
\begin{equation}\label{eq:sta}\frac{\partial L}{\partial \theta}(\theta)=\frac{\partial \tilde{L}}{\partial \theta} (\theta,\pi_\theta) + \frac{\partial \tilde{L}}{\partial \pi} (\theta,\pi_\theta) \frac{\partial \pi_\theta}{\partial \theta}=\frac{\partial \tilde{L}}{\partial \theta} (\theta,\pi_\theta).\end{equation}

Since $\theta^t$ is not a stationary point, the above implies that $\frac{\partial \tilde{L}}{\partial \theta} (\theta^t,\pi_{\theta^t})\neq 0$. Therefore, $\theta^t$ is not a stationary point for the function that is optimized at the $M$-step. In consequence, this $M$-step strictly decreases this function, and hence, of $L(\cdot)$, by definition.
\end{proof}

\subsection{Proof of Theorems \ref{teo:mixtureloss} and \ref{teo:mixture}}
The proof of Theorem \ref{teo:mixtureloss} relies on an analysis of the functions $L_{\alpha^*}$ and $\ell_{\alpha^*}$ and their derivatives.
Fig.~\ref{fig:1} depicts the main properties of the functions that will be used in the proofs.
The first row shows $L_{\alpha^*}(\theta)\geq \ell_{\alpha^*}(\theta)$, which is the conclusion of Proposition \ref{mini}.
The second through fourth rows illustrate the behavior of the derivatives $L'$ and $\ell'$.
We show in Proposition \ref{prop:deri} that $L'_{\alpha^*}(\theta)\geq \ell'_{\alpha^*}(\theta)$ for all $\theta < 0$, which is clearly visible in the second and fourth row. In the third row, we plot the absolute values of the derivatives, with stationary points visible as cusps.
In the last row, we plot an important auxiliary function, which is described in more detail below.

As mentioned in the main text, we assume $\alpha^*>0.5$, by a simple symmetry argument. The fourth column in Fig.~\ref{fig:1} illustrates this symmetry. Additionally, we exclude the $\alpha^*=0.5$ from our analyses, as in this case the entropic OT loss coincides with the negative log likelihood (last column of Fig.~\ref{fig:1}) and sEM and vEM define the same algorithm.

For the proof of Theorem~\ref{teo:mixture}, we make several additional definitions.
We recall the semi-dual formulation~\eqref{eq:semidual1}.
The first-order optimality conditions for $w$ read
\begin{equation}\label{eq:optimalpha} \alpha^*=\int \frac{e^{w_1}\alpha^*e^{-(\theta-y)^2/2}}{e^{w_1}\alpha^*e^{-(\theta-y)^2/2}+e^{w_2}(1-\alpha^*)e^{-(\theta+y)^2/2}}q^{\theta^*}(y)\dd y.\end{equation}
The above condition can be expressed in terms of the \textit{tilted} $\alpha(\theta)$ introduced in \eqref{eq:sinkpost}. This $\alpha(\theta)$ the unique number in $[0,1]$ satisfying
\begin{equation}\label{eq:optimalpha2} \alpha^*=G(\theta,\alpha(\theta)),\end{equation} 
with $G(\theta,\alpha)$ defined as
\begin{align}\label{eq:optimalpha3} G(\theta,\alpha)&:=\int \frac{\alpha e^{-(\theta-y)^2/2}}{\alpha e^{-(\theta-y)^2/2}+(1-\alpha)e^{-(\theta+y)^2/2}}q^{\theta^*}(y)\dd y = \int \frac{\alpha e^{\theta y}}{\alpha e^{\theta y }+(1-\alpha)e^{-\theta y}}q^{\theta^*}(y)\dd y. \end{align}

We plot the tilting $\alpha(\theta^*)$ in the last row of Fig.~\ref{fig:1}.

To analyze the behavior of Sinkhorn EM and vanilla EM, we also introduce the auxiliary function $F(\theta, \alpha)$ defined by
\begin{equation}\label{eq:def_f}
F(\theta, \alpha) := \int_{\R} y \frac{\alpha e^{\theta y} - (1-\alpha) e^{-\theta y}}{\alpha e^{\theta y} + (1-\alpha) e^{-\theta y}} q^{\theta^*}(y) \dd y\,.
\end{equation}

With this notation, the updates of sEM satisfy
\begin{equation*}
\theta^{t+1}_{sEM} = F(\theta^t_{sEM}, \alpha(\theta^t_{sEM}))\,,
\end{equation*}
where $\alpha(\theta)$ is defined in~\eqref{eq:optimalpha2}.
On the other hand, the updates of vEM satisfy
\begin{equation*}
\theta^{t+1}_{vEM} = F(\theta^t_{vEM}, \alpha^*).
\end{equation*}

\begin{proof}[Proof of Theorem \ref{teo:mixtureloss}]
We assume as above that $\alpha^* > 0.5$.
First, we show that $L_{\alpha^*}$ never has spurious stationary points on $(0, \infty)$.
This follows from Theorem~\ref{teo:mixture}.
Indeed, that theorem guarantees global convergence of sEM for all positive initializations.
As in the proof of Theorem~\ref{teo:em}, a stationary point of $L(\theta)$ is also a stationary point of the function $\theta \mapsto \E_{Y \sim \mathrm Q^{\theta^*}} F_Y(P(\cdot | Y), \theta)$, where $P$ is the optimal kernel for the parameter $\theta$.
However, this function is convex in $\theta$ for any choice of $P$, so if $\theta'$ is a stationary point of $L$, then $\theta'$ minimizes $\theta \mapsto \E_{Y \sim \mathrm Q^{\theta^*}} F_Y(P(\cdot | Y), \theta)$, which implies that $\theta'$ is also a fixed point of the dynamics of sEM.
Since Theorem~\ref{teo:mixture} guarantees that sEM converges to $\theta^*$ for any positive initialization, this implies that there are no spurious stationary points on $(0, \infty)$.

We now show that if $L_{\alpha^*}$ has a spurious stationary point, then so does $\ell_{\alpha^*}$.
Suppose that $L_{\alpha^*}$ has a stationary point $\theta \in (-\infty, 0]$.
In Proposition \ref{prop:deri} we show that if $\theta \leq 0$, then $L'_{\alpha^*}(\theta)>\ell'_{\alpha^*}(\theta)$.
Therefore, if $\theta$ is stationary point of $L_{\alpha^*}$, then $\ell'_{\alpha^*}(\theta) < 0$.
Since $\ell_{\alpha^*}$ is continuously differentiable and $\ell'_{\alpha^*}(0) = (2{\alpha^*}^2 - 1)^2 > 0$, there must be a $\theta' \in (\theta, 0)$ such that $\ell'_{\alpha^*}(\theta')  = 0$.
Therefore $\ell_{\alpha^*}$ also has a spurious stationary point.

Finally, to show that the set of $\alpha^*$ for which $\ell_{\alpha^*}$ has a spurious stationary point is strictly larger than the corresponding set for $L_{\alpha^*}$, we note that the arguments in the proof of Theorem 1 and Lemma 4 in~\citep{Xu2018} establish that there is $\delta > 0$ such that if $\alpha^* = 0.5 + \delta$ then $\ell_{\alpha^*}$ has a single spurious stationary point on $(-\infty, 0)$, and if $\alpha^* > 0.5 + \delta$, then $\ell_{\alpha^*}$ does not have any spurious stationary points. Sine $\ell'_{\alpha^*}(\theta)$ is a continuous function of $\alpha^*$, this implies that $\ell'_{0.5 + \delta}$ is nonnegative for all $\theta < 0$.
Since $L'_{0.5 + \delta}(\theta)>\ell'_{0.5 + \delta}(\theta)$ for all $\theta < 0$, we obtain that $L'_{0.5 + \delta}$ has no spurious stationary points.
\end{proof}

\begin{figure}[ht!]
\hspace{-1.5cm}
\includegraphics[width=1.2\textwidth]{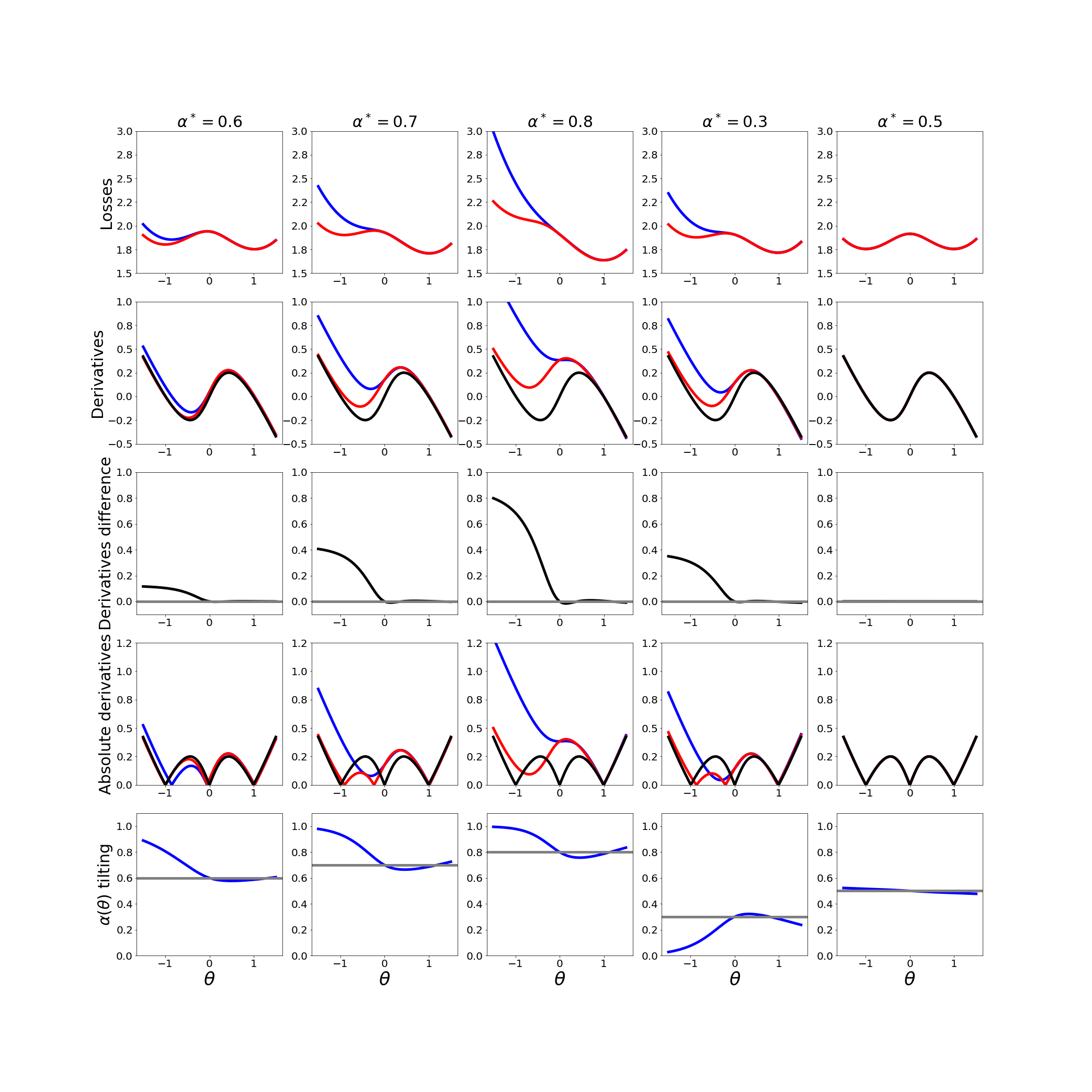}.
\caption{Behavior of $L$, $\ell$ and their derivatives for different values of $\alpha^*$.  Black lines correspond to the reference $\alpha=0.5$ (also in last column). \textbf{First row} entropic OT ($L$, blue) and negative log likelihood $\ell$ (red). \textbf{Second row} derivatives of $L$ and $\ell$. \textbf{Third row} difference between the derivatives $L$ and $\ell$. \textbf{Fourth row} absolute value of the derivatives. \textbf{Fifth row} optimal $\alpha(\theta)$ from the semi-dual entropic OT formulation. }
\label{fig:1}
\end{figure}

\begin{proof}[Proof of Theorem \ref{teo:mixture}, equation \eqref{emgeom_correct}]
Let us fix $\alpha^* > 0.5$.
We first recall the results of \cite[Theorem 1]{DasTzaZam17}, where the bound \eqref{emgeom_correct} is stated for the vanilla EM algorithm in the symmetric mixture ($\alpha^*=0.5$). Let us denote $\theta^{t}_{EM_0}$ for the iterates of vanilla EM on the symmetric mixture, initialized at $\theta^0 > 0$.
We write $\theta^t_{sEM}$ for the iterates of sEM on the \emph{asymmetric} mixture.
We will show that, for all $t \geq 0$, $\theta^t_{sEM}$ and $\theta^{t}_{EM_0}$ satisfy
\begin{align}
\theta^* \leq \theta^t_{sEM} & \leq \theta^t_{EM_0} \quad \text{if $\theta^0 \geq \theta^*$,} \label{monotone>theta}\\
\theta^* \geq \theta^t_{sEM} & \geq \theta^t_{EM_0} \quad \text{if $0 < \theta^0 \leq \theta^*$.} \label{monotone<theta}
\end{align}
This will then prove the claim, since it implies
\begin{equation*}
|\theta_{sEM}^{t} - \theta^*| \leq |\theta_{EM_0}^{t} - \theta^*| \leq \rho^{t} |\theta_{0} - \theta^*|.
\end{equation*}

It remains to prove~\eqref{monotone>theta} and~\eqref{monotone<theta}.
Recall the function $F$ defined in~\eqref{eq:def_f}.
We first show that 
\begin{equation}\label{eq:fcases}
F(\theta,\alpha(\theta)) \begin{cases} \leq \theta^*, &
  0< \theta< \theta^*  \\ = \theta^*,  &
  \theta =\theta^* \\ \geq \theta^*, &
\theta>\theta^*  \end{cases}.\end{equation}
This implies the first inequalities of ~\eqref{monotone>theta} and~\eqref{monotone<theta}.
To show \eqref{eq:fcases}, notice first that clearly $F(\theta^*,\alpha(\theta^*))=F(\theta^*,\alpha^*)=\theta^*$. 
It is therefore enough to establish that $\theta \mapsto F(\theta,\alpha(\theta))$ is non-decreasing.
Let us define $f(\theta)=F(\theta,\alpha(\theta))$. We then have
 \begin{equation} \label{eq:partialf}f'(\theta)=\frac{\partial F}{\partial \theta}(\theta,\alpha(\theta))+\frac{\partial F}{\partial \alpha}(\theta,\alpha(\theta))\alpha'(\theta),\end{equation} 
and
 \begin{eqnarray}\label{eq:partialF1}
 \frac{\partial F}{\partial \theta}(\theta,\alpha)&=&4\alpha(1-\alpha)\int y^2\frac{q^{\theta^*}(y)}{\left(\alpha e^{\theta y }+(1-\alpha)e^{-\theta y}\right)^2}\dd y \geq 0,\\ \label{eq:partialF2}
  \frac{\partial F}{\partial \alpha}(\theta,\alpha)&=&2\int y\frac{q^{\theta^*}(y)}{\left(\alpha e^{\theta y }+(1-\alpha)e^{-\theta y}\right)^2}\dd y.
 \end{eqnarray}
 Additionally, by taking derivatives with respect to $\theta$ in  \eqref{eq:optimalpha2} we have
 \begin{equation}\label{eq:alphaprime}
\alpha'(\theta)=-\frac{\partial G}{\partial \alpha}(\theta,\alpha(\theta))^{-1}\frac{\partial G}{\partial \theta}(\theta,\alpha(\theta)),
\end{equation}
and likewise, 
 \begin{eqnarray}\label{eq:partialG1}
 \frac{\partial G}{\partial \theta}(\theta,\alpha)&=&2\alpha(1-\alpha)\int y\frac{q^{\theta^*}(y)}{\left(\alpha e^{\theta y }+(1-\alpha)e^{-\theta y}\right)^2}\dd y ,\\ \label{eq:partialG2}
  \frac{\partial G}{\partial \alpha}(\theta,\alpha)&=&\int \frac{q^{\theta^*}(y)}{\left(\alpha e^{\theta y }+(1-\alpha)e^{-\theta y}\right)^2}\dd y> 0.
 \end{eqnarray}
The conclusion follows by replacing \eqref{eq:partialF1},\eqref{eq:partialF2},\eqref{eq:alphaprime},\eqref{eq:partialG1} and \eqref{eq:partialG2} in \eqref{eq:partialf} and invoking the Cauchy-Schwarz inequality.

We now show the second inequalities in~\eqref{monotone>theta} and~\eqref{monotone<theta}.
To this end, we will first show
\begin{equation} \label{eq:Falphacases} F(\theta,\alpha(\theta))\begin{cases}\geq F(\theta,0.5) & 0\leq \theta\leq \theta^*, \\ \leq F(\theta,0.5) & \theta\geq \theta^*.\end{cases}\end{equation}

Let $\phi$ denote the density of a standard Gaussian random variable.
We can write
\begin{align*}\frac{F(\theta,\alpha)-F(\theta,0.5)}{2\alpha-1} &= \int y \cdot \frac{\left(\alpha^*e^{\theta^* y}+(1-\alpha^*)e^{-\theta^* y}\right)}{\left(e^{\theta y}+e^{-\theta y}\right)\left(\alpha e^{\theta y}+(1-\alpha)e^{-\theta y}\right)}\phi(y)e^{-{\theta^*}^2/2}\dd y.\\
&=: \int y  \cdot \rho_{\theta, \alpha}(y) \dd y\,.
\end{align*}
It is straightforward to verify that for $\alpha, \alpha^* \geq 1/2$, if $\leq \alpha \leq \alpha^*$ and $\theta \leq \theta^*$, then
\begin{equation*}
\rho_{\theta, \alpha}(y) \geq \rho_{\theta, \alpha}(-y) \quad \forall y \geq 0.
\end{equation*}
On the other hand, if $\alpha \geq \alpha^*$ and $\theta \geq \theta^*$, then
\begin{equation*}
\rho_{\theta, \alpha}(y) \leq \rho_{\theta, \alpha}(-y) \quad \forall y \geq 0.
\end{equation*}
In particular, this yields that for $\alpha, \alpha^* \geq 1/2$,
\begin{equation*}
\frac{F(\theta,\alpha)-F(\theta,0.5)}{2\alpha-1} \begin{cases} \geq 0 & \text{if $\alpha \leq \alpha^*$ and $0 \leq \theta \leq \theta^*$} \\
\leq 0 & \text{if $\alpha \geq \alpha^*$ and $\theta \geq \theta^*$.}
\end{cases}
\end{equation*}
To complete the proof of~\eqref{eq:Falphacases}, we used the facts, proved in Lemma \ref{lemma:atheta} that $\alpha(\theta) \geq 1/2$ and that $\alpha(\theta) \leq \alpha^*$ if $0 \leq \theta \leq \theta^*$ and $\alpha(\theta) \geq \theta^*$ if $\theta \geq \theta^*$.

Finally, we show that the iterates $\theta_{EM_0}^t$ satisfy
\begin{equation}\label{eq:em0_iterates}
\theta_{EM_0}^{t+1} = F(\theta_{EM_0}^{t}, 0.5)\,.
\end{equation}
\Citet[Equation (3.1)]{DasTzaZam17} show
\begin{equation*}
\theta_{EM_0}^{t+1} = \E_{Y \sim \cN(\theta^*, 1)}[Y\tanh(\theta_{EM_0}^{t} Y)]\,.
\end{equation*}
Since $y \tanh(\theta_{EM_0}^{t} y)$ is an even function of $y$, this value is unchanged if we integrate with respect to the mixture $\alpha^* \cN(\theta^*, 1) + (1 - \alpha^*) \cN(-\theta^*, 1)$.
We obtain
\begin{equation*}
\theta_{EM_0}^{t+1} = \int y\tanh(\theta_{EM_0}^{t} y) q^{\theta^*}(y) \dd y\,,
\end{equation*}
and by comparing this to~\eqref{eq:def_f}, we immediately see that the right side is $F(\theta_{EM_0}^{t}, 0.5)$.

We can now show the second two inequalities in~\eqref{monotone>theta} and~\eqref{monotone<theta}.
We proceed by induction.
Let's first suppose $\theta^0 \geq \theta^*$.
Then indeed for $t = 0$, we have $\theta^* \leq \theta^t_{sEM} \leq \theta^t_{EM_0}$.
If this relation holds for some $t$, then we have
\begin{align*}
\theta^{t+1}_{sEM} & = F(\theta^{t}_{sEM}, \alpha(\theta^{t}_{sEM})) \\
& \leq F(\theta^{t}_{sEM}, 0.5) \\
& \leq F(\theta^{t}_{EM_0}, 0.5) \\
& = \theta^{t+1}_{EM_0}\,,
\end{align*}
where the first inequality uses~\eqref{eq:Falphacases}, the second uses the fact that $F$ is an increasing function in its first coordinate~\eqref{eq:partialF1}, and the final equality is~\eqref{eq:em0_iterates}.
The proof of the second inequality in~\eqref{monotone<theta} is completely analogous.
\end{proof}
\begin{proof}[Proof of Theorem \eqref{teo:mixture}, equation \eqref{eq:fast}]

  Suppose first $\theta^0> \theta^*$.
  In this case, it suffices to show that $F(\theta, \alpha) \leq F(\theta, \alpha^*)$ for all $\alpha \geq \alpha^*$ for all $\theta \geq \theta^*$.
  Indeed, we can then appeal to precisely the same argument as in the proof of Theorem~\eqref{teo:mixture}, equation~\eqref{emgeom_correct}, to compare the iterates of sEM (which satisfy $\theta_{sEM}^{t+1} = F(\theta_{sEM}^{t}, \alpha(\theta_{sEM}^{t}))$) to those of vEM (which satisfy $\theta_{sEM}^{t+1} = F(\theta_{sEM}^{t}, \alpha^*)$.)
  
  We have that
  \begin{align}
 \nonumber \frac{F(\theta,\alpha)-F(\theta,\alpha^*)}{2(\alpha-\alpha^*)}=&\int y\frac{q^{\theta^*}(y)}{\left(\alpha e^{\theta y} + (1-\alpha) e^{-\theta y}\right)\left(\alpha^* e^{\theta y} + (1-\alpha^*) e^{-\theta y}\right)}\dd y \\  \label{eq:fdif}=& \int_{y\geq 0} f_\theta(y)\dd y,
  \end{align}
where \begin{align*}f_\theta(y):=y\frac{g_\theta(y)\phi(y) e^{-{\theta^*}^2/2 }}{\left(\alpha e^{\theta y} + (1-\alpha) e^{-\theta y}\right)\left(\alpha^* e^{\theta y} + (1-\alpha^*) e^{-\theta y}\right)\left(\alpha e^{-\theta y} + (1-\alpha) e^{\theta y}\right)\left(\alpha^* e^{-\theta y} + (1-\alpha^*) e^{\theta y}\right)},\end{align*}
and \begin{align*}
g_\theta(y):=&\left(\alpha e^{-\theta y} + (1-\alpha) e^{\theta y}\right)\left(\alpha^* e^{-\theta y} + (1-\alpha^*) e^{\theta y}\right)q^{\theta^*}(y) \\ &-\left(\alpha e^{\theta y} + (1-\alpha) e^{-\theta y}\right)\left(\alpha^* e^{\theta y} + (1-\alpha^*) e^{-\theta y}\right)q^{\theta^*}(-y)\\ 
= & L \left(e^{y\left(2\theta-\theta^*\right)}-e^{-y\left(2\theta-\theta^*\right)}\right)+M\left(e^{y\theta^*}-e^{-y\theta^*}\right)+N \left(e^{y\left(2\theta+\theta^*\right)}-e^{-y\left(2\theta+\theta^*\right)}\right)
\end{align*}
with
\begin{eqnarray*}
L &=&(1-\alpha^*)^2(1-\alpha)- \alpha {\alpha^*}^2,\\
M &=& (2\alpha^*-1)(\alpha+\alpha^*-2\alpha\alpha^*), \\
N  &= & \alpha^*(1-\alpha^*)(1-2\alpha).
\end{eqnarray*}
Notice that for $y \geq 0$ and $\theta>\theta^*$ the three above differences of exponentials are positive, and that $e^{y\left(2\theta-\theta^*\right)}-e^{-y\left(2\theta-\theta^*\right)}\geq e^{y\theta^*}-e^{-y\theta^*}$. Moreover, if $1/2\leq\alpha^*,\alpha < 1$, then $N<0$ and $M>0$, and if furthermore $\alpha\geq\alpha^*$, then also $L< 0$.
Therefore,  
\begin{eqnarray}\nonumber g_\theta(y)&<& (L+M)\left(e^{y\theta^*}-e^{-y\theta^*}\right)\\ \label{eq:gleq}&=&(1-2\alpha)(3{\alpha^*}^2-3\alpha^*+1)\left(e^{y\theta^*}-e^{-y\theta^*}\right)\leq0. \end{eqnarray}
This proves that when $\alpha \geq \alpha^* \geq 1/2$, we have $F(\theta, \alpha) \geq F(\theta, \alpha^*)$, as claimed.

Now let's show that there exists a $\theta_{\text{fast}} < \theta^*$ such that if $\theta  \in [\theta_{\text{fast}}, \theta^*]$, then $F(\theta, \alpha) \geq F(\theta, \alpha^*)$ for all $\alpha^* \geq \alpha > 1/2$. (As above, this will suffice to prove the desired claim by applying the argument in the proof of Theorem~\eqref{teo:mixture}, equation~\eqref{emgeom_correct}.)
It suffice to show that for $\theta  \in [\theta_{\text{fast}}, \theta^*]$, we have
\begin{equation*}
g_\theta(y) \leq 0 \quad \forall y \geq 0\,.
\end{equation*}
First, we note that, since $N < 0$, for any $\theta > \theta^*/2$, the term $N \left(e^{y\left(2\theta+\theta^*\right)}-e^{-y\left(2\theta+\theta^*\right)}\right)$ is always eventually dominant, so there exists a $y^*$ such that
\begin{equation*}
g_\theta(y) <0 \quad \forall \theta > \theta^*/2, y > y^*\,.
\end{equation*}
It therefore suffices to focus on the compact interval $[0, y^*].$

To proceed, let us consider what happens when $\theta = \theta^*$.
Carrying out the exact same argument as above, we obtain that as long as $\alpha > 1/2$, we have
\begin{equation*}
g_{\theta^*}(y) \leq (L+M) (e^{y \theta^*} - e^{- y \theta^*}) \quad \forall y \geq 0,
\end{equation*}
where $L+M$ is negative.

Let us examine the derivative $\frac{\partial }{\partial \theta} g_\theta(y)$:
\begin{equation*}
\frac{\partial }{\partial \theta} g_\theta(y) = 2 y L(e^{y\left(2\theta-\theta^*\right)}+e^{-y\left(2\theta-\theta^*\right)}) + 2y N(e^{y\left(2\theta+\theta^*\right)}+e^{-y\left(2\theta+\theta^*\right)})\,.
\end{equation*}

We conclude that if $\theta' < \theta^*$ is such that
\begin{equation*}
(\theta^* - \theta') (4 |L| y e^{y \theta^*}+ 4 |N|y e^{3y \theta^*})  \leq - (L+M) (e^{y \theta^*} - e^{- y \theta^*})\,,
\end{equation*}
then
\begin{equation*}
\left|\frac{\partial }{\partial \theta} g_\theta(y)\right| \cdot (\theta^* - \theta') \leq -g_{\theta^*}(y)\,,
\end{equation*}
which yields
\begin{equation*}
g_{\theta'}(y) = g_{\theta^*}(y) - \int_{\theta'}^{\theta^*} \frac{\partial }{\partial \theta} g_\theta(y) \dd \theta \leq g_{\theta^*}(y) +  \left|\frac{\partial }{\partial \theta} g_\theta(y)\right| \cdot (\theta^* - \theta') \leq 0\,.
\end{equation*}
Hence, if we define
\begin{equation*}
\delta = \inf_{y \in [0, y^*]} \frac{-(L+M)(e^{y \theta^*} - e^{- y \theta^*})}{4 |L| y e^{y \theta^*}+ 4 |N|y e^{3y \theta^*}}\,,
\end{equation*}
then as long as this quantity is positive, we can take $\theta_{\text{fast}} = \theta^* - \delta$.
But positivity follows immediately from the fact that this function is continuous and positive on $(0, y^*]$ and has a positive limit as $y \to \infty$.
\end{proof}

\section{Technical results}

\subsection{Intermediate results for Theorem \ref{teo:mixtureloss} and \ref{teo:mixture}}
\begin{proposition}\label{prop:deri}
For the asymmetric mixture of two Gaussians \eqref{eq:mixtureg} we have that for all $\theta<0$
\begin{equation} \label{eq:ineqLl}L'_{\alpha^*}(\theta)>\ell'_{\alpha^*}(\theta)\end{equation} 
\end{proposition}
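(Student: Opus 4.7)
The plan is to represent both $\ell'_{\alpha^*}$ and $L'_{\alpha^*}$ via the envelope theorem---the same strategy used in the proof of Proposition~\ref{prop:curvature}---and then to read off the sign of their difference from the geometry of the Sinkhorn-tilted weight.

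First, differentiating $\ell_{\alpha^*}(\theta)=-\int \log q^\theta(y)\,q^{\theta^*}(y)\,dy$ directly and rearranging the resulting integrand using the vanilla-EM posterior probabilities yields
\begin{equation*}
\ell'_{\alpha^*}(\theta)=\theta-F(\theta,\alpha^*),
\end{equation*}
where $F$ is the function in~\eqref{eq:def_f}. For the entropic OT loss I would apply the envelope theorem to the semi-dual representation~\eqref{eq:semidual1} (writing $L_{\alpha^*}(\theta)=L_2(\theta,w_\theta)$ as in the proof of Proposition~\ref{prop:curvature}) and identify the resulting Sinkhorn-weighted posteriors through~\eqref{eq:sinkpost}, obtaining the analogous formula
\begin{equation*}
L'_{\alpha^*}(\theta)=\theta-F(\theta,\alpha(\theta)),
\end{equation*}
where $\alpha(\theta)$ is the tilted weight determined by~\eqref{eq:optimalpha2}. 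Subtracting produces the key identity
\begin{equation*}
L'_{\alpha^*}(\theta)-\ell'_{\alpha^*}(\theta)=F(\theta,\alpha^*)-F(\theta,\alpha(\theta)).
\end{equation*}

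Next I would pin down how $\alpha(\theta)$ compares to $\alpha^*$ on $\theta<0$. Since the map $\alpha\mapsto G(\theta,\alpha)$ is strictly increasing by~\eqref{eq:partialG2}, the defining equation $G(\theta,\alpha(\theta))=\alpha^*$ reduces this to evaluating $G(\theta,\alpha^*)$ and comparing it with $\alpha^*$. Using the posterior-marginalization identity $G(\theta^*,\alpha^*)=\alpha^*$ together with the reflection symmetry $q^{\theta^*}(-y)=q^{\theta^*}_{1-\alpha^*}(y)$, one shows that $G(\theta,\alpha^*)\neq\alpha^*$ strictly throughout $\theta<0$, and hence that $\alpha(\theta)\neq\alpha^*$ in a definite direction on the whole negative half-line.

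The final step is to extract the sign of $F(\theta,\alpha^*)-F(\theta,\alpha(\theta))$. I would write this difference as $\int_{\alpha(\theta)}^{\alpha^*}\partial_\alpha F(\theta,s)\,ds$ and then reuse the explicit factorization carried out in the proof of Theorem~\ref{teo:mixture} around~\eqref{eq:fdif}: that computation expresses $F(\theta,\alpha)-F(\theta,\alpha^*)$ as an integral over $y\geq 0$ of a sum of three exponential differences with coefficients $L$, $M$, $N$ whose signs are known in the regime $\alpha,\alpha^*\geq 1/2$. The main obstacle is that, unlike the range $\theta\geq\theta^*$ treated there, for $\theta<0$ the three exponents $2\theta-\theta^*$, $\theta^*$, and $2\theta+\theta^*$ can have mixed signs depending on whether $\theta<-\theta^*/2$ or $-\theta^*/2\leq\theta<0$; the sign argument will therefore likely need a short case split on this dichotomy (or a more clever rearrangement of the three-term integrand) in order to conclude the strict inequality~\eqref{eq:ineqLl} uniformly on $(-\infty,0)$.
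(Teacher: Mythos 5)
Your reduction is structurally the same as the paper's: both routes express $\ell'_{\alpha^*}$ and $L'_{\alpha^*}$ through the function $F$ of \eqref{eq:def_f} evaluated at $\alpha^*$ and at the tilted weight $\alpha(\theta)$ respectively, and both then reduce the claim to comparing $F(\theta,\alpha^*)$ with $F(\theta,\alpha(\theta))$ using the position of $\alpha(\theta)$ relative to $\alpha^*$. But two things prevent your proposal from closing. First, a sign problem: your formulas $\ell'=\theta-F(\theta,\alpha^*)$ and $L'=\theta-F(\theta,\alpha(\theta))$ (correct if $\ell$ and $L$ are the negative log-likelihood and the entropic OT loss as defined in the main text) give $L'-\ell'=F(\theta,\alpha^*)-F(\theta,\alpha(\theta))$, so to conclude \eqref{eq:ineqLl} you would need $F(\theta,\alpha^*)>F(\theta,\alpha(\theta))$. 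But the two facts actually available --- $\alpha(\theta)>\alpha^*$ for $\theta<0$ (Lemma \ref{lemma:atheta}(b)) and $\partial F/\partial\alpha>0$ for $\theta\le0$, $\alpha\ge1/2$ --- give exactly the reverse, $F(\theta,\alpha(\theta))>F(\theta,\alpha^*)$. The paper's proof works with the opposite convention $\ell'_{\alpha^*}(\theta)=F(\theta,\alpha^*)-\theta$ (the derivative of the expected log-likelihood rather than its negative; this is also what makes its later use of $\ell'_{\alpha^*}(0)>0$ in the proof of Theorem \ref{teo:mixtureloss} consistent), under which those same two facts do yield $L'>\ell'$. You must fix a convention and verify that the inequality you end up with is the one the downstream intermediate-value argument needs; as written, your chain of identities terminates in the opposite inequality to the one claimed.

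Second, the decisive step --- determining the sign of $F(\theta,\alpha^*)-F(\theta,\alpha(\theta))$ --- is not actually carried out. You propose to route it through the $L$, $M$, $N$ exponential decomposition of $g_\theta$ from the proof of \eqref{eq:fast}, and you correctly observe that for $\theta<0$ the exponents $2\theta-\theta^*$, $\theta^*$, $2\theta+\theta^*$ have mixed signs so that argument does not transfer; the needed case split is left undone. The paper avoids this entirely: it proves $\frac{\partial F}{\partial\alpha}(\theta,\alpha)>0$ for all $\theta\le0$ and $\alpha\ge1/2$ directly from \eqref{eq:partialF2} by pairing $y$ with $-y$, using $q^{\theta^*}(y)>q^{\theta^*}(-y)$ for $y>0$ together with $\bigl(\alpha e^{\theta y}+(1-\alpha)e^{-\theta y}\bigr)^{-2}\ge\bigl(\alpha e^{-\theta y}+(1-\alpha)e^{\theta y}\bigr)^{-2}$ in that regime; combined with $\alpha(\theta)>\alpha^*$ this finishes the proof in one line with no case analysis. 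Relatedly, your ``definite direction'' for $\alpha(\theta)$ versus $\alpha^*$ is never specified; the direction matters, and it is $\alpha(\theta)>\alpha^*$ on $\theta<0$, which is the content of Lemma \ref{lemma:atheta}(b) via the sign of $\partial G/\partial\theta$ rather than a direct evaluation of $G(\theta,\alpha^*)$.
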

\begin{proof}
Let us write $\ell(\theta, \alpha) = - \E_{Y \sim Q^{\theta^*}} \log q^{\theta, \alpha}(Y)$ for the expected log-likelihood function in the overparametrized model
\begin{equation*}
q^{\theta, \alpha} = \alpha \mathcal N(y; \theta, 1) + (1- \alpha) \mathcal N(y; -\theta, 1)\,.
\end{equation*}
We then have
\begin{equation}\label{eq:partiall} \frac{\partial}{\partial \theta} \ell(\theta, \alpha) = \int y\left[\frac{\alpha e^{-(\theta-y)^2/2}-(1-\alpha) e^{-(\theta+y)^2/2}}{\alpha e^{-(\theta-y)^2/2}+(1-\alpha) e^{-(\theta+y)^2/2}}\right]q^{\theta^*,\alpha^*,}(y)\dd y-\theta = F(\theta, \alpha) - \theta\,,\end{equation}
where $F$ is defined in~\eqref{eq:def_f}.

We have $\ell'_{\alpha^*}(\theta) = \frac{\partial}{\partial \theta} \ell(\theta, \alpha^*) = F(\theta, \alpha^*) - \theta$.
Likewise, if we recall that $L_{\alpha^*}(\theta)=L_2(\theta,w_\theta)$ where  $w_\theta$ satisfies \eqref{eq:semidual1}, then we have
\begin{equation}\label{eq:partialLmix} L'_{\alpha^*}(\theta) =\frac{\partial L_2}{\partial \theta}(\theta,w_\theta)=\frac{\partial }{\partial \theta}\ell(\theta,\alpha(\theta)) = F(\theta, \alpha^*) - \theta\,.,
\end{equation}
where the penultimate equality is obtained by differentiating~\eqref{eq:l2}and using the definition of $\alpha(\theta)$ in \eqref{eq:optimalpha2}.
Then, to establish \eqref{eq:ineqLl} it suffices to show that for each $\theta<0$,
\begin{equation*}
F(\theta, \alpha^*) < F(\theta, \alpha(\theta))\,.
\end{equation*}
Moreover, since Lemma \ref{lemma:atheta} shows that $\alpha(\theta)> \alpha^*>0.5$, it suffices to show that $F(\theta, \alpha)$ is a strictly increasing function of $\alpha$ for $\alpha \geq 0.5$.

Recall~\eqref{eq:partialF2}, which shows
\begin{equation*}
\frac{\partial F}{\partial \alpha}(\theta,\alpha)=2\int y\frac{q^{\theta^*}(y)}{\left(\alpha e^{\theta y }+(1-\alpha)e^{-\theta y}\right)^2}\dd y\,.
\end{equation*}

Since $\alpha^* > 0.5$, we have $q^{\theta^*}(y) > q^{\theta^*}(-y)$ for all $y > 0$.
Furthermore, for $\theta \leq 0$ and $\alpha \geq 0.5$, it holds
\begin{equation*}
\frac{1}{\left(\alpha e^{\theta y }+(1-\alpha)e^{-\theta y}\right)^2} \geq \frac{1}{\left(\alpha e^{-\theta y }+(1-\alpha)e^{\theta y}\right)^2} \quad \forall y > 0\,.
\end{equation*}
Therefore
\begin{align*}
\frac{\partial F}{\partial \alpha}(\theta,\alpha)& =2\int y\frac{q^{\theta^*}(y)}{\left(\alpha e^{\theta y }+(1-\alpha)e^{-\theta y}\right)^2}\dd y \\
& = 2\int_{y > 0} y\frac{q^{\theta^*}(y)}{\left(\alpha e^{\theta y }+(1-\alpha)e^{-\theta y}\right)^2}\dd y + 2\int_{y > 0} (-y)\frac{q^{\theta^*}(-y)}{\left(\alpha e^{-\theta y }+(1-\alpha)e^{\theta y}\right)^2}\dd y > 0\,,
\end{align*}
which proves the claim.
\end{proof}

\begin{lemma}\label{lemma:atheta}
Suppose $\alpha^*>1/2$
\begin{itemize}
\item[(a)]for any $\theta$, $\alpha(\theta)> 1/2.$

\item[(b)] $\alpha(\cdot)$ is decreasing (increasing) whenever $\theta<0$ ($\theta>\theta^*$) and in either case $\alpha(\theta)\geq \alpha^*$. Moreover, $\lim_{\|\theta\|\rightarrow \infty}\alpha(\theta)=1$.
\item[(c)]$\alpha(\theta)\leq \alpha^*$ whenever $0\leq \theta\leq \theta^*$.

\end{itemize}\end{lemma}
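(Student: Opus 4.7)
My plan is to treat the three items separately, leaning throughout on the strict monotonicity $\partial_\alpha G(\theta,\alpha)>0$ established in~\eqref{eq:partialG2}, which reduces every assertion of the form ``$\alpha(\theta)\geq c$'' (resp.~$\leq c$) to ``$G(\theta,c)\leq\alpha^*$'' (resp.~$\geq\alpha^*$). For~(a), I would begin by rewriting
\begin{equation*}
G(\theta,1/2)\;=\;\tfrac{1}{2}+\tfrac{1}{2}\,\E_{Y\sim q^{\theta^*}}\bigl[\tanh(\theta Y)\bigr]
\end{equation*}
and using the mixture structure of $q^{\theta^*}$ together with the oddness of $\tanh$ to collapse the expectation to $(2\alpha^*-1)\,\E_{Z\sim\mathcal N(0,1)}[\tanh(\theta(Z+\theta^*))]$. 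Since $|\tanh|<1$ and $\alpha^*>1/2$, this forces $G(\theta,1/2)<\alpha^*$, which gives $\alpha(\theta)>1/2$.

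For the sign comparisons in~(b) and~(c), I would introduce $h(\theta):=G(\theta,\alpha^*)-\alpha^*$, so that $\alpha(\theta)\geq\alpha^*\iff h(\theta)\leq 0$ and $\alpha(\theta)\leq\alpha^*\iff h(\theta)\geq 0$. Substituting the explicit formula $q^{\theta^*}(y)=\tfrac{e^{-(y^2+{\theta^*}^2)/2}}{\sqrt{2\pi}}D_{\theta^*}(y)$, with $D_t(y):=\alpha^*e^{ty}+(1-\alpha^*)e^{-ty}$, the integrand telescopes into a single $\sinh$ factor, giving
\begin{equation*}
h(\theta)\;=\;\frac{2\alpha^*(1-\alpha^*)\,e^{-{\theta^*}^2/2}}{\sqrt{2\pi}}\int_{\R}\frac{e^{-y^2/2}\,\sinh((\theta-\theta^*)y)}{D_\theta(y)}\,\dd y.
\end{equation*}
Splitting at $y=0$ and pairing the two halves, the integrand on $y>0$ factors as $\sinh((\theta-\theta^*)y)\cdot[D_\theta(y)^{-1}-D_\theta(-y)^{-1}]$; the sign of the bracket is controlled by $D_\theta(y)-D_\theta(-y)=(2\alpha^*-1)(e^{\theta y}-e^{-\theta y})$. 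A short case analysis in the signs of $\theta$ and $\theta-\theta^*$ then yields $h>0$ on $(0,\theta^*)$ and $h<0$ on $(-\infty,0)\cup(\theta^*,\infty)$, which proves~(c) and the $\alpha(\theta)\geq\alpha^*$ assertion in~(b).

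For the monotonicity statements in~(b), I would use $\alpha'(\theta)=-\partial_\theta G/\partial_\alpha G$ from~\eqref{eq:alphaprime}. For $\theta<0$, splitting~\eqref{eq:partialG1} at $y=0$ and combining $q^{\theta^*}(y)>q^{\theta^*}(-y)$ with $D(y)<D(-y)$ (where $D=D_\theta(\cdot,\alpha(\theta))$, using $\alpha(\theta)>1/2$) shows $\partial_\theta G>0$, whence $\alpha'<0$. For $\theta>\theta^*$ the analogous splitting has competing signs; I would instead reduce to the inequality $r_L^2>r_R$ for the ratios $r_L(y):=D_\theta(y,\alpha(\theta))/D_\theta(-y,\alpha(\theta))$ and $r_R(y):=D_{\theta^*}(y,\alpha^*)/D_{\theta^*}(-y,\alpha^*)$. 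This follows from $r_L\geq r_R\geq 1$ once one checks that the ratio $(\alpha e^{ty}+(1-\alpha)e^{-ty})/(\alpha e^{-ty}+(1-\alpha)e^{ty})$ is increasing in both $t$ and $\alpha$ for $y>0$ (a one-line computation after the substitution $u=e^{2ty}$); the bound $\alpha(\theta)\geq\alpha^*$ for $\theta>\theta^*$, proved in the previous paragraph, is essential. The limits $\alpha(\theta)\to 1$ as $|\theta|\to\infty$ follow from dominated convergence in $G(\theta,\alpha(\theta))=\alpha^*$: for fixed $\alpha\in(0,1)$, $G(\theta,\alpha)\to P_{Y\sim q^{\theta^*}}(Y>0)$ as $\theta\to+\infty$ (and the analogous limit as $\theta\to-\infty$), both strictly less than $\alpha^*$, so $\alpha(\theta)$ is forced to approach $1$.

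The main obstacle is spotting the telescoping that produces the $\sinh((\theta-\theta^*)y)/D_\theta(y)$ representation of $h$; without it, the sign of $h(\theta)$ for $\theta\notin\{0,\theta^*\}$ is genuinely delicate because the naive split $\rho_\theta(y)-\rho_{\theta^*}(y)$ has opposing signs on the two half-lines. The second subtle point is the monotonicity of $\alpha(\cdot)$ on $(\theta^*,\infty)$, where the direct splitting of $\partial_\theta G$ leaves competing signs and the resolution hinges on the quadratic-versus-linear comparison $r_L^2\geq r_R$ combined with the bound $\alpha(\theta)\geq\alpha^*$ established earlier in the same argument.
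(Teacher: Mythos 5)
Your proof is correct, and it is organized rather differently from the paper's. I checked the key identity behind your representation of $h(\theta):=G(\theta,\alpha^*)-\alpha^*$: writing $D_t(y)=\alpha^*e^{ty}+(1-\alpha^*)e^{-ty}$, one has $\sinh(\theta y)D_{\theta^*}(y)=\sinh((\theta-\theta^*)y)+\sinh(\theta^*y)D_\theta(y)$, and the second term integrates to zero against $e^{-y^2/2}$ by oddness, so your $\sinh((\theta-\theta^*)y)/D_\theta(y)$ formula is valid; after pairing $y$ with $-y$ the sign of $h$ is $-\operatorname{sign}(\theta)\operatorname{sign}(\theta-\theta^*)$, which gives (c) and the ``$\alpha(\theta)\geq\alpha^*$'' claims of (b) in one stroke. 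This is essentially the same algebraic identity the paper uses, but the paper deploys it only to show that $\alpha(\theta)=\alpha^*$ forces $\theta\in\{0,\theta^*\}$, and then obtains the inequalities indirectly by combining $\alpha'(0)<0$, the boundary values $\alpha(0)=\alpha(\theta^*)=\alpha^*$, and monotonicity; your direct sign determination of $h$ is cleaner and decouples the inequality claims from the monotonicity claims. The second genuine difference is the monotonicity of $\alpha(\cdot)$ on $(\theta^*,\infty)$: the paper proves it by a contradiction argument at the infimum of $\{\theta>\theta^*:\alpha'(\theta)<0\}$, invoking its Lemma on the sign of $\partial_\theta G$ (whose proof in turn imports the $g_\theta(y)<0$ computation from the proof of Theorem 4), whereas you reduce $\partial_\theta G\leq 0$ to the pointwise ratio inequality $r_L(y)^2\geq r_R(y)$ and verify that $(\alpha e^{ty}+(1-\alpha)e^{-ty})/(\alpha e^{-ty}+(1-\alpha)e^{ty})$ is increasing in $t$ (for $\alpha\geq 1/2$) and in $\alpha$ (for $ty>0$) — I checked both derivatives and the chain $r_L\geq r_R\geq 1$ goes through because you have already established $\alpha(\theta)\geq\alpha^*$ independently. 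This makes your argument self-contained where the paper's leans on results proved elsewhere, at the cost of the extra ratio-monotonicity computation. Parts (a) and the limit $\alpha(\theta)\to 1$ are essentially identical to the paper's (same change of measure for (a), same dominated-convergence argument for the limit, with the correct verification that $P_{Y\sim q^{\theta^*}}(Y>0)$ and $P_{Y\sim q^{\theta^*}}(Y<0)$ are both strictly below $\alpha^*$).
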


\begin{proof}
We begin by recalling the function $G$, defined in~\eqref{eq:optimalpha3}.
By~\eqref{eq:partialG2}, this function is a strictly increasing function of $\alpha$; therefore, $\alpha(\theta)$ is the unique number in $[0, 1]$ satisfying
\begin{equation*}
G(\theta, \alpha(\theta)) = \alpha^*
\end{equation*}
and $\alpha(\theta) > p$ if and only if $G(\theta, p) < \alpha^*$.
Let us first prove (a).
It suffices to show that $G(\theta, 1/2) < \alpha^*$.
Write $\phi_{\theta^*}$ for the density of $\cN(\theta^*, 1)$.
We then have
\begin{equation*}
G(\theta, 1/2) = \int \frac{e^{\theta y}}{e^{\theta y} + e^{-\theta y}} q^{\theta^*}(y) \dd y= \int \frac{\alpha^* e^{\theta y} + (1-\alpha^*) e^{-\theta y}}{e^{\theta y} + e^{-\theta y}} \phi_{\theta^*}(y) \dd y\,.
\end{equation*}
But if $\alpha^* > 1/2$, then $\frac{\alpha^* e^{\theta y} + (1-\alpha^*) e^{-\theta y}}{e^{\theta y} + e^{-\theta y}} < \alpha^*$ for all $\theta$ and $y$.
Since $\phi_{\theta^*}(y)$ is a probability density, we obtain that $G(\theta, 1/2) < \alpha^*$, as desired.

It is straightforward to see that $\alpha(0) = \alpha(\theta^*) = \alpha^*$.
To show monotonicity,  we rely on the formula \eqref{eq:alphaprime} for $\alpha'(\theta)$. 
If $\theta<0$ the conclusion is a direct consequence of \eqref{eq:alphaprime} and Lemma \ref{lemma:h}(b). 
If $\theta > \theta^*$, the conclusion follows similarly from Lemma \ref{lemma:h}(c) but the argument is more delicate, as applying this lemma requires that $\alpha(\theta)\geq \alpha^*$. 
Suppose that there exists a $\theta > \theta^*$ for which $\alpha'(\theta) < 0$.
Let us denote by $\theta_0$ the infimum over all such $\theta$.
By \eqref{eq:alphaprime}, $\frac{\partial{G}}{\partial \theta}(\theta_0,\alpha(\theta_0))$ must be therefore nonnegative, which by Lemma~\ref{lemma:h}(c) implies that $\alpha(\theta_0) < \alpha^* = \alpha(\theta^*)$. But since $\alpha'(\theta) \geq 0$ for all $\theta \in [\theta^*, \theta_0)$, this is a contradiction.
Therefore $\alpha'(\theta) \geq 0$ for all $\theta \geq \theta^*$, as claimed.

Finally, the limit statement follows from the dominated convergence theorem. Since $\alpha^* = G(\theta, \alpha(\theta))$ for all $\theta \in \R$, it holds
\begin{align*}
\alpha^* & = \lim_{\|\theta\| \to \infty} G(\theta, \alpha(\theta)) \\
& = \int \lim_{\|\theta\| \to \infty} \frac{\alpha(\theta) e^{\theta y}}{\alpha(\theta) e^{\theta y} + (1 - \alpha(\theta))e^{-\theta y}} q^{\theta^*}(y) \dd y\,,
\end{align*}
where the second inequality is by the dominated convergence theorem.
Since $\alpha(\cdot)$ is monotonic outside the interval $[0, \theta^*]$, as $\alpha(\theta)$ has a limit as $\theta \to + \infty$ or $\theta \to - \infty$.
Let us first consider $\theta \to \infty$ (the negative case is exactly analogous).
If this limit is different from $1$, then
\begin{equation*}
\lim_{\theta \to \infty} \frac{\alpha(\theta) e^{\theta y}}{\alpha(\theta) e^{\theta y}+ (1 - \alpha(\theta))e^{-\theta y}} = \begin{cases} 1 & y > 0 \\ 0 & y < 0 \end{cases}\,.
\end{equation*}
But this is a contradiction, since $\alpha^* \neq \int_{y \geq 0} q^{\theta^*}(y) \dd y$ if $\alpha^* > 1/2$.
This proves the claim.

Let's now prove (c). Notice it suffices to show that (i) $\alpha'(0)<0$ and (ii) the only solutions to the equation $\alpha(\theta)=\alpha^*$ are $\theta=0$ and $\theta=\theta^*$. 
The first claim is a simple consequence of \eqref{eq:alphaprime} and Lemma~\ref{lemma:h}(b).

The second claim is a bit more involved. Suppose $\alpha(\theta)=\alpha^*$. By simple algebra (as in the proof of theorem \ref{teo:mixture}) it can be show then the following relation holds
$$\int_{y\geq 0} \frac{2\alpha^*(1-\alpha^*)(2\alpha^*-1)e^{-{\theta^*}^2}\left(e^{2\theta y}-1\right)\left(e^{2\theta^*y} -e^{2\theta y}\right)}{e^{(\theta^*+2\theta)y}\left(\alpha^* e^{\theta y}+(1-\alpha^*)e^{-\theta y}\right)\left((1-\alpha^*) e^{\theta y}+\alpha^* e^{-\theta y}\right)}\phi(y)\dd y=0.$$ 
The integral above can only be zero if $\theta=0$ or $\theta=\theta^*$, otherwise the integrand is either positive or negative for each value of $y\geq0$. This concludes the proof.
\end{proof}

\begin{lemma}\label{lemma:h} Suppose $\alpha^*> 0.5$.
Let
$$G_\theta(\theta,\alpha):=\frac{1}{2\alpha(1-\alpha)}\frac{\partial{G}}{\partial \theta}(\theta,\alpha)=\int \frac{y}{\left(\alpha e^{\theta y}+(1-\alpha)e^{-\theta y}\right)^2}q^{\theta^*}(y)\dd y.$$
Then,
\begin{itemize}
\item[(a)] For each $\theta\geq 0$, $G_\theta$ is a decreasing as function of $\alpha$. Conversely, for each $\theta\leq 0$, $G_\theta$ is an increasing function of $\alpha$.
\item[(b)]$G_\theta(\theta,\alpha)\geq 0$ if $\theta\leq 0$ and $\alpha > 1/2$.
\item[(c)]$G_\theta(\theta,\alpha)\leq0$ if $\theta\geq \theta^*$ and $\alpha\geq \alpha^*$.
\end{itemize}
\end{lemma}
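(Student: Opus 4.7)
The plan is to treat all three parts via a single symmetrization of the integral. After substituting $y\to -y$ on the negative half-line, $G_\theta(\theta,\alpha)$ becomes
\begin{equation*}
G_\theta(\theta,\alpha) = \int_0^\infty y\left[\frac{q^{\theta^*}(y)}{A^2}-\frac{q^{\theta^*}(-y)}{B^2}\right]\dd y,
\end{equation*}
where $A := \alpha e^{\theta y}+(1-\alpha)e^{-\theta y}$ and $B := \alpha e^{-\theta y}+(1-\alpha)e^{\theta y}$. Two elementary identities then do most of the work: $A-B=(2\alpha-1)(e^{\theta y}-e^{-\theta y})$, and $q^{\theta^*}(y)-q^{\theta^*}(-y)\propto(2\alpha^*-1)(e^{\theta^* y}-e^{-\theta^* y})$, with a common positive proportionality factor.

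Part (a) I would handle by differentiating under the integral sign: the derivative in $\alpha$ is proportional to $-\int y\sinh(\theta y)\,A^{-3} q^{\theta^*}(y)\,\dd y$, and since $y\sinh(\theta y)$ is nonnegative on all of $\R$ when $\theta\geq 0$ and nonpositive when $\theta\leq 0$, the stated monotonicity is immediate. For part (b), under $\theta\leq 0$ and $\alpha>1/2$ the first identity gives $A\leq B$ on $y\geq 0$ (so $1/A^2\geq 1/B^2$), and the second identity combined with $\alpha^*>1/2$, $\theta^*>0$ gives $q^{\theta^*}(y)\geq q^{\theta^*}(-y)\geq 0$ on $y\geq 0$; both effects point the bracket upward, so $G_\theta\geq 0$.

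The main obstacle is (c), where under $\theta\geq\theta^*$ and $\alpha\geq\alpha^*$ the two factors compete: $A\geq B$ pulls the bracket down, but $q^{\theta^*}(y)\geq q^{\theta^*}(-y)$ pulls it up. I plan to resolve this by introducing the hyperbolic substitutions $u:=(2\alpha^*-1)\tanh(\theta^* y)$ and $v:=(2\alpha-1)\tanh(\theta y)$, which yield the clean factorizations $A=\cosh(\theta y)(1+v)$, $B=\cosh(\theta y)(1-v)$, and $q^{\theta^*}(\pm y)\propto\cosh(\theta^* y)(1\pm u)$. Clearing the common denominator in the bracket then reduces the sign of the integrand to that of $(1+u)(1-v)^2-(1-u)(1+v)^2$, and a short algebraic expansion shows this equals $-2\bigl(2v-u(1+v^2)\bigr)$. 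So it suffices to prove $2v\geq u(1+v^2)$. Under the hypotheses $\alpha\geq\alpha^*\geq 1/2$ and $\theta\geq\theta^*\geq 0$ we have $v\geq u\geq 0$ and $v<1$ on $y\geq 0$; together with the elementary bound $2v/(1+v^2)\geq v$ on $[0,1]$, this gives $u(1+v^2)\leq v(1+v^2)\leq 2v$, which closes the argument.
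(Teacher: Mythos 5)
Your proof is correct. Parts (a) and (b) essentially coincide with the paper's arguments: (a) is the same differentiation under the integral sign (the paper's integrand $-2y(e^{\theta y}-e^{-\theta y})A^{-3}$ is your $-4y\sinh(\theta y)A^{-3}$), and (b) is the same half-line symmetrization, which the paper obtains by pointing to the computation of $\partial F/\partial\alpha$ already carried out in Proposition~\ref{prop:deri}. Where you genuinely diverge is part (c). The paper proves (c) indirectly: it invokes the inequality $F(\theta,\alpha)\leq F(\theta,\alpha^*)$ for $\theta\geq\theta^*$, $\alpha\geq\alpha^*$, established inside the proof of Theorem~\ref{teo:mixture}, equation~\eqref{eq:fast}, via a decomposition of an auxiliary function $g_\theta(y)$ into three exponential differences with coefficients $L,M,N$; it deduces $G_\theta(\theta,\alpha^*)=\tfrac12\,\partial F/\partial\alpha(\theta,\alpha^*)\leq 0$, and then extends to all $\alpha\geq\alpha^*$ using part (a). Your hyperbolic reparametrization $A=\cosh(\theta y)(1+v)$, $B=\cosh(\theta y)(1-v)$, $q^{\theta^*}(\pm y)\propto\cosh(\theta^* y)(1\pm u)$ instead reduces the sign of the integrand directly to the inequality $2v\geq u(1+v^2)$ for $0\leq u\leq v<1$, which follows from $u\leq v$ and $1+v^2\leq 2$. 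This buys two things: the argument is self-contained (the paper's route for (c) threads through the proof of a theorem that itself relies on this lemma via Lemma~\ref{lemma:atheta}, so the reader must verify the dependency order, whereas yours requires no cross-reference and does not even need part (a)), and the closing inequality is more transparent than the paper's $L,M,N$ sign bookkeeping. The only hypotheses you should state explicitly are $\tanh(\theta y)\geq\tanh(\theta^* y)\geq 0$ and $2\alpha-1\geq 2\alpha^*-1>0$, which give $0\leq u\leq v<1$ on $y\geq 0$; both are immediate from $\theta\geq\theta^*>0$ and $\alpha\geq\alpha^*>1/2$.
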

\begin{proof}
To see (a), notice that 
$$\frac{\partial G_\theta}{\partial \alpha}(\theta,\alpha)=-2\int  \frac{y\left( e^{\theta y}-e^{-\theta y}\right)}{\left(\alpha e^{y\theta}+(1-\alpha)e^{-y\theta}\right)^3}q_{\alpha^*,\theta^*}(y)\dd y.$$
The integrand is either positive (if $\theta>0$) or negative (if $\theta<0$) for each $y$, and the conclusion follows.

To prove (b) and (c), we note that~\eqref{eq:partialF2} implies that
\begin{equation*}
G_\theta(\theta, \alpha) = \frac 12 \frac{\partial F}{\partial \alpha}(\theta, \alpha)\,.
\end{equation*}
But we have already shown in the proof of Proposition~\ref{prop:deri} that $\frac{\partial F}{\partial \alpha}(\theta, \alpha) > 0$ for all $\theta \leq 0$ and $\alpha > 1/2$. This proves (b).

Likewise, the proof of Theorem \eqref{teo:mixture}, equation \eqref{eq:fast}, shows that $F(\theta, \alpha) \leq F(\theta, \alpha^*)$ for all $\theta\geq \theta^*$ and $\alpha\geq \alpha^*$. This proves that $G_\theta(\theta, \alpha^*) = \frac 12 \frac{\partial F}{\partial \alpha}(\theta, \alpha^*) \leq 0$.
To conclude, we appeal to part (a): since $\theta\geq\theta^*>0$, $G_\theta$ is decreasing as a function of $\alpha$, and hence, $G_\theta(\theta,\alpha)\leq G_\theta(\theta,\alpha^*)\leq0$.
\end{proof}

\begin{figure}[H]
\includegraphics[width=1.0\textwidth]{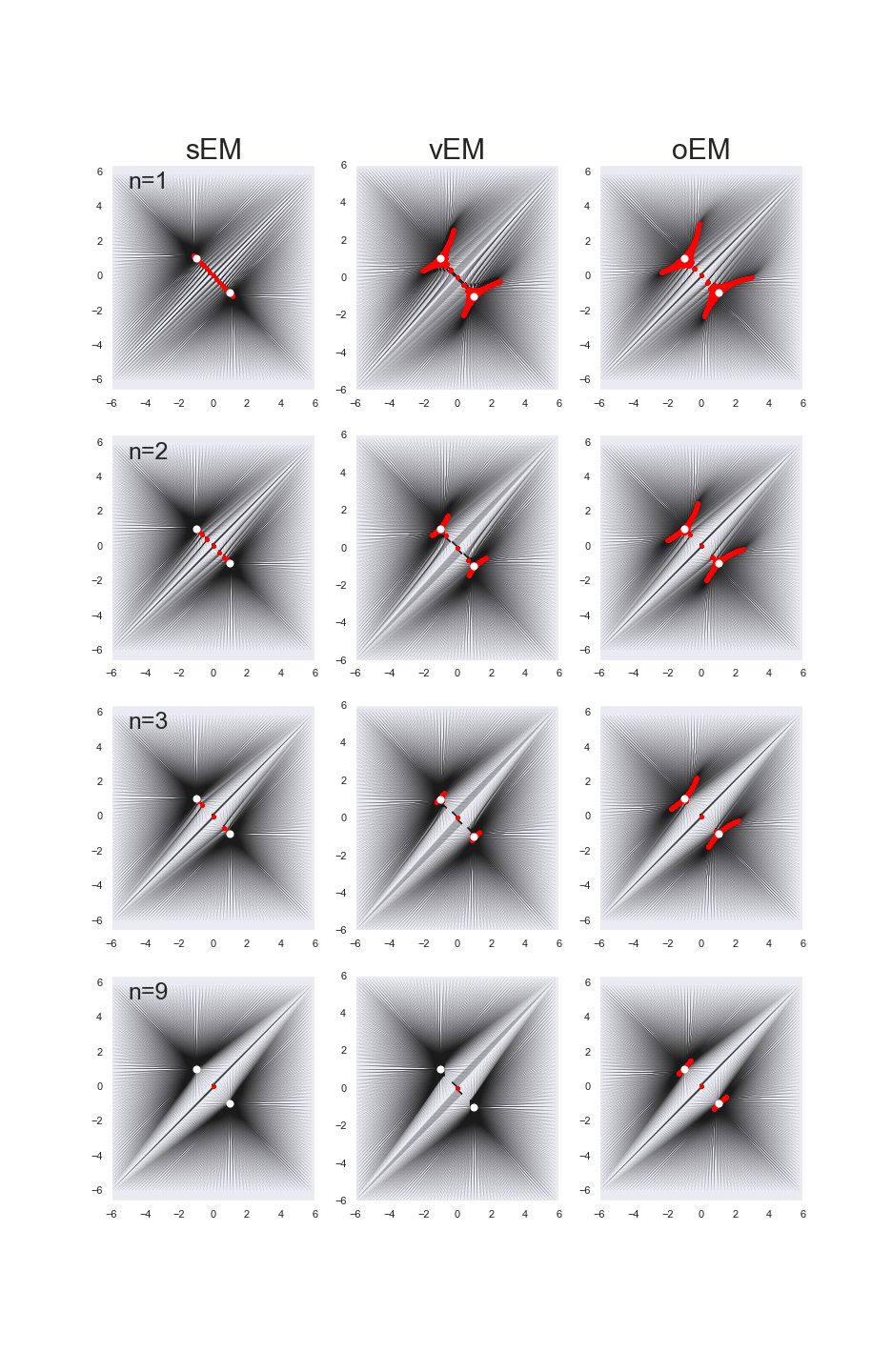}.
\caption{Evolution of iterates for the equal mixture of two Gaussians example, for every starting value in the grid. The flow from the initial values to current ones is indicated by the greyscale flow, and with red dots. While dots represent true global optima. Each row corresponds to a different iteration number $n=1,2,3,9$.}
\label{fig:example2equal}
\end{figure}
\section{Experiment details and supplemental experiment discussion}

In all experiments of section \ref{sec:simulation} we performed $2000$ iterations of each algorithm from each initial value. Visual exploration of iterates revealed convergence in all simulations after far fewer than $2000$ iterations.

For Sinkhorn EM, at each call of Sinkhorn algorithm we performed a number of $200$ row and column normalization steps \citep{Peyre2019}, starting from zero potentials. Notice it would be possible to use warm starts in the sEM outer loop calculation, re-using the potentials from previous iteration. We leave this for future work.

In all cases we consider as error metric the estimation error, defined as the squared 2-Wasserstein distance $W^2_2(P^*,P^{f})$ between the true mixture $P^*=\frac{1}{}\sum_{k=1}^K\alpha^*_k \delta(\theta^*_k)$ ($K:=|\cX|$)and the one with final values $P^f=\frac{1}{k}\sum_{k=1}^K \alpha^f_k\delta(\theta^f_k)$. We compute such distances with the \texttt{emd2} function \texttt{Python POT} package \citep{Flamary2017pot}. For the overparameterized EM algorithm $\alpha^f_k\neq \alpha^*_k$ which creates numerical stabilities that we avoid by approximating the 2-Wasserstein distance with the Sinkhorn algorithm, using the \texttt{sinkhorn} function on the same package, with regularization parameter \texttt{reg}$=0.1$ 

\subsection{Symmetric mixture of two Gaussians with asymmetric weights}

Here, with $\theta^*=1$ and $\alpha^*$ takes values on  $51$-length grid $[0.5,1.0]$. A number of $n=1000$ data points were sampled. We studied the evolution of each algorithm for starting values $\theta^0$ on a $26$-size grid $[-2,-2]$ (26 elements). All the presented results are averages over a number of 10 sampled datasets. In the two right plots of Fig. \ref{fig:example1} we defined the number of iterations required to converged as the least iteration number such that the approximation error at that iteration is smaller than $1.5$ times the error at the final iteration. For the overparameterized EM algorithm, we always used an initial weight $\alpha^0=0.5$, as in \cite{Xu2018}. An IPython notebook that reproduces the findings of Fig. \ref{fig:example1} is \href{http://github.com/gomena/SinkhornEM_Public}{available online}.

\begin{figure}[H]
\includegraphics[width=1.0\textwidth]{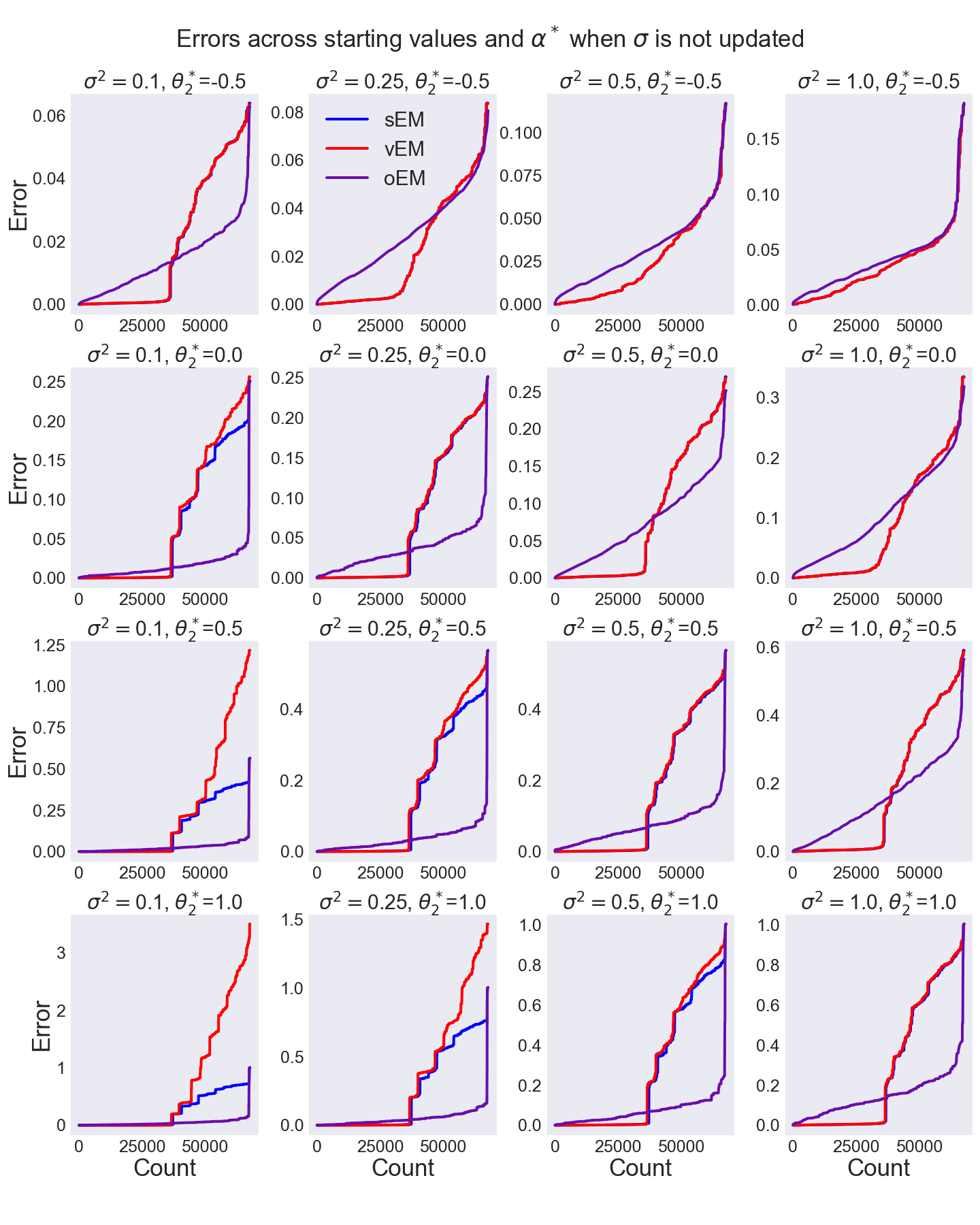}.
\caption{Sorted errors as a function of parameters when $\sigma$ is not updated}
\label{fig:example2anpsigma}
\end{figure}

\begin{figure}[H]
\includegraphics[width=1.0\textwidth]{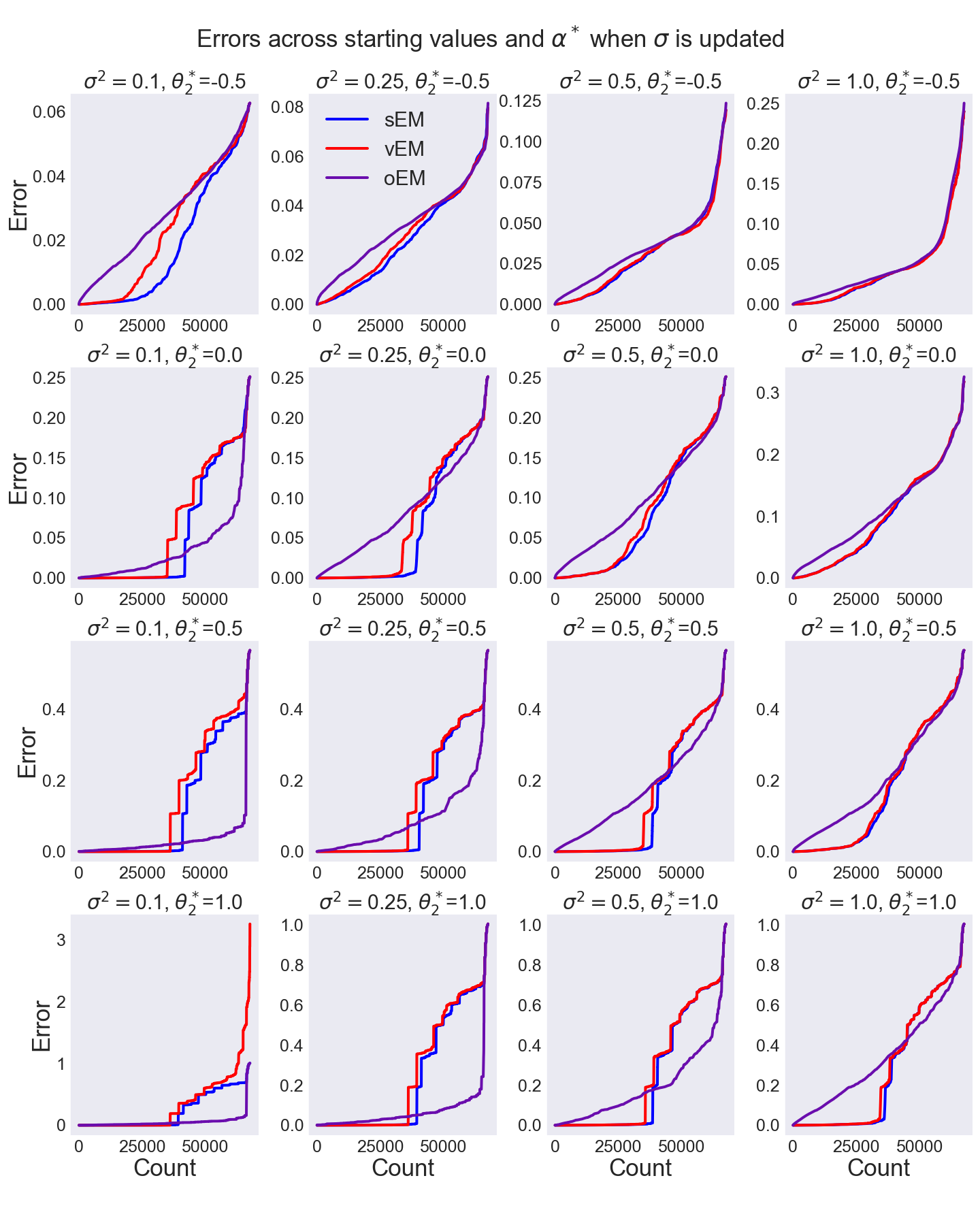}.
\caption{Sorted errors as a function of parameters when $\sigma$ is updated}
\label{fig:example2bsigma}
\end{figure}

\begin{figure}[H]
\includegraphics[width=1.0\textwidth]{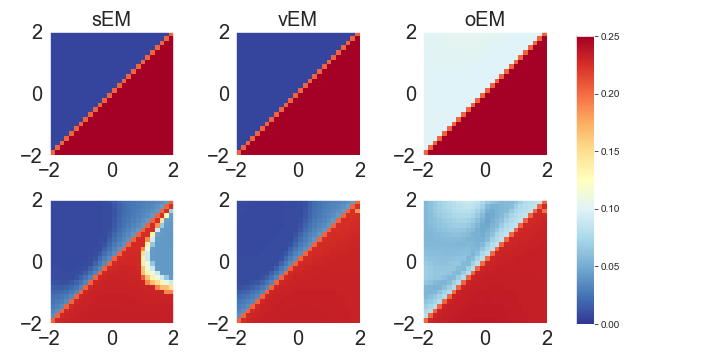}.
\caption{Final errors when sigma is not updated (first row) and updated (second row) , $\alpha^*=0.8$, $(\theta^*_1,\theta^*_2)=(-1,0.5)$ and $\sigma^*=1$. Overparameterized EM has significantly more errors in either case. Moreover, updating $\sigma$ leads to significant improvements of Sinkhorn EM over vanilla EM: there is an entire region of initial values (large $\theta^0_1$) for which convergence to the true value is achieved, unlike vanilla EM and overparameterized EM.}
\label{fig:example2csigma}
\end{figure}

\subsection{Equal mixture of two Gaussians}
We  used a $26\times 26$-size grid of  $[-2,2]\times[-2,2]$ for initial values $(\theta^0_1, \theta^0_2)$. Fig.  \ref{fig:example2equal} supplements the findings shown in Fig. \ref{fig:example2} in the main text.

\subsection{General mixture of two Gaussians}

The previous experiment was replicated with the following comprehensive choices of parameters.
\begin{itemize}
    \item ${\sigma^*}^2 \in \{0.1,0.25,0.5,1.0\}$.
    \item $\theta^*=(\theta^*_1,\theta^*_2)$ with $\theta^*_1=-1$ and $\theta^*_2 \in \{-0.5,0,0.5, 1.0\}$.
    \item $\alpha^*\in\{0.5, 0.55,0.6,0.65,0.7,0.75,0.8,0.85,0.9,0.95\}$.
\end{itemize}
For each of the $4\times 4\times 10$ above parameter configurations we sampled a number of $10$ datasets, each with a sample size of $n=1000$.  For the overparameterized EM algorithm, initial weights were also chosen as $\alpha^0=0.5$. Additionally, we analysed the cases were i) $\sigma^*$ is fixed or ii) treated as a parameter; i.e., updated at each iteration. For the later case, we always used the true value of $\sigma^*$ as the initial value. 

Thus, results in Fig.~\ref{fig:example2} correspond to the case $\sigma^*=1, \theta^*=(\theta^*_1,\theta^*_2)=(-1,1),\alpha^*=0.5$ and where $\sigma^*$ is not updated. More comprehensive results are presented in Figs. \ref{fig:example2anpsigma} ($\sigma^*$ fixed) \ref{fig:example2bsigma} ($\sigma^*$ updated), showing (sorted) errors across all simulations, starting points and true $\alpha^*$ for different values of $\sigma^*$ and $\theta^*$. The main conclusion is that Sinkhorn EM typically leads to smaller error tan vanilla EM, but there is a mixed behavior with overparameterized EM: there, errors may distribute more uniformly across possibilities, and results may be better or worse than Sinkhorn EM and vanilla EM, depending on the situation. 

Interestingly, when $\sigma^*$ is updated the performance of Sinkhorn EM may improve over vanilla EM, while inferences with overaparameterized-EM worsens. This can be seen by comparing Figs.  \ref{fig:example2anpsigma} and \ref{fig:example2bsigma}, and is further depicted in Fig.~\ref{fig:example2csigma}. There, we show final errors for each initial value at a particular configuration of $\sigma^*,\alpha^*,$ and $\theta^*$.

\begin{figure}[H]
\includegraphics[width=1.0\textwidth]{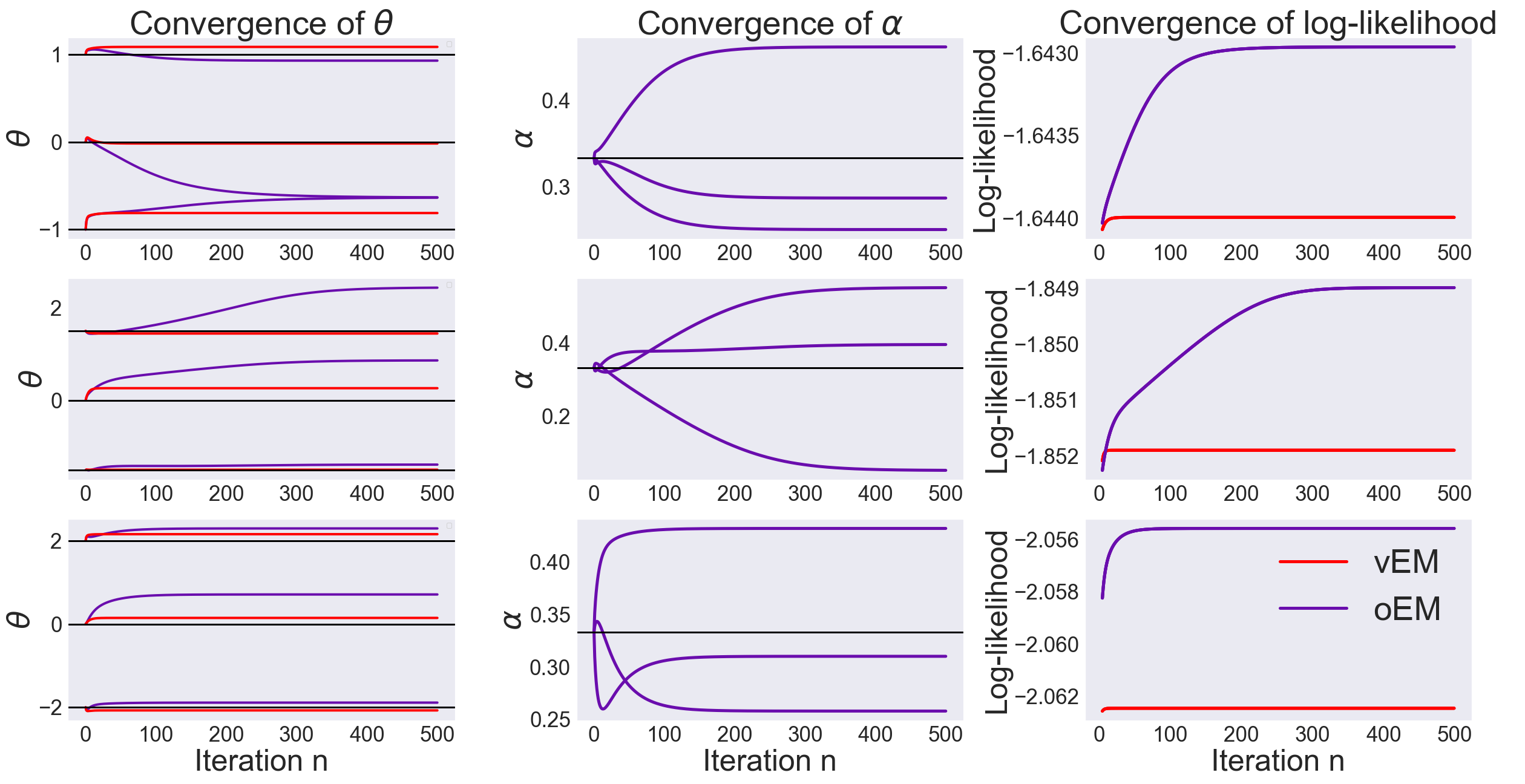}
\caption{Examples of sequences of iterates from different starting $\theta^*=(-\mu,0,\mu)$ (rows) show overparameterized EM may converge to undesirable solutions with nonetheless a slightly better likelihood. On each case, initial iterates are chosen as true values. \textbf{Left}: Evolution of each of the $\theta_i$ parameters, for vanilla EM  and overparameterized EM. Sinkhorn EM is not shown as its behaviour is distinguishable from vanilla EM for large iteration number. Black lines indicate true parameters. \textbf{Center}. Evolution of $\alpha_i$ parameters. \textbf{Right}. Evolution of the log-likelihood.}
\label{fig:subex31}
\end{figure}

 \subsection{Mixture of three Gaussians}
 Results in Fig.~\ref{fig:example3a}-\textbf{C} summarize many experiments, each with $n=500$ samples. Specifically, we considered 20 sampled datasets, and for each of them, initial $\theta^0$ were chosen as the true $\theta^*=(-\mu,0,\mu)$ plus a randomly-sampled Gaussian corruption at each component, with variances $\sigma^2_{noise}\in \{0,0.25,0.5,0.75,1.0\}$. Additionally, for overparameterized EM we considered an initial $\alpha^*$ equal to the true uniform $(1/3,1/3,1/3)$ or randomly sampled (uniformly) from the simplex. In the later case, we considered four samples. Therefore, Fig.~\ref{fig:example3a}-\textbf{C} summarizes 100 experiments for vanilla EM, Sinkhorn EM and overparameterized EM (true), and 400 experiments for overparameterized EM (random).

Fig.~\ref{fig:subex31} illustrates why overaparameterized EM has more error: when separation is small, iterates may often land into stationary points that have don't correspond to the true model, even with a better log-likelihood. For example, the first row of Fig.~\ref{fig:subex31} shows a case of mode collapse.

We attribute this type of failure to the fact the sample size is always finite. Notice the global convergence results of \cite{Xu2018} are stated in the population case. Our results suggest the population analysis may conceal important differences that only reveal themselves in challenging (e.g. small separation), finite sample setups.

Fig.~\ref{fig:subexp32} shows an additional experiment supporting our finite-sample hypothesis. We performed the same analysis as the one shown with Fig.~\ref{fig:example3a}, but for different sample sizes. They show that all errors decrease as $n$ increases, but errors with overaparameterized EM persist even with a $n=2000$ sample size.

\begin{figure}[H]
\hspace{-2.5cm}
\includegraphics[width=1.2\textwidth]{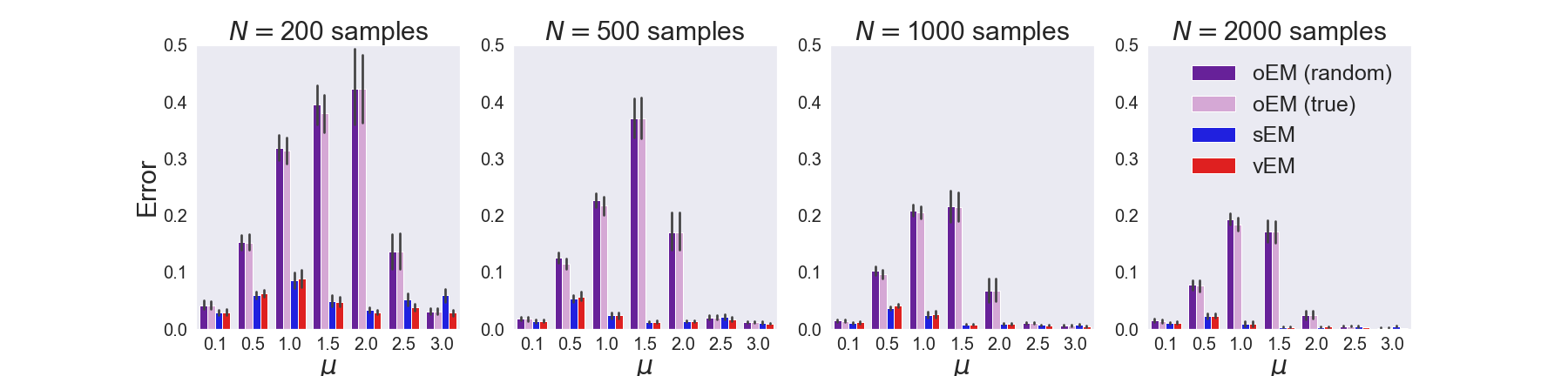}
\caption{Errors for the three mixture example, for different sample sizes.}
\label{fig:subexp32}
\end{figure}

Finally, in Fig.~\ref{fig:subexp33} we show a direct comparison with the results of \cite{Xu2018} (specifically, case 1 of example 3.3), and reconcile their findings with ours. On a three-mixture example \cite{Xu2018} showed overaparameterized-EM outperforms vanilla EM.  We show that while we are able to recover this behavior, the pattern is completely reversed by slightly modifying the example, now with both vanilla EM and Sinkhorn EM outperforming overparameterized EM. 

This example corresponds to a slight modification of our three-mixture example. Specifically, mixture components are now two dimensional (independent standard) Gaussians with $\theta^*_1=(-3,0), \theta^*_2=(0,0), \theta^*_3=(2,0)$ and mixture weights $\alpha^*=(0.5,0.3,0.2)$. We expand this example to study the effect of separation, by weighting each $\theta^*_i$ by a scaling factor of $\rho\in \{1,0.75,0.5,0.25\}$. Fig.~\ref{fig:subexp33} shows that, consistent with \cite{Xu2018}, overparameterized EM has the best performance when $\rho=1$. However, this pattern is completely reversed if separation is decreased, so eventually both Sinkhorn EM and vanilla EM outperform overparameterized EM.
\begin{figure}[H]

\includegraphics[width=1.0\textwidth]{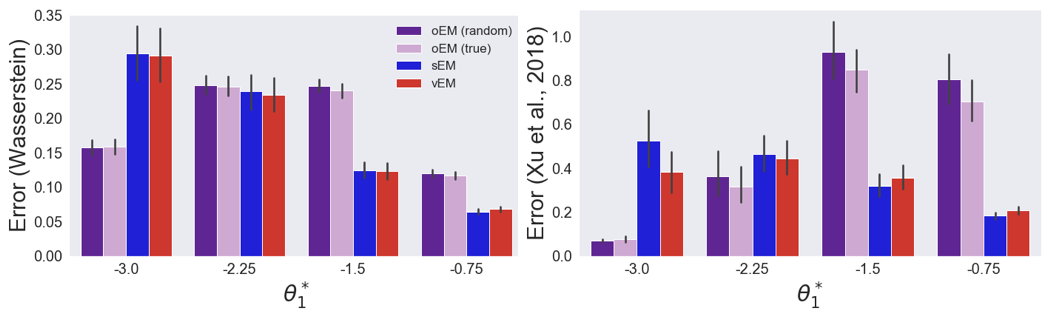}
\caption{We reproduced case 1 of experiment 3.3. in \cite{Xu2018}, but added different separations between components. The original case corresponds to the first columns ($\theta_1^*=(-3,0)$). The difference between left and right plots is the way errors are computed: the error definition in \cite[equation 11]{Xu2018} only takes into account the true weights $\alpha^i$ (but not the inferred $\alpha^f$), hence, it cannot capture label switching errors. Although results with these two error definitions are indeed different, our results show the overall dominance pattern is robust to the way errors are defined.}
\label{fig:subexp33}
\end{figure}

\section{Application to inference of neurons in \textit{C. elegans}: experimental details}
We employ the EM algorithm and its three variants, sEM, oEM, and vEM to undertake an image segmentation task in fluorescence microscopy images of the model organism \textit{C. elegans}. Images were captured via a spinning-disk confocal microscope with resolution (x,y,z)=(0.27,0.27,1.5) microns. Whole-brain calcium activity was measured using the fluorescent sensor GCaMP6s in animals expressing a stereotyped fluorescent color map that permitted class-type identification of every neuron in the worm's brain (NeuroPAL) ~\cite{yemini2019neuropal}.

Given the pixel locations and colors, and an atlas~\cite{yemini2019neuropal} of that encodes a prior on the cluster centers, we aim to infer the memberships of pixels into clusters that capture the shape and boundary of each neuron.

To assess the convergence rates and segmentation quality of the compared methods, we utilize several different optimization configurations:
\begin{enumerate}
    \item Randomly initialized cluster centers (we initialize the $\mu_k$'s by randomly choosing $K$ of the data points, this initialization scheme is similar to the strategy followed in~\cite{Xu2018}.) vs. cluster centers initialized at the atlas priors
    \item Fixed covariance matrices vs. updating the covariance in the EM routines
    \item Initializing the covariance matrix on the ground truth values vs. random initialization
\end{enumerate}

After we select a particular configuration, for example, "random center initialization $\times$ fixed covariance matrix $\times$ ground truth covariance initialization, we optimize using the three compared EM routines using a fixed time budget. This is to enable a fair wall-clock based convergence comparison between the methods. For all of the experiments, the time budget is set at 1 second.

We evaluate the segmentation performance using the following metrics:
\begin{itemize}
    \item \textbf{Training and testing log-likelihood} defined as $\sum_{n \in \mathcal{X}_{\text{epoch}}} \log P(X_n,\boldsymbol{\mu}_{1:K},\boldsymbol{\Sigma}_{1:K})$ for $\text{epoch} \in \{\text{train}, \text{test}\}$ respectively (the pixels are divided into a training set with \%80 of pixels used to fit the parameters and evaluate the convergence and a test set with \%20 of the pixels used to evaluate the goodness of fit properties.
    \item \textbf{Accuracy} is defined as the fraction of cell centers that are within the range of $3 \mu m$ from their true value. 
    \item \textbf{Mean square error (MSE)} is defined as $\frac{1}{K}\sum_{k=1}^K \norm{\boldsymbol{\mu}_k - \boldsymbol{\hat{\mu}}_k}^2$ where $\boldsymbol{\hat{\mu}}_k$ is the true value of cell center $k$.
\end{itemize}
We sample 5000 pixels from the GMM detailed in the following generative process. Starting from $K$ atlas neurons (here \textit{PDA, DVB, PHAL, ALNL, PLML}).
\begin{gather*}
    \boldsymbol{\mu}_k|\boldsymbol{\mu}_k^a,\boldsymbol{\Sigma}_k^a \sim \mathcal{N}(\boldsymbol{\mu}|\boldsymbol{\mu}_k^a,\boldsymbol{\Sigma}_k^a) \\
    \boldsymbol{\Sigma}_k = \sigma_k \mathbb{I}_6 \quad \sigma_k \sim \text{LogNormal}(1,.1) \\
    Z \sim \text{Categorical}(\frac{1}{K},\dots,\frac{1}{K}) \\
    \boldsymbol{Y_i}|Z,\boldsymbol{\mu}_k,\boldsymbol{\Sigma}_k,\boldsymbol{\mu}_k^a,\boldsymbol{\Sigma}_k^a \sim \mathcal{N}(\boldsymbol{Y}|\boldsymbol{\mu}_Z,\boldsymbol{\Sigma}_Z)
\end{gather*}
Notice that the sample space is the 6-dimensional spatio-chromatic space and the generated samples contain a location and an RGB color for each pixel. In figures ~\ref{fig:worm_TTF_config},\ref{fig:worm_TFT_config},\ref{fig:worm_TFF_config},\ref{fig:worm_FFF_config},\ref{fig:worm_FFT_config},\ref{fig:worm_FTF_config}, we display the evaluation metrics for the optimization configurations we have considered. In addition to the evaluation metrics, we also provide a visualization of the segmentation quality for the three methods by taking the average segmentation maps over multiple re-runs of the algorithms. In all cases, sEM obtains a slightly higher training and testing log-likelihoods. However, the segmentation accuracy and mean-squared-error is drastically improved over oEM and vEM, which tend to get stuck in local minima and yield poor segmentations on average.

The convergence behavior of the three algorithms through iterations on an individual example run is \href{http://github.com/gomena/SinkhornEM_Public}{available online} as animated GIF files. 

\begin{figure}[H]
    \centering
    \includegraphics[width=1\linewidth]{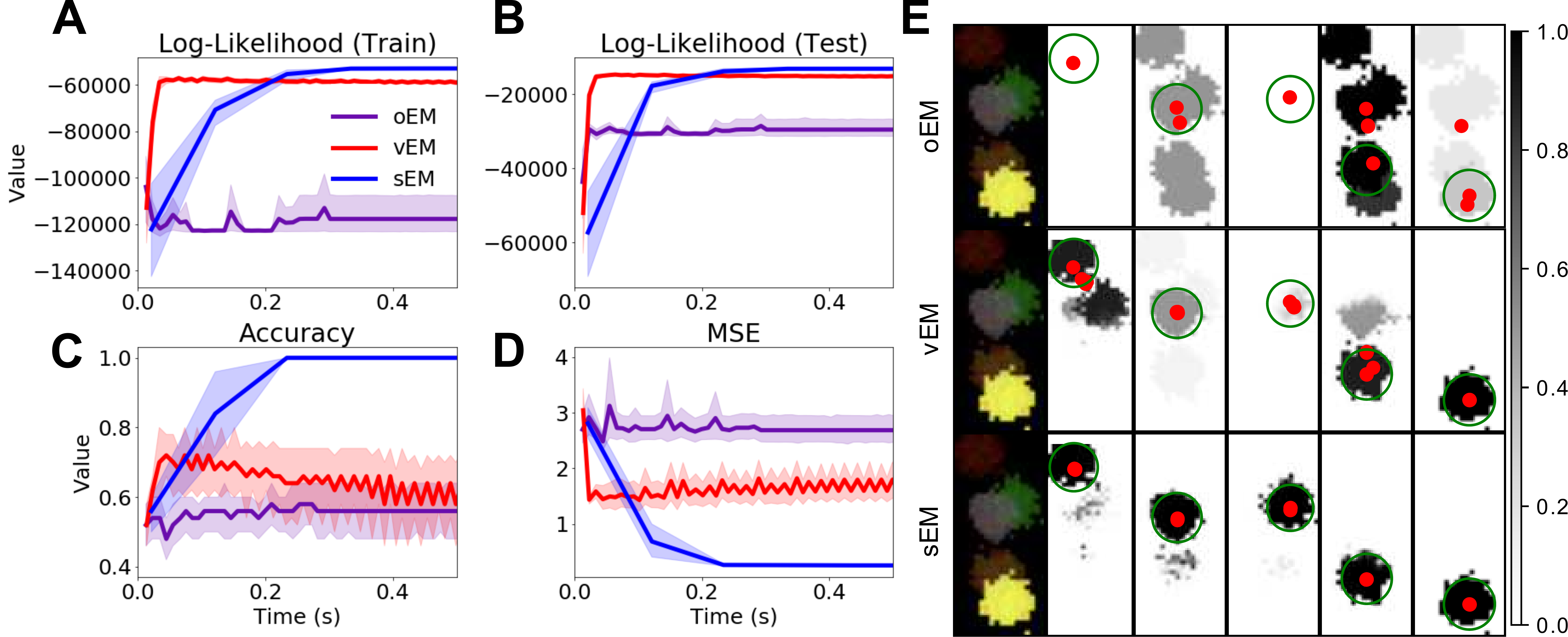}
    \caption{Performance evaluation of sEM, vEM and oEM on \textit{C. elegans} segmentation. Optimization configuration: \textbf{Random center initialization, covariance update enabled, random covariance initialization}. \textbf{A-D}. Training (\textbf{A}) and test (\textbf{B}) log-likelihoods, segmentation accuracy (\textbf{C}) and mean squared error (\textbf{D}) for the three methods. \textbf{E}. The visual segmentation quality. Each row denotes a different method; the first column shows the observed neuronal pixel values (identical for all three methods), and the remaining columns indicate the mean identified segmentation of each neuron (in grayscale heatmaps) and the inferred cell center in red dots, over multiple randomized runs. The ground truth neuron shape is overlaid in green.}
    \label{fig:worm_TTF_config}
\end{figure}
\begin{figure}[H]
    \centering
    \includegraphics[width=1\linewidth]{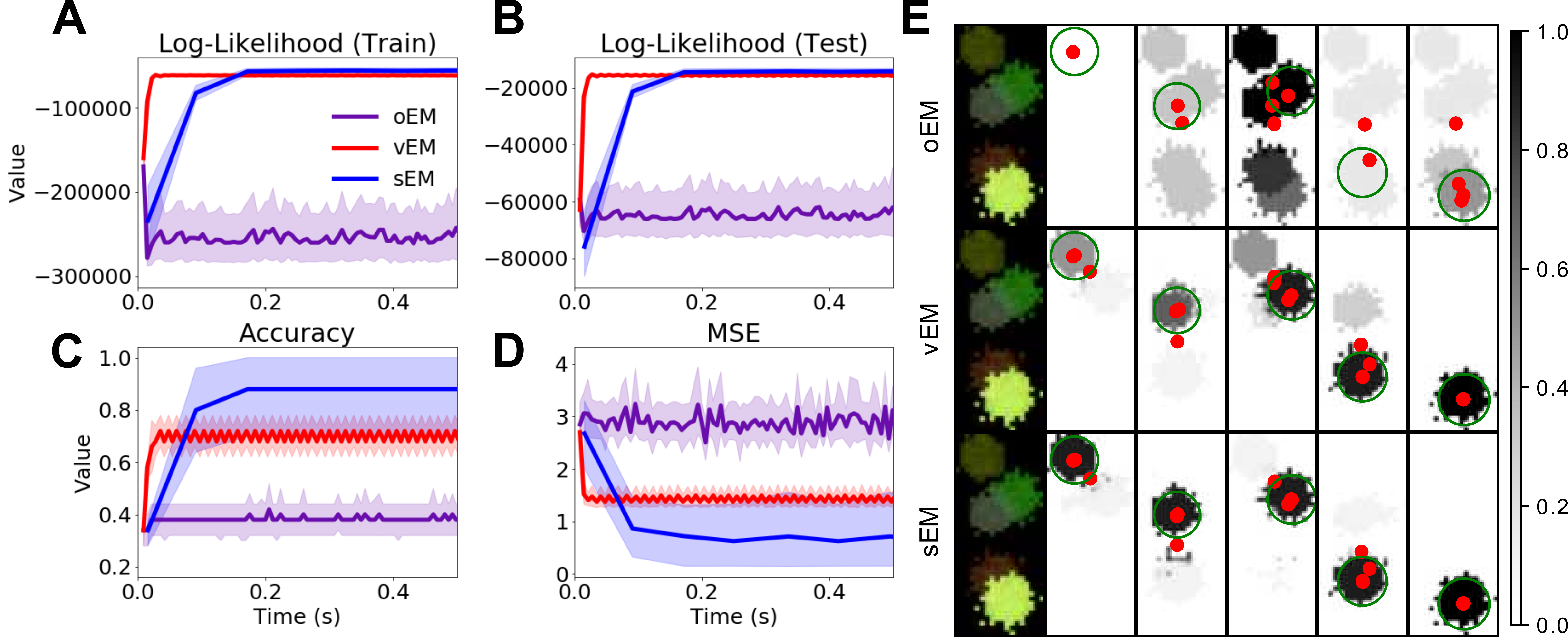}
        \caption{Performance evaluation of sEM, vEM and oEM on \textit{C. elegans} segmentation. Optimization configuration: \textbf{Random center initialization, covariance update disabled, ground truth covariance initalization}. \textbf{A-D}. Training (\textbf{A}) and test (\textbf{B}) log-likelihoods, segmentation accuracy (\textbf{C}) and mean squared error (\textbf{D}) for the three methods. \textbf{E}. The visual segmentation quality. Each row denotes a different method; the first column shows the observed neuronal pixel values (identical for all three methods), and the remaining columns indicate the mean identified segmentation of each neuron (in grayscale heatmaps) and the inferred cell center in red dots, over multiple randomized runs. The ground truth neuron shape is overlaid in green.}
    \label{fig:worm_TFT_config}
\end{figure}
\begin{figure}[H]
    \centering
    \includegraphics[width=1\linewidth]{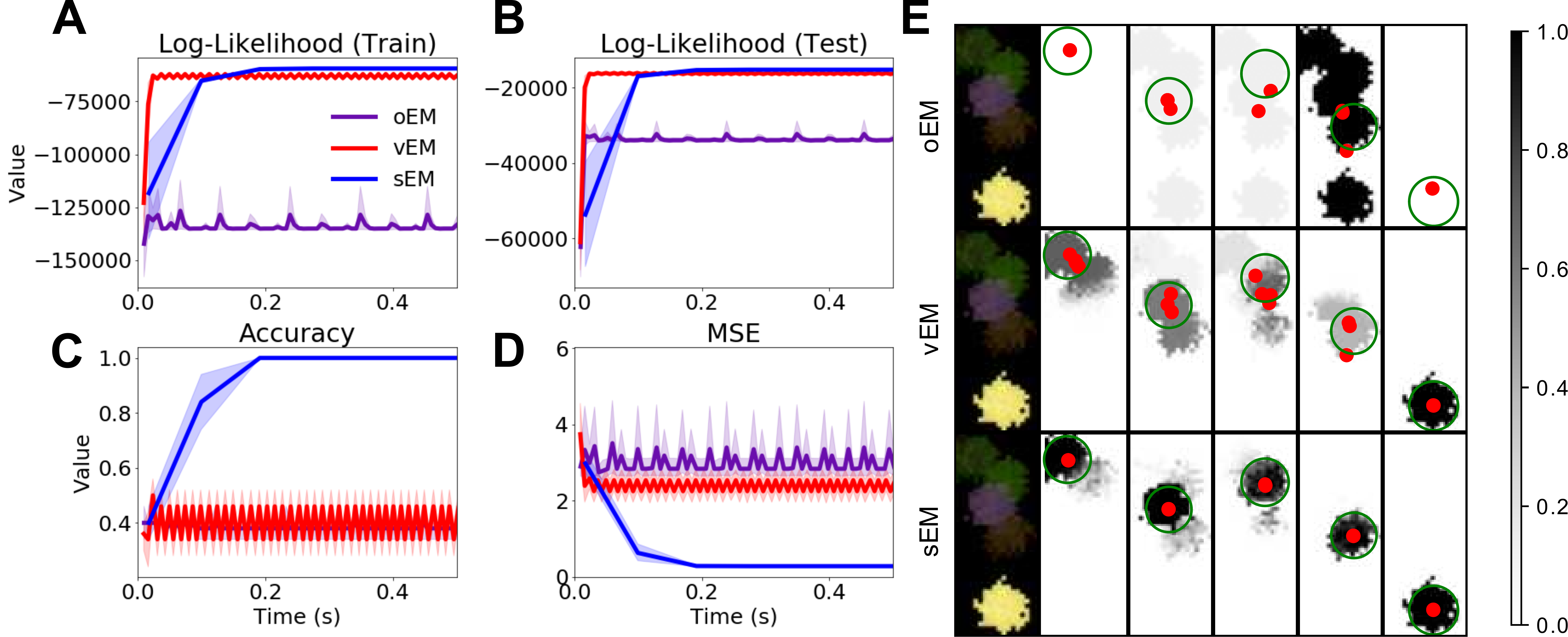}
    \caption{Performance evaluation of sEM, vEM and oEM on \textit{C. elegans} segmentation. Optimization configuration: \textbf{Random center initialization, covariance update disabled, random covariance initalization}. \textbf{A-D}. Training (\textbf{A}) and test (\textbf{B}) log-likelihoods, segmentation accuracy (\textbf{C}) and mean squared error (\textbf{D}) for the three methods. \textbf{E}. The visual segmentation quality. Each row denotes a different method; the first column shows the observed neuronal pixel values (identical for all three methods), and the remaining columns indicate the mean identified segmentation of each neuron (in grayscale heatmaps) and the inferred cell center in red dots, over multiple randomized runs. The ground truth neuron shape is overlaid in green.}
    \label{fig:worm_TFF_config}
\end{figure}

\begin{figure}[H]
    \centering
    \includegraphics[width=1\linewidth]{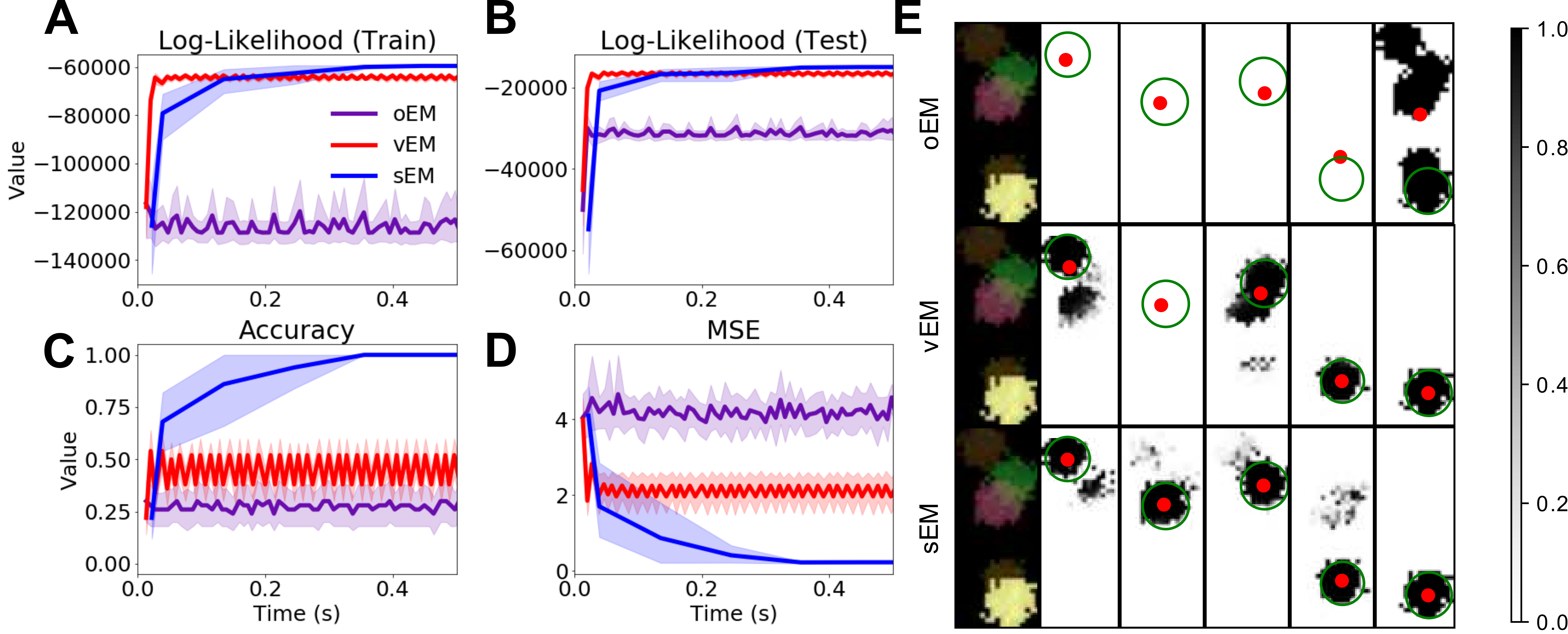}
    \caption{Performance evaluation of sEM, vEM and oEM on \textit{C. elegans} segmentation. Optimization configuration: \textbf{Atlas prior center initialization, covariance update disabled, random covariance initalization}. \textbf{A-D}. Training (\textbf{A}) and test (\textbf{B}) log-likelihoods, segmentation accuracy (\textbf{C}) and mean squared error (\textbf{D}) for the three methods. \textbf{E}. The visual segmentation quality. Each row denotes a different method; the first column shows the observed neuronal pixel values (identical for all three methods), and the remaining columns indicate the mean identified segmentation of each neuron (in grayscale heatmaps) and the inferred cell center in red dots, over multiple randomized runs. The ground truth neuron shape is overlaid in green.}
    \label{fig:worm_FFF_config}
\end{figure}

\begin{figure}[H]
    \centering
    \includegraphics[width=1\linewidth]{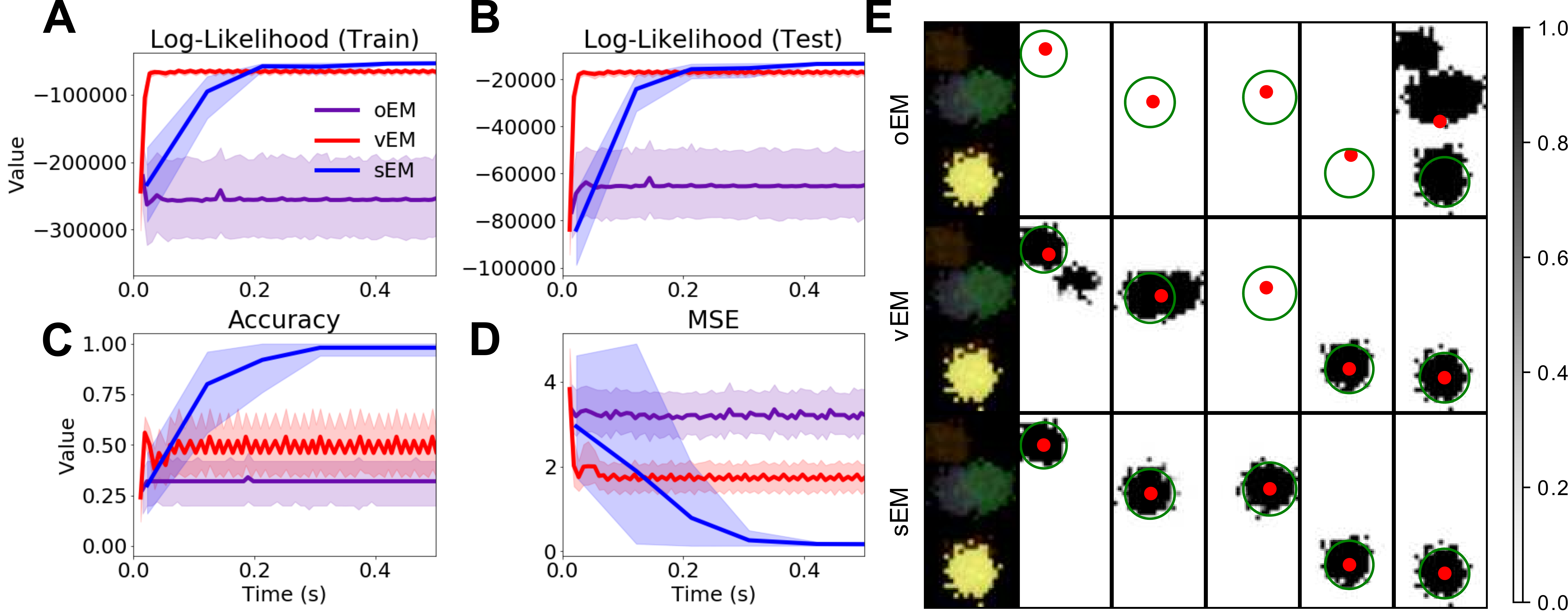}
    \caption{Performance evaluation of sEM, vEM and oEM on \textit{C. elegans} segmentation. Optimization configuration: \textbf{Atlas prior center initialization, covariance update disabled, ground truth covariance initalization}. \textbf{A-D}. Training (\textbf{A}) and test (\textbf{B}) log-likelihoods, segmentation accuracy (\textbf{C}) and mean squared error (\textbf{D}) for the three methods. \textbf{E}. The visual segmentation quality. Each row denotes a different method; the first column shows the observed neuronal pixel values (identical for all three methods), and the remaining columns indicate the mean identified segmentation of each neuron (in grayscale heatmaps) and the inferred cell center in red dots, over multiple randomized runs. The ground truth neuron shape is overlaid in green.}
    \label{fig:worm_FFT_config}
\end{figure}

\begin{figure}[H]
    \centering
    \includegraphics[width=1\linewidth]{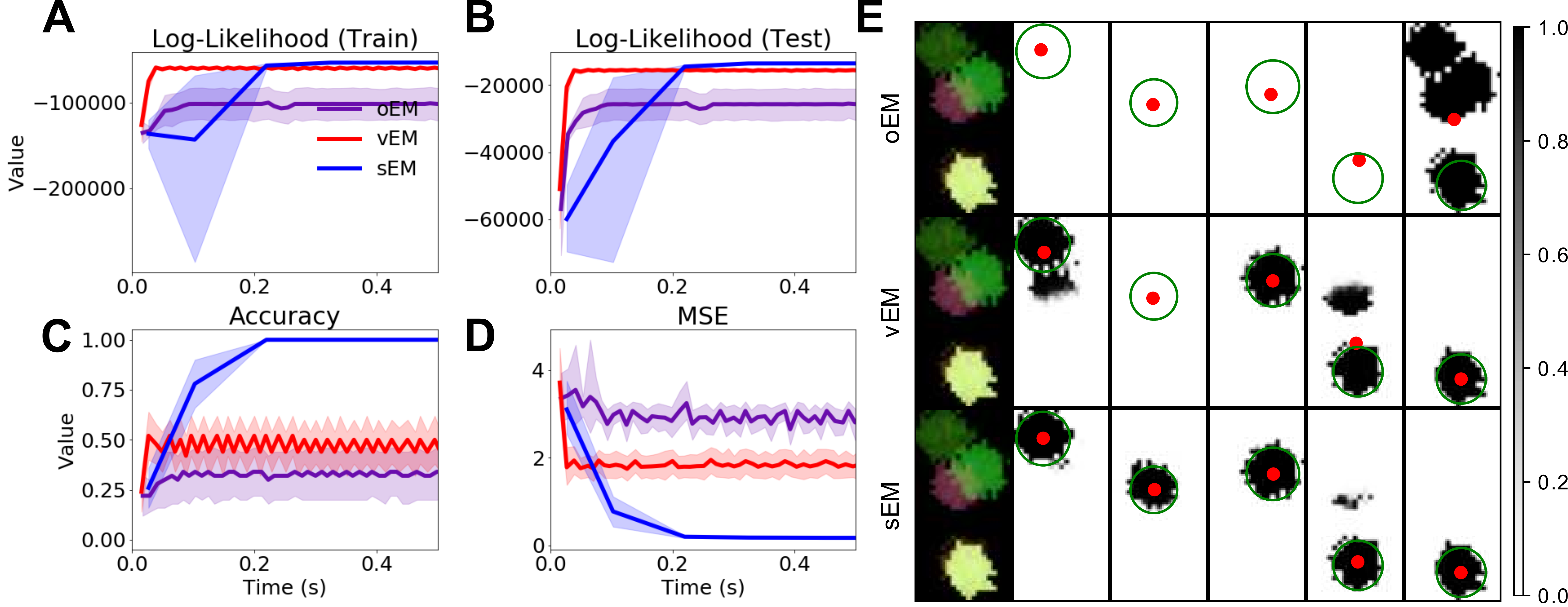}
    \caption{Performance evaluation of sEM, vEM and oEM on \textit{C. elegans} segmentation. Optimization configuration: \textbf{Atlas prior center initialization, covariance update enabled, random covariance initalization}. \textbf{A-D}. Training (\textbf{A}) and test (\textbf{B}) log-likelihoods, segmentation accuracy (\textbf{C}) and mean squared error (\textbf{D}) for the three methods. \textbf{E}. The visual segmentation quality. Each row denotes a different method; the first column shows the observed neuronal pixel values (identical for all three methods), and the remaining columns indicate the mean identified segmentation of each neuron (in grayscale heatmaps) and the inferred cell center in red dots, over multiple randomized runs. The ground truth neuron shape is overlaid in green.}
    \label{fig:worm_FTF_config}
\end{figure}

\bibliographystyle{apalike}
\bibliography{bibliography}

\end{document}